\def\eqref#1{equation~\ref{#1}}
\def\1{\bm{1}}
\DeclareMathAlphabet{\mathsfit}{\encodingdefault}{\sfdefault}{m}{sl}
\SetMathAlphabet{\mathsfit}{bold}{\encodingdefault}{\sfdefault}{bx}{n}
\numberwithin{equation}{section}
\theoremstyle{definition}
\newtheorem{thm}{Theorem}[section]
\newtheorem{lem}{Lemma}[section]
\newtheorem{cor}{Corollary}[section]
\newtheorem{ex}{Example}[section]
\newtheorem{dfn}{Definition}[section]
\theoremstyle{remark}
\newtheorem{rem}{Remark}[section]
\def\x{{\boldsymbol{x}}}
\def\h{{\boldsymbol{h}}}
\def\w{{\boldsymbol{w}}}
\def\W{{\boldsymbol{W}}}
\def\H{{\boldsymbol{H}}}
\def\M{{\boldsymbol{M}}}
\def\Z{{\boldsymbol{Z}}}
\newcommand{\PreserveBackslash}[1]{\let\temp=\\#1\let\\=\temp}
\newcolumntype{C}[1]{>{\PreserveBackslash\centering}p{#1}}
\newcolumntype{R}[1]{>{\PreserveBackslash\raggedleft}p{#1}}
\newcolumntype{L}[1]{>{\PreserveBackslash\raggedright}p{#1}}
\colorlet{yellow}{green!10!orange!90!}
\title{An unconstrained layer-peeled perspective on neural collapse}
\author{Wenlong Ji \\
Peking University\\
\texttt{jiwenlong@pku.edu.cn} \\
\And
Yiping Lu \\ Stanford University\\
\texttt{yplu@stanford.edu} \\
\And
Yiliang Zhang \\ University of Pennsylvania\\
\texttt{zylthu14@sas.upenn.edu} \\
\And
Zhun Deng \\ Harvard University\\
\texttt{zhundeng@g.harvard.edu} \\
\And
Weijie J.~Su \\ University of Pennsylvania\\
\texttt{suw@wharton.upenn.edu} \\
}
\begin{document}
\maketitle
\begin{abstract}
Neural collapse is a highly symmetric geometry of neural networks that emerges during the terminal phase of training, with profound implications on the generalization performance and robustness of the trained networks. To understand how the last-layer features and classifiers exhibit this recently discovered implicit bias, in this paper, we introduce a surrogate model called the unconstrained layer-peeled model (ULPM). We prove that gradient flow on this model converges to critical points of a minimum-norm separation problem exhibiting neural collapse in its global minimizer. Moreover, we show that the ULPM with the cross-entropy loss has a benign global landscape for its loss function, which allows us to prove that all the critical points are strict saddle points except the global minimizers that exhibit the neural collapse phenomenon. Empirically, we show that our results also hold during the training of neural networks in real-world tasks when explicit regularization or weight decay is not used.





\end{abstract}
\section{Introduction}

Deep learning has achieved state-of-the-art performance in various applications \citep{lecun2015deep}, such as computer vision \citep{krizhevsky2012imagenet}, natural language processing \citep{brown2020language}, and scientific discovery \citep{long2018pde,zhang2018deep}.  Despite the empirical success of deep learning, how gradient descent or its variants lead deep neural networks to be biased towards solutions with good generalization performance on the test set is still a major open question. To develop a theoretical foundation for deep learning, many studies have investigated the implicit bias of gradient descent in different settings \citep{li2018algorithmic,amari2020does,vaswani2020each,soudry2018implicit,lyu2019gradient,arora2019implicit}. 

It is well acknowledged that well-trained end-to-end deep architectures can effectively extract features relevant to a given label. Although theoretical analysis of representation learning, especially in the context of deep learning, has been successful in recent years \citep{dlbook,goldblum2019truth,kawaguchi2022understanding,ji2021power,deng2021adversarial}, most of the studies that aim to analyze the properties of the final output function fail to understand the features learned by neural networks. Recently, in \cite{papyan2020prevalence}, the authors observed that the features in the same class will collapse to their mean and the mean will converge to an equiangular tight frame (ETF) during the terminal phase of training, that is, the stage after achieving zero training error. This phenomenon, namely, neural collapse \citep{papyan2020prevalence}, provides a clear view of how the last-layer features in the neural network evolve after interpolation and enables us to understand the benefit of training after achieving zero training error to achieve better performance in terms of generalization and robustness. To theoretically analyze the neural collapse phenomenon, \cite{fang2021layer} proposed the layer-peeled model (LPM) as a simple surrogate for neural networks, where the last-layer features are modeled as free optimization variables. In particular, in a balanced $K$-class classification problem using a neural network with $d$ neurons in the last hidden layer, the LPM takes the following form:
\begin{equation}
\begin{aligned}
	\min _{\W, \H} \frac{1}{n} \sum_{i=1}^{n} \mathcal{L}\left(\W {\h}_{i}, {\boldsymbol{y}}_{i}\right),\quad
	\text{ s.t.  } \frac{1}{2}||\W||_F^2\le C_1, \frac{1}{2}||\H||_F^2\le C_2,
	\end{aligned} 
\label{equ:lpm}
\end{equation}
where $C_1, C_2$ are positive constants. Here, $\W=[\w_1,\w_2,\cdots,\w_K]^\top\in\mathbb{R}^{K \times d}$ is the weight of the final linear classifier, $\H=[\h_{1},\h_{2},\cdots,\h_{n}]\in\mathbb{R}^{d\times n}$ is the feature of the last layer and $\boldsymbol{y}_i$ is the corresponding label. The intuition behind the LPM is that modern deep networks are often highly over-parameterized, with the capacity to learn any representations of the input data. It has been shown that an equiangular tight frame (ETF), {i.e.}, feature with neural collapse, is the only global optimum of the LPM objective (\ref{equ:lpm}) \citep{fang2021layer,lu2020neural,wojtowytsch2020emergence,zhu2021geometric}. 

{However, feature constraints in LPMs are not equivalent to weight decay used in practice}. In this study, we directly deal with the unconstrained model and show that gradient flow can find those neural collapse solutions without the help of explicit constraints and regularization. To do this, we build a connection between the neural collapse and recent theories on max-margin implicit regularization \citep{lyu2019gradient,wei2018margin}, and use it to provide a convergence result to the first-order stationary point of a minimum-norm separation problem. Furthermore, we illustrate that the cross-entropy loss enjoys a benign global landscape where all the critical points are strict saddles in the tangent space, except for the only global minimizers that exhibit the neural collapse phenomenon. Finally, we verify our insights via empirical experiments.

In contrast to previous theoretical works on neural collapse, our analysis does not incorporate any explicit regularization or constraint on features. A comparison with other results can be found in Table \ref{table:compare} and we defer a detailed discussion to Section \ref{section:compare}. The reasons we investigate the unregularized objective are summarized as follows:
\begin{enumerate}
    \item {Feature regularization or constrain is still not equivalent to weight decay used in practice. However, previous studies have justified that neural networks continue to perform well without any regularization or constraint \citep{zhang2021understanding}. Moreover, it is proved that SGD with exponential learning rate on unconstrained objective is equivalent to SGD with weight decay. \citep{li2019exponential}.}
    \item As shown in this study, neural collapse exists even under an unconstrained setting, which implies the emergence of neural collapse should be attributed to gradient descent and cross-entropy loss rather than explicit regularization.
    \item Regularization or constraint feature constraint can be barriers for existing theories of neural networks \citep{jacot2018neural,lyu2019gradient}. By allowing features to be totally free, we hope our results can inspire further analysis to plug in a realistic neural network. 
\end{enumerate}
\begin{table}[h]
\centering
\begin{tabular}{c|c|c|c|c}
\hline
Reference&Contribution& \makecell*[c]{Feature Norm\\ Constraint}&\makecell*[c]{Feature Norm\\ Regularization}&\makecell*[c]{Loss\\ Function} \\
\hline
\citep{papyan2020prevalence}&{Empirical Results}&\XSolidBrush&\XSolidBrush&CE Loss\\
\hline
\makecell*[c]{\citep{wojtowytsch2020emergence}\\ \citep{lu2020neural}\\\citep{fang2021layer}}&{Global Optimum}&\CheckmarkBold&\XSolidBrush&CE Loss\\  
\hline 
\makecell*[c]{\citep{mixon2020neural}\\ \citep{poggio2020explicit}\\\citep{han2021neural}}& \makecell*[c]{Modified\\ Training Dynamics} &\XSolidBrush&\XSolidBrush&$\ell_2$ Loss\\

\hline

\makecell*[c]{\citep{zhu2021geometric}\\\textbf{concurrent}}&{Landscape Analysis}&\XSolidBrush&\CheckmarkBold&CE Loss\\
\hline
\makecell*[c]{This paper}&\makecell*[c]{Training Dynamics+\\Landscape Analysis}&\makecell*[c]{\XSolidBrush}&\makecell*[c]{\XSolidBrush}&\makecell*[c]{CE Loss}
\\\hline
\end{tabular}
\caption{Comparison of recent analysis for neural collapse. We provide theoretical results with the minimum modification of the training objective function. Here we use CE loss to refer to the cross-entropy loss.}
\label{table:compare}
\end{table}
\subsection{Contributions} 
The contributions of the present study can be summarized as follows.
\begin{itemize}
    \item We build a relationship between the max-margin analysis \citep{soudry2018implicit,nacson2019convergence,lyu2019gradient} and the neural collapse and provide the implicit bias analysis to the feature rather than the output function. Although both parameters and features diverge to infinity, we prove that the convergent direction is along the direction of the minimum-norm separation problem. 
    \item Previous works \citep{lyu2019gradient,ji2020gradient} only prove that gradient flow on homogeneous neural networks will converge to the KKT point of the corresponding minimum-norm separation problem. However, the minimum-norm separation problem remains a highly non-convex problem and a local KKT point may not be the neural collapse solution. In this study, we perform a more detailed characterization of the convergence direction via landscape analysis.
    \item Previous analysis about neural collapse relies on the explicit regularization or constraint. In this study, we show that the implicit regularization effect of gradient flow is sufficient to lead to a neural collapse solution. The emergence of neural collapse should be attributed to gradient descent and loss function, rather than explicit regularization or constraint. We put detailed discussion in Section \ref{section:compare}.

\end{itemize}

\subsection{Related Works}
\paragraph{Implicit Bias of Gradient Descent:} The generalization of deep learning is always mysterious \citep{deng2021toward,zhang2020does,deng2020representation}. To understand how gradient descent or its variants helps deep learning to find solutions with good generalization performance on the test set, a recent line of research have studied the implicit bias of gradient descent in different settings. For example, gradient descent is biased toward solutions with smaller weights under $\ell_2$ loss \citep{li2018algorithmic,amari2020does,vaswani2020each,deng2021shrinking} and will converge to large margin solution while using  logistic loss \citep{soudry2018implicit,nacson2019convergence,lyu2019gradient,chizat2020implicit,ji2020gradient}. For linear networks, \cite{arora2019implicit,razin2020implicit,gidel2019implicit} have shown that gradient descent determines a low-rank approximation.

\paragraph{Loss Landscape Analysis:} Although the practical optimization problems encountered in machine learning are often nonconvex, recent works have shown the landscape can enjoy benign properties which allow further analysis. In particular, these landscapes do not exhibit spurious local minimizers or flat saddles and can be easily optimized via gradient-based methods \citep{ge2015escaping}. Examples include phase retrieval \citep{sun2018geometric}, low-rank matrix recovery \citep{ge2016matrix,ge2015escaping}, dictionary learning \citep{sun2016complete,qu2019analysis,laurent2018deep} and blind deconvolution \citep{lau2019short}.

\section{Preliminaries and Problem Setup}
In this paper, $||\cdot||_F$ denotes the Frobenius norm, $\|\cdot\|_2$ denotes the matrix spectral norm, $\|\cdot\|_*$ denotes the nuclear norm, $\|\cdot\|$ denotes the vector $l_2$ norm and $\text{tr}(\cdot)$ is the trace of matrices.  We use $[K] := \{1, 2,\cdots, K\}$ to denote the set of indices up to $K$. 
\subsection{Preliminaries}
We consider a balanced dataset with $K$ classes $\bigcup_{k=1}^K\{\x_{k,i}\}_{i=1}^{n}$. A standard fully connected neural network can be represented as:
\begin{equation}
f\left(\boldsymbol{x} ; \W_{full} \right)=\boldsymbol{b}_{L}+\W_{L} \sigma\left(\boldsymbol{b}_{L-1}+\W_{L-1} \sigma\left(\cdots \sigma\left(\boldsymbol{b}_{1}+\W_{1} \boldsymbol{x}\right)\cdots\right)\right) .
\end{equation}
Here $\W_{full} = (\W_{1},\W_{2},\cdots,\W_{L})$ denote the weight matrices in each layer,  $(\boldsymbol{b}_{1},\boldsymbol{b}_{2},\cdots,\boldsymbol{b}_{L})$ denote the bias terms, and $\sigma(\cdot)$ denotes the nonlinear activation function, for example, ReLU or sigmoid.
Let $\h_{k,i}=\sigma\left(\boldsymbol{b}_{L-1}+\W_{L-1} \sigma\left(\cdots \sigma\left(\boldsymbol{b}_{1}+\W_{1} \boldsymbol{x}_{k,i}\right)\right)\right)\in\mathbb{R}^d$ denote the last layer feature for data $\x_{k,i}$ and $\bar{\h}_{k} = \frac{1}{n}\sum_{i=1}^n \h_{k,i}$ denotes the feature mean within the k-th class. To provide a formal definition of neural collapse, we first introduce the concept of a simplex equiangular tight frame (ETF): 
\begin{dfn}[Simplex ETF]
\label{dfn: ETF}
A symmetric matrix $\M\in \mathbb{R}^{K\times K}$ is said to be a simplex equiangular tight frame (ETF) if 
\begin{equation}
    \M = \sqrt{\frac{K}{K-1}}\boldsymbol{Q}(\boldsymbol{I}_K-\frac{1}{K}\boldsymbol{1}_K\boldsymbol{1}_K^\top).
\end{equation}
Where $\boldsymbol{Q}\in\mathbb{R}^{d\times K}$ is a matrix with orthogonal columns. 
\end{dfn}
Let $\W\in\mathbb{R}^{K \times d}=\W_L=[\w_1,\w_2,\cdots,\w_K]^\top$ be the weight of the final layer classifier, the four criteria of neural collapse can be formulated precisely as:

\begin{itemize}
			\item \textbf{(NC1) Variability collapse:} As training progresses, the within-class variation of the activation becomes negligible as these activation collapse to their class mean $\bar{\h}_k=\frac{1}{n}\sum_{i=1}^n \h_{k,i}$.   $$||{\h}_{k,i}-\bar{{\h}}_k||=0, \quad  1\leq k\leq K.$$
			\item \textbf{(NC2) Convergence to Simplex ETF:} The vectors of the class-means (after centering by their global-mean) converge to having equal length, forming equal-size angles between any given pair, and being the maximally
pairwise-distanced configuration constrained to the previous two properties.
$$\cos(\bar{{\h}}_k,\bar{{\h}}_j)=-\frac{1}{K-1},\quad||\bar{{\h}}_k||=||\bar{{\h}}_j||,\quad k\not = j.$$
			\item \textbf{(NC3) Convergence to self-duality:} The linear classifiers and class-means will converge
to align with each other, up to appropriate rescaling, that is, there exist a universal constant $C>0$ such that
$$ {\w}_k=C\bar{{\h}}_k,\quad k\in[K].$$
			\item \textbf{(NC4) Simplification to Nearest Class-Center}  For a given deepnet activation: $$\h=\sigma\left(\boldsymbol{b}_{L-1}+\W_{L-1} \sigma\left(\cdots \sigma\left(\boldsymbol{b}_{1}+\W_{1} \boldsymbol{x}\right)\cdots\right)\right)\in\mathbb{R}^d,$$ the network classifier converges to choose whichever class has the nearest train class-mean
			$$ \underset{k}{\arg \min }\left\langle \w_{k}, \h\right\rangle \rightarrow \underset{k}{\arg \min }\left\|\h-\bar{{\h}}_k\right\|,$$  
\end{itemize}
In this paper, we say that a point $\W\in\mathbb{R}^{K\times d},\H\in\mathbb{R}^{d\times nK}$ satisfies the neural collapse conditions or is a neural collapse solution if the above four criteria are all satisfied for $(\W,\H)$.

\subsection{Problem Setup}

We mainly focus on the neural collapse phenomenon, which is only related to the classifiers and features in the last layer. Since the general analysis of the highly non-smooth and non-convex neural network is difficult, we peel down the last layer of the neural network and propose the following \textbf{unconstrained layer-peeled model (ULPM)} as a simplification to capture the main characters related to neural collapse during training dynamics. A similar simplification is commonly used in previous theoretical works \citep{lu2020neural,fang2021layer,wojtowytsch2020emergence,zhu2021geometric}, but ours does not have any constraint or regularization on features. It should be mentioned that although \cite{mixon2020neural,han2021neural} also studied the unconstrained model, their analysis adopted an approximation to real dynamics and was highly dependent on the $\ell_2$ loss function, which is rarely used in classification tasks. Compared with their works, we directly deal with the real training dynamics and cover the most popular cross-entropy loss in classification tasks.

Let $\W =[\w_1,\w_2,\cdots,\w_K]^\top\in \mathbb{R}^{K\times d}$ and $\H=[\h_{1,1},\cdots,\h_{1,n},\h_{2,1},\cdots,\h_{K,n}]\in\mathbb{R}^{d\times Kn}$  be the matrices of classifiers and features in the last layer, where $K$ is the number of classes and $n$ is the number of data points in each class. The ULPM is defined as follows:
\begin{equation}
\label{P1}
    \min_{\W,\H} ~ \mathcal{L}(\W,\H) = \min_{\W,\H} -\sum_{k=1}^{K}\sum_{i=1}^n \log\left( \frac{\exp(\w_k^\top \h_{k,i})}{\sum_{j=1}^K\exp(\w_j^\top \h_{k,i})} \right).
\end{equation}

Here, we do not have any constraint or regularization on features, which corresponds to the absence of weight decay in deep learning training. The objective function (\ref{P1}) is generally non-convex on $(\W,\H)$ and we aim to study the landscape of the objective function (\ref{P1}). Furthermore, we consider the gradient flow of the objective function:
\begin{equation*}
    \frac{d\W(t)}{dt}=-\frac{\partial \mathcal{L}(\W(t),\H(t))}{\partial \W}, \frac{d\H}{dt}= -\frac{\partial \mathcal{L}(\W(t),\H(t))}{\partial \H}.
\end{equation*}

\section{Main Results}

In this section, we present our main results regarding the training dynamics and landscape analysis of (\ref{P1}). This section is organized as follows: First, in Section \ref{first order}, we show the relationship between the margin and neural collapse in our surrogate model. Inspired by this relationship, we propose a minimum-norm separation problem (\ref{SVM}) and show the connection between the convergence direction of the gradient flow and the KKT point of (\ref{SVM}). In addition, we explicitly solve the global optimum of (\ref{SVM}) and show that it must satisfy the neural collapse conditions. However, owing to the non-convexity, we find an Example \ref{example:stationary} in Section \ref{second order}, which shows that there exist some bad KKT points such that a simple gradient flow will get stuck in them and does not converge to the neural collapse solution, which is proved to be optimal in Theorem \ref{thm:optimal}. Then, we present our second--order analysis result in Theorem \ref{thm:saddle} to show that those bad points will exhibit decreasing directions in the tangent space thus gradient descent and its variants can avoid those bad directions and only converge to the neural collapse solutions \citep{lee2016gradient,ge2015escaping}.

\subsection{Convergence To The First--Order Stationary Point}
\label{first order}

Heuristically speaking, the simplex ETF (Definition \ref{dfn: ETF}) gives a set of vectors with the maximum average angle between them. As a result, neural collapse implies that the neural networks tend to maximize the angles between each class and the corresponding classifiers. At a high level, such behavior is quite similar to margin maximization which is known to be an implicit regularization effect of gradient descent and has been extensively studied in  \cite{soudry2018implicit,nacson2019convergence,lyu2019gradient,ji2020gradient}. First, we illustrate the connection between the margin and neural collapse. Recall that the margin of a single data point $\x_{k,i}$ and the associated feature $\h_{k,i}$ 
is $q_{k,i}(\W,\H):= \w_{k}^\top \h_{k,i}-\max_{j\not=k}\w_{j}^\top \h_{k,i}$. Then, the margin of the entire dataset can be defined as $$q_{\min }(\W,\H):=\min _{k\in[K],i\in[n]} q_{k,i}(\W,\H).$$ The following theorem demonstrates that neural collapse yields the maximum margin solution of our ULPM model:
\begin{thm} [Neural collapse as max-margin solution] 
\label{thm: NC as Max margin}
For the ULPM model (\ref{P1}), the margin of the entire dataset always satisfies
 $$q_{\min}(\W,\H)\le\frac{\|\W\|_F^2+\|\H\|_F^2}{2(K-1)\sqrt{n}}$$ 
 
and the equality holds if and only if $(\W,\H)$ satisfies the neural collapse conditions with $\|W\|_F=\|H\|_F$.
\end{thm}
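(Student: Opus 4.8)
The plan is to establish the inequality through a short chain of elementary estimates, and then obtain the characterization of the equality case by tracing equality backward through that chain.

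\textbf{The inequality.} Since a minimum is at most an average, $q_{\min}(\W,\H)\le \frac{1}{Kn}\sum_{k=1}^{K}\sum_{i=1}^{n} q_{k,i}(\W,\H)$. Fix $(k,i)$ and put $\bar{\w}:=\frac1K\sum_{j=1}^{K}\w_j$. Bounding a maximum by an average, $\max_{j\ne k}\w_j^\top\h_{k,i}\ge \frac{1}{K-1}\sum_{j\ne k}\w_j^\top\h_{k,i}$, and using $\sum_{j\ne k}\w_j=K\bar{\w}-\w_k$, a line of algebra gives
\[
q_{k,i}(\W,\H)\le \frac{K}{K-1}\,(\w_k-\bar{\w})^\top\h_{k,i}\le \frac{K}{K-1}\,\|\w_k-\bar{\w}\|\,\|\h_{k,i}\|.
\]
Summing over $(k,i)$ and applying Cauchy--Schwarz to the resulting double sum,
\[
q_{\min}(\W,\H)\le \frac{1}{(K-1)n}\sum_{k,i}\|\w_k-\bar{\w}\|\,\|\h_{k,i}\|\le \frac{1}{(K-1)n}\Big(\sum_{k,i}\|\w_k-\bar{\w}\|^2\Big)^{1/2}\Big(\sum_{k,i}\|\h_{k,i}\|^2\Big)^{1/2}.
\]
Now $\sum_{k,i}\|\w_k-\bar{\w}\|^2=n\sum_k\|\w_k-\bar{\w}\|^2\le n\|\W\|_F^2$ since centering does not increase the Frobenius norm, while $\sum_{k,i}\|\h_{k,i}\|^2=\|\H\|_F^2$; hence the right-hand side equals $\frac{\|\W\|_F\|\H\|_F}{(K-1)\sqrt n}$, and AM--GM finishes: $\|\W\|_F\|\H\|_F\le\frac12\big(\|\W\|_F^2+\|\H\|_F^2\big)$.

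\textbf{Equality: forward direction.} Assume equality holds. The AM--GM step forces $\|\W\|_F=\|\H\|_F$; if both vanish the two sides are $0$ and there is nothing to prove, so assume the configuration is nontrivial. Equality in the two Cauchy--Schwarz steps forces $\h_{k,i}$ to be a nonnegative multiple of $\w_k-\bar{\w}$ with $\|\h_{k,i}\|=\rho\|\w_k-\bar{\w}\|$ for a single constant $\rho\ge0$, hence $\h_{k,i}=\rho(\w_k-\bar{\w})$ for all $i$; nontriviality forces $\rho>0$, so the within-class features coincide --- this is (NC1), with $\bar{\h}_k=\rho(\w_k-\bar{\w})$. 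Equality in $\sum_k\|\w_k-\bar{\w}\|^2\le\|\W\|_F^2$ forces $\bar{\w}=0$, so $\sum_j\w_j=0$ and $\h_{k,i}=\rho\w_k$, which is (NC3) with $C=1/\rho$. Equality in the max-versus-average step forces $\w_j^\top\h_{k,i}=\rho\,\w_j^\top\w_k$ to be independent of $j$ for $j\ne k$, and equality in ``min $\le$ average'' forces all $q_{k,i}$ equal.

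\textbf{Equality: ETF structure and the converse.} Feeding $\h_{k,i}=\rho\w_k$ into the last two facts, the off-diagonal entries of the Gram matrix $\W\W^\top$ are constant along each row, and by symmetry of $\W\W^\top$ equal to one value $a$; the equal-margin condition $\rho\|\w_k\|^2-\rho a=\text{const}$ then forces $\|\w_k\|^2$ equal to one value $b>0$. Together with $\sum_j\w_j=0$ (so $Kb+K(K-1)a=0$), this pins down $\W\W^\top=\frac{bK}{K-1}\big(\boldsymbol{I}_K-\frac1K\boldsymbol{1}_K\boldsymbol{1}_K^\top\big)$, the Gram matrix of a simplex ETF; hence $\{\w_k\}$, and therefore $\{\bar{\h}_k\}=\{\rho\w_k\}$, forms a simplex ETF --- this is (NC2) --- and (NC4) is immediate once (NC1)--(NC3) hold with equal-norm class means. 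For the converse, insert a neural-collapse configuration into the chain: (NC1) makes the ``min $\le$ average'' and max-versus-average steps tight, (NC2) together with the self-duality (NC3) makes the Cauchy--Schwarz steps tight, $\bar{\w}=0$ (automatic, since the ETF is centered) makes the centering step tight, and $\|\W\|_F=\|\H\|_F$ --- equivalently $C=\sqrt n$ --- makes the final AM--GM tight; a one-line computation confirms both sides equal $\frac{K\sqrt n\,\|\bar{\h}\|^2}{K-1}$.

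\textbf{Main obstacle.} The inequality itself is routine; the care lies entirely in the equality analysis. Two points need attention: (i) threading the single proportionality constant $\rho$ through the two nested Cauchy--Schwarz steps, so as to conclude that $\h_{k,i}$ is \emph{independent of $i$} (i.e.\ (NC1)), not merely parallel to $\w_k-\bar{\w}$ with possibly varying lengths; and (ii) upgrading ``the off-diagonal entries of $\W\W^\top$ are constant along each row'' to ``$\W\W^\top$ has one diagonal value and one off-diagonal value'', which is exactly what distinguishes a genuine simplex ETF from a merely equiangular-type configuration and uses both the symmetry of $\W\W^\top$ and the equal-margin consequence of the first step. The degenerate zero solution is a minor edge case to dispatch at the outset.
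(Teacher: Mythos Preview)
Your proof is correct and follows essentially the same approach as the paper's. Both arguments center the classifier, reduce to the key estimate $q_{k,i}\le \frac{K}{K-1}\,\tilde{\w}_k^\top\h_{k,i}$ (equivalently $\tilde{\w}_k^\top\h_{k,i}\ge \frac{K-1}{K}q_{\min}$), and finish with Cauchy--Schwarz and AM--GM. The only organizational difference is that the paper applies a single weighted AM--GM $\frac12\big(\tfrac{1}{\sqrt n}\|\tilde{\w}_k\|^2+\sqrt n\,\|\h_{k,i}\|^2\big)\ge \tilde{\w}_k^\top\h_{k,i}$ termwise and then sums, whereas you split this into a vector Cauchy--Schwarz, a second Cauchy--Schwarz on the resulting sum, and a global AM--GM on $\|\W\|_F\|\H\|_F$; this makes the $\sqrt n$ weight emerge naturally but costs you the extra ``threading $\rho$'' step in the equality analysis, which you handle correctly.
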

Based on this finding, we present an analysis of the convergence of the gradient flow on the ULPM (\ref{P1}). Following \cite{lyu2019gradient}, we link the gradient flow on cross-entropy loss to a minimum-norm separation problem.
\begin{thm}
\label{thm3.1}
    For problem (\ref{P1}), let $(\W(t),\H(t))$ be the path of gradient flow at time $t$, if there exist a time $t_0$ such that $\mathcal{L}(\W(t_0),\H(t_0))<\log2$, then any limit point of $$\{(\hat{\H}(t),\hat{\W}(t)):=(\frac{\H(t)}{\sqrt{\|\W(t)\|_{F}^2+\|\H(t)\|_F^2}},\frac{\W(t)}{\sqrt{\|\W(t)\|_{F}^2+\|\H(t)\|_F^2}})\}$$ is along the direction (i.e., a constant multiple) of a Karush-Kuhn-Tucker (KKT) point of the following minimum-norm separation problem:
    \begin{equation}
\begin{aligned}
\label{SVM}
&\min _{W, H} \frac{1}{2}||\W||_F^2+\frac{1}{2}||\H||_F^2\\
s.t.& \w_k^\top \h_{k,i}-\w_j^\top \h_{k,i} \geq 1,\quad  k\not=j\in[K],i\in [n].
\end{aligned}
\end{equation}
\end{thm}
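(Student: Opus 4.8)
The plan is to recognize problem (\ref{P1}) as an instance of gradient-flow training of a \emph{homogeneous} predictor and to invoke the implicit-bias machinery of \cite{lyu2019gradient} (see also \cite{ji2020gradient}). The key structural observation is that, writing $\theta=(\W,\H)$ for the joint parameter, each logit $\theta\mapsto\w_j^\top\h_{k,i}$ is a real-analytic function that is \emph{homogeneous of degree $2$}: scaling $(\W,\H)$ by $\alpha>0$ multiplies every logit by $\alpha^2$ and hence every margin $q_{k,i}(\W,\H)$ by $\alpha^2$. The objective in (\ref{P1}) is the sum over the $Kn$ samples of the multiclass cross-entropy applied to these logits, which is exactly the exponential-tailed loss class covered by that theory. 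So the proof reduces to checking that the ULPM meets the hypotheses of the homogeneous implicit-bias theorem and then translating its conclusion into the language of (\ref{SVM}).

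First I would translate the initialization assumption into a separation statement. Since the per-sample losses are nonnegative and sum to $\mathcal{L}$, the hypothesis $\mathcal{L}(\W(t_0),\H(t_0))<\log 2$ forces each per-sample loss to be below $\log 2$; writing the $(k,i)$-th term as $\log\bigl(1+\sum_{j\ne k}\exp(-(\w_k^\top\h_{k,i}-\w_j^\top\h_{k,i}))\bigr)$, this gives $\sum_{j\ne k}\exp(-(\w_k^\top\h_{k,i}-\w_j^\top\h_{k,i}))<1$ at $t_0$, so in particular $\w_k^\top\h_{k,i}-\w_j^\top\h_{k,i}>0$ for every $j\ne k$ and hence $q_{\min}(\W(t_0),\H(t_0))>0$. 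Because $\mathcal{L}$ is nonincreasing along the gradient flow, this strict-separation property persists for all $t\ge t_0$, which is precisely the separability trigger required to run the argument of \cite{lyu2019gradient}.

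Next I would check the remaining regularity conditions and then quote the theorem. The loss is $C^\infty$ in $(\W,\H)$, so the gradient flow is locally well-defined; global existence for $t\ge t_0$ follows from the standard estimate on $\tfrac{d}{dt}(\|\W\|_F^2+\|\H\|_F^2)=-2\langle\nabla\mathcal{L},(\W,\H)\rangle$ together with degree-$2$ homogeneity of the logits (Euler's identity applied logit by logit), which prevents finite-time blow-up; and the scalar loss $z\mapsto\log(1+\sum_j e^{-z_j})$ satisfies the monotonicity and tail hypotheses of that framework. Granting these, the homogeneous implicit-bias theorem yields $\mathcal{L}(\W(t),\H(t))\to 0$, $\|\W(t)\|_F^2+\|\H(t)\|_F^2\to\infty$, and every limit point of the normalized pair $(\hat\W(t),\hat\H(t))$ is, up to a positive scalar, a KKT point of the problem of maximizing $q_{\min}(\W,\H)$ subject to $\tfrac12\|\W\|_F^2+\tfrac12\|\H\|_F^2\le 1$. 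By positive homogeneity this margin-maximization problem has the same KKT points, up to rescaling, as (\ref{SVM}), and expanding each constraint $q_{k,i}\ge 1$ into the $K-1$ linear inequalities $\w_k^\top\h_{k,i}-\w_j^\top\h_{k,i}\ge 1$, $j\ne k$, recovers exactly the constraint set of (\ref{SVM}), which gives the claim.

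The main obstacle is not any single computation but the faithful verification that the multiclass ULPM fits the hypotheses of the homogeneous implicit-bias theorems, which were originally written for predictors with scalar output and binary labels: one must use the general (vector-output, ``$\min$ over competing logits'') form of the results in \cite{lyu2019gradient,ji2020gradient}, and confirm that the smoothness, homogeneity (here degree $2$, rather than the depth of a deep net), and Kurdyka--\L ojasiewicz-type conditions all hold in this setting. A secondary subtlety is guaranteeing that the limiting direction is a nontrivial KKT point—i.e.\ that the active constraints are genuinely binding and the multipliers do not all vanish—for which one again uses the persistence of strict separation established from the $\mathcal{L}<\log 2$ hypothesis.
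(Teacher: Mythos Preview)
Your proposal is correct and takes a genuinely different route from the paper. You treat the ULPM as a black-box instance of a degree-$2$ homogeneous classifier with exponential-tailed multiclass loss and invoke the general KKT-direction theorem of \cite{lyu2019gradient} once the hypotheses are verified. The paper instead \emph{re-derives} that machinery in this specific setting: it constructs explicit approximate multipliers $\lambda_{k,i,j}=\tfrac{\rho}{\|g\|}\cdot\tfrac{e^{-s_{k,i,j}}}{1+\sum_{l\ne k}e^{-s_{k,i,l}}}$, proves $\tilde\theta:=\theta/q_{\min}^{1/2}$ is an $(\epsilon,\delta)$-KKT point with $\epsilon=\sqrt{2(1-\beta)/\tilde\gamma}$ and $\delta=K^2(K-1)n/(2e\tilde\gamma q_{\min})$, and then establishes through a chain of lemmas that $\tilde\gamma$ is monotone increasing, $q_{\min}\to\infty$ (via $\mathcal{L}=O(1/t)$), and along a suitable subsequence $\beta\to 1$; MFCQ is checked by taking $v=\theta$, and the approximate-to-exact KKT passage is quoted from \cite{Dutta2013ApproximateKP}. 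Your approach is shorter and more conceptual, and correctly isolates the only nontrivial verification (that the multiclass cross-entropy on bilinear logits satisfies the homogeneity, smoothness, and tail assumptions of \cite{lyu2019gradient}); the paper's approach is self-contained, yields explicit rates, and avoids having to match the ULPM to the precise technical setup (o-minimality, the vector-output margin formulation) of the cited framework. Both are valid; the paper's version is closer to an independent proof, yours to a corollary.
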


\begin{rem}
Here we write (\ref{SVM}) as a constraint problem, but the constraint is introduced by the implicit regularization effect of gradient flow on our ULPM objective (\ref{P1}). Since the training dynamics will diverge to infinity, we hope to justify that the diverge direction is highly related to neural collapse and an appropriate normalization is needed, which is why it appears to be a constraint optimization form. Our goal is to justify that the neural collapse phenomenon is caused by the properties of the loss function and training dynamics rather than an explicit regularization or constraint, which seems to be necessary for previous studies \citep{fang2021layer,lu2020neural,wojtowytsch2020emergence}.
\end{rem}
\begin{rem}
In Theorem \ref{thm3.1}, we assume that there exists a time $t_0$ such that $\mathcal{L}(\W(t_0),\H(t_0))<\log2$. Note that the loss function can be rewritten as:
\begin{equation}
    \mathcal{L}(\W,\H) = \sum_{k=1}^K\sum_{i=1}^n\log(1+\sum_{j\neq k}\exp({\w_j\h_{k,i} - \w_k\h_{k,i}}))
\end{equation}
the requirement $\mathcal{L}(\W,\H)<\log 2$ implies that $\w_k\h_{k,i} - \w_j\h_{k,i}\geq0$ for any $k,j\in[K],i\in[n]$, which is equivalent to $q_{min}(\W,\H)>0$; that is, every feature is separated perfectly by the classifier. This assumption is common in the study of implicit bias in the nonlinear setting \citep{nacson2019lexicographic,lyu2019gradient,ji2020gradient} and its validity can be justified by the fact that neural collapse is found only in the terminal phase of training in the deep neural network, where the training accuracy has achieved 100\%. It is also an interesting direction to remove this assumption and study the early-stage dynamics of training, which is beyond the scope of this study and we leave it to future exploration.
\end{rem}

Theorem \ref{thm3.1} indicates that the convergent direction of the gradient flow is restricted to the max-margin directions, which usually have good robustness and generalization performance. In general, the KKT conditions are not sufficient to obtain global optimality because the minimum-norm separation problem (\ref{SVM}) is non-convex.  On the other hand, we can precisely characterize its global optimum in the ULPM case based on Theorem \ref{thm: NC as Max margin}:

\begin{cor}
 \label{cor:global}
    Every global optimum of the minimum-norm separation problem (\ref{SVM}) is also a KKT point that satisfies the neural collapse conditions.
\end{cor}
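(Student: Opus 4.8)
The plan is to pin down the optimal value of the minimum-norm separation problem (\ref{SVM}) using Theorem~\ref{thm: NC as Max margin}, and then read off both the neural-collapse structure and a set of KKT multipliers at any minimizer. First, any feasible $(\W,\H)$ for (\ref{SVM}) satisfies $\w_k^\top\h_{k,i}-\w_j^\top\h_{k,i}\ge 1$ for all indices, hence $q_{\min}(\W,\H)\ge 1$; substituting into Theorem~\ref{thm: NC as Max margin} gives $\tfrac12\bigl(\|\W\|_F^2+\|\H\|_F^2\bigr)\ge (K-1)\sqrt n\,q_{\min}(\W,\H)\ge (K-1)\sqrt n$, so the optimal value is at least $(K-1)\sqrt n$. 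To see it is attained, I would exhibit a neural-collapse point: take a centered simplex ETF $\{\bar\h_k\}_{k\in[K]}$ with $\|\bar\h_k\|^2=\tfrac{K-1}{K\sqrt n}$, and set $\h_{k,i}=\bar\h_k$ for every $i$ and $\w_k=\sqrt n\,\bar\h_k$. Then $\w_k^\top\h_{k,i}-\w_j^\top\h_{k,i}=\sqrt n\,\|\bar\h_k\|^2(1+\tfrac1{K-1})=1$ for all $k\ne j$, so this point is feasible with every constraint active, and $\|\W\|_F^2=\|\H\|_F^2=(K-1)\sqrt n$, that is, objective value exactly $(K-1)\sqrt n$. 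Hence the optimal value of (\ref{SVM}) equals $(K-1)\sqrt n$.

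Next, let $(\hat\W,\hat\H)$ be any global optimum. Feasibility gives $q_{\min}(\hat\W,\hat\H)\ge 1$ and optimality gives $\tfrac12(\|\hat\W\|_F^2+\|\hat\H\|_F^2)=(K-1)\sqrt n$, so Theorem~\ref{thm: NC as Max margin} yields $(K-1)\sqrt n = \tfrac12(\|\hat\W\|_F^2+\|\hat\H\|_F^2)\ge (K-1)\sqrt n\,q_{\min}(\hat\W,\hat\H)\ge (K-1)\sqrt n$. This forces $q_{\min}(\hat\W,\hat\H)=1$ and equality in Theorem~\ref{thm: NC as Max margin}, so the equality clause there gives the four neural-collapse conditions together with $\|\hat\W\|_F=\|\hat\H\|_F$. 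In particular $\hat\h_{k,i}=\bar\h_k$, $\hat\w_k=C\bar\h_k$ with $C>0$, and $\|\hat\W\|_F=\|\hat\H\|_F$ forces $C^2=n$, that is, $C=\sqrt n$; also $\sum_k\bar\h_k=\vzero$, since the $\bar\h_k$ have equal norms and equal pairwise inner products $-\tfrac1{K-1}\|\bar\h_k\|^2$.

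Finally, I would verify the KKT conditions of (\ref{SVM}) at $(\hat\W,\hat\H)$ directly rather than through a constraint qualification, which is unavailable because the constraints are bilinear. By ETF symmetry every margin $\hat\w_k^\top\hat\h_{k,i}-\hat\w_j^\top\hat\h_{k,i}$ equals $q_{\min}=1$, so all constraints are active; I would use the uniform multiplier $\lambda_{k,i,j}=\tfrac1{K\sqrt n}>0$. Using $\sum_i\hat\h_{k,i}=n\bar\h_k$ and $\sum_k\bar\h_k=\vzero$, the $\w_m$-stationarity right-hand side becomes $\lambda\bigl(n(K-1)\bar\h_m - n\sum_{k\ne m}\bar\h_k\bigr)=\lambda nK\,\bar\h_m=\sqrt n\,\bar\h_m=\hat\w_m$, and the $\h_{m,i}$-stationarity right-hand side becomes $\lambda\sum_{j\ne m}(\hat\w_m-\hat\w_j)=\lambda\sqrt nK\,\bar\h_m=\bar\h_m=\hat\h_{m,i}$; both identities hold, so $(\hat\W,\hat\H)$ is a KKT point and satisfies neural collapse, proving the corollary. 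The main obstacle is the interplay of the two halves: Theorem~\ref{thm: NC as Max margin} only upgrades ``global optimum'' to ``neural collapse'' once the optimal value is pinned to $(K-1)\sqrt n$, so the explicit ETF construction is load-bearing; and since no constraint qualification is available for the bilinear constraints, the KKT certificate must be produced by hand, which goes through cleanly only because the full ETF symmetry makes all active constraints carry the same multiplier.
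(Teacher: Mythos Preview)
Your proof is correct, and the neural-collapse half closely tracks the paper: both arguments use Theorem~\ref{thm: NC as Max margin} to lower-bound the objective of (\ref{SVM}) by $(K-1)\sqrt n$ and then invoke the equality case to extract the ETF structure at any minimizer. The paper phrases this via a homogeneity/rescaling argument (any point with positive margin can be scaled to margin exactly $1$, so the minimum feasible norm along that ray is $\tfrac{\|\W\|_F^2+\|\H\|_F^2}{2q_{\min}}$, which is then bounded below by Theorem~\ref{thm: NC as Max margin}), whereas you supply an explicit ETF attainer; these are equivalent packagings of the same inequality.

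Where you genuinely diverge is on the KKT claim. You assert that a constraint qualification ``is unavailable because the constraints are bilinear'' and therefore construct uniform multipliers $\lambda_{k,i,j}=\tfrac1{K\sqrt n}$ by hand. Your explicit verification is valid, but the stated reason is wrong: MFCQ \emph{does} hold at every feasible point of (\ref{SVM}). Taking the direction $v=(\W,\H)$ and using that $s_{k,i,j}(\W,\H)=\w_k^\top\h_{k,i}-\w_j^\top\h_{k,i}$ is $2$-homogeneous gives $\langle\nabla s_{k,i,j},v\rangle=2s_{k,i,j}\ge 2>0$ at any feasible point. The paper records exactly this observation (already used inside the proof of Theorem~\ref{thm3.1}) and concludes in one line that KKT is necessary for global optimality, without ever writing down multipliers. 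Your route is more constructive and self-contained; the paper's is shorter but relies on the MFCQ machinery established earlier.
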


With Theorem \ref{thm3.1} bridging dynamics with KKT points of (\ref{SVM}) and Corollary \ref{cor:global} associating global optimum of (\ref{SVM}) with neural collapse, the remaining work is to close the gap between the KKT point and the global optimum.
\subsection{Second--Order Landscape Analysis}
\label{second order}
In the convex optimization problem, the KKT conditions are usually equivalent to global optimality. Unfortunately, owing to the non-convex nature of the objective (\ref{P1}), the KKT points can also be saddle points or local optimum other than the global optimum. In this section, we aim to show that this non-convex optimization problem is actually not scary via landscape analysis. To be more specific, we prove that except for the global optimum given by neural collapse, all the other KKT points are actually saddle points that can be avoided by gradient flow.

In contrast to previous landscape analysis of non-convex problems, where people aim to show that the objective has a negative directional curvature around any stationary point \citep{sun2015nonconvex,zhang2020symmetry}, our analysis is slightly different. Note that since the model is unconstrained, once features can be perfectly separated, the ULPM objective (\ref{P1}) will always decrease along the direction of the current point and the optimum is attained only in infinity.  Although growing along all of those perfectly separating directions can let the loss function decrease to 0, the speed of decrease is quite different and there exists an optimal direction with the fastest decreasing speed. As shown in Section \ref{first order}, first-order analysis of training dynamics fails to distinguish such an optimal direction from KKT points, and we need second-order analysis to help us fully characterize the realistic training dynamics. First, we provide an example to illustrate the motivation and necessity of second-order landscape analysis.

\begin{ex}[A Motivating Example]
\label{example:stationary}
Consider the case where $K=4,n=1$, let $(\W,\H)$ be the following point:
\begin{equation*}
    \W = \H = 
\left[
\begin{array}{cccc}
1 & -1 & 0 & 0 \\
-1 & 1 & 0 & 0 \\
0 & 0 & 1 & -1 \\
0 & 0 & -1 & 1 
\end{array}
\right] .
\end{equation*}
One can easily verify that $(\W,\H)$ enables our model to classify all features perfectly. Furthermore, we can show that it is along the direction of a KKT point of the minimum-norm separation problem (\ref{SVM}) by constructing the Lagrangian multiplier $\Lambda = (\lambda_{ij})_{i,j=1}^K$ as follows:
\begin{equation*}
    \Lambda =      \left[
\begin{array}{cccc}
0 & 0 & \frac{1}{2} & \frac{1}{2} \\
0 & 0 & \frac{1}{2} & \frac{1}{2} \\
\frac{1}{2} & \frac{1}{2} & 0 & 0 \\
\frac{1}{2} & \frac{1}{2} & 0 & 0 
\end{array}
\right].
\end{equation*}

To see this, simply write down the corresponding Lagrangian (note that we aim to justify $(\W,\H)$ is along the direction of a KKT point of (\ref{SVM}), to make it a true KKT point , one needs to multiple $1/\sqrt{2}$ on $\W,\H$):
\begin{equation*}
    \mathcal{L}(\W,\H,\Lambda)=\frac{1}{4}\|\W\|_F^2+\frac{1}{4}\|\H\|_F^2-\sum_{i=1}^4\sum_{j\neq i}\lambda_{i,j}(\frac{1}{2}\w_i\h_i-\frac{1}{2}\w_j\h_i-1).
\end{equation*}

Simply take derivatives for $\W,\H$ and $\Lambda$ we find that it satisfies the KKT conditions. However, the gradient of $(\W,\H)$ is:
\begin{equation*}
    \nabla_{\W}\mathcal{L}(\W,\H) = \nabla_{\H}\mathcal{L}(\W,\H) = -\frac{2+2e^{-2}}{2+2e^{-2}+2e^{2}} 
\left[
\begin{array}{cccc}
1 & -1 & 0 & 0 \\
-1 & 1 & 0 & 0 \\
0 & 0 & 1 & -1 \\
0 & 0 & -1 & 1 
\end{array}
\right] .
\end{equation*}
We can see that the directions of the gradient and the parameter align with each other ({i.e.}, $\W$ is parallel to $ \nabla_{\W}\mathcal{L}(\W,\H),$ and $\H $ is parallel to $ \nabla_{\H}\mathcal{L}(\W,\H)$), which implies that simple gradient descent may get stuck in this direction and only grow the parameter norm. However, if we construct:
\begin{equation*}
	\begin{aligned}
	\W' =\H'^{\top} = \sqrt{\frac{1}{1+2\alpha^2}}
	\left[
	\begin{array}{cccc}
	1+\alpha & -1+\alpha & \alpha & \alpha \\
	-1+\alpha & 1+\alpha & \alpha & \alpha \\
	-\alpha & -\alpha & 1-\alpha & -1-\alpha \\
	-\alpha & -\alpha & -1-\alpha & 1-\alpha 
	\end{array}
	\right],
	\end{aligned}
	\end{equation*}
By simple calculation, we find that $f(\alpha):=\mathcal{L}(\W',\H')$ satisfies $f'(\alpha)=0,f''(\alpha)<0$. Since $\|\W'\|_F = \|\W\|_F,\|\H'\|_F = \|\H\|_F$ and $||\W'-\W||_F^2+||\H'-\H||_F^2\rightarrow 0$ as $\alpha\rightarrow 0$, this result implies that for any $ \epsilon>0$, we can choose appropriate $\alpha$ such that:
\begin{equation*}
\begin{aligned}
&||\W'||_F^2=||\W||_F^2,||\H'||_F^2=||\H||_F^2,\\
&||\W'-\W||_F^2+||\H'-\H||_F^2<\epsilon, \mathcal{L}(\W',\H')<\mathcal{L}(\W,\H).
\end{aligned}
\end{equation*}
\end{ex}

In Example \ref{example:stationary}, it is shown that there are some KKT points of the minimum-norm separation problem (\ref{SVM}) that are not globally optimal, but there exist some better points close to it; thus, the gradient-based method can easily avoid them (see \cite{lee2016gradient,ge2015escaping} for a detailed discussion). In the following theorem, we will show that the best directions are neural collapse solutions in the sense that the loss function is the lowest among all the growing directions.
\begin{thm}
\label{thm:optimal}
    The optimal value of the loss function (\ref{P1}) on a sphere is obtained ({i.e.}, $\mathcal{L}(\W,\H)\leq \mathcal{L}(\W',\H')$ for any $ ||\W'||_F^2+||\H'||_F^2= ||\W||_F^2+||\H||_F^2$)  if only if $(\W,\H)$ satisfies neural collapse conditions and $||\W||_F=||\H||_F$. 
\end{thm}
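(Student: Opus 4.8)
The plan is to fix a sphere $\|\W\|_F^2+\|\H\|_F^2=R^2$ and exhibit a lower bound for $\mathcal{L}$ on this sphere that depends only on $R,K,n$ and is attained precisely at the neural-collapse configurations with $\|\W\|_F=\|\H\|_F$. Writing $\bar{\w}=\frac1K\sum_{k}\w_k$ and $\mathcal{L}(\W,\H)=\sum_{k=1}^{K}\sum_{i=1}^{n}\log\bigl(1+\sum_{j\neq k}e^{-(\w_k-\w_j)^{\top}\h_{k,i}}\bigr)$, I would apply Jensen's inequality twice. First, for each fixed $(k,i)$ the AM-GM inequality gives $\sum_{j\neq k}e^{-(\w_k-\w_j)^{\top}\h_{k,i}}\ge (K-1)\exp\!\bigl(-\tfrac{1}{K-1}\sum_{j\neq k}(\w_k-\w_j)^{\top}\h_{k,i}\bigr)$; since $\sum_{j\neq k}(\w_k-\w_j)=K(\w_k-\bar{\w})$, the exponent equals $-\tfrac{K}{K-1}(\w_k-\bar{\w})^{\top}\h_{k,i}$, and equality holds iff $(\w_k-\w_j)^{\top}\h_{k,i}$ is independent of $j\neq k$. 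Second, the map $g(t):=\log(1+(K-1)e^{-t})$ is convex and strictly decreasing, so Jensen over the $Kn$ pairs $(k,i)$ yields $\mathcal{L}(\W,\H)\ge Kn\,g\!\bigl(\tfrac{1}{K-1}S\bigr)$ with $S:=\sum_{k}(\w_k-\bar{\w})^{\top}\bar{\h}_k$ and $\bar{\h}_k=\frac1n\sum_i\h_{k,i}$, equality holding iff all the numbers $(\w_k-\bar{\w})^{\top}\h_{k,i}$ coincide.

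Next I would upper bound $S$. Cauchy--Schwarz followed by the weighted AM-GM with weight $1/\sqrt n$ gives $S\le\sum_k\|\w_k-\bar{\w}\|\,\|\bar{\h}_k\|\le\tfrac12\bigl(\tfrac{1}{\sqrt n}\sum_k\|\w_k-\bar{\w}\|^2+\sqrt n\sum_k\|\bar{\h}_k\|^2\bigr)$. Since $\sum_k\|\w_k-\bar{\w}\|^2=\|\W-\1\bar{\w}^{\top}\|_F^2\le\|\W\|_F^2$ and $\sum_k\|\bar{\h}_k\|^2\le\frac1n\sum_{k,i}\|\h_{k,i}\|^2=\frac1n\|\H\|_F^2$, this yields $S\le\tfrac{1}{2\sqrt n}(\|\W\|_F^2+\|\H\|_F^2)=\tfrac{R^2}{2\sqrt n}$. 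As $g$ is decreasing, the two displays combine to the desired bound $\mathcal{L}(\W,\H)\ge Kn\,g\!\bigl(\tfrac{R^2}{2(K-1)\sqrt n}\bigr)$ on the sphere.

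For the characterization, a minimizer must turn every inequality above into an equality. Tightness of the Cauchy--Schwarz/AM-GM chain forces $\bar{\w}=\vzero$, $\bar{\h}_k=\tfrac{1}{\sqrt n}\w_k$ with $\|\bar{\h}_k\|=\tfrac{1}{\sqrt n}\|\w_k\|$, and $\h_{k,i}=\bar{\h}_k$ for all $i$ (which is exactly NC1); consequently $\|\H\|_F^2=n\sum_k\|\bar{\h}_k\|^2=\sum_k\|\w_k\|^2=\|\W\|_F^2$, and the degenerate possibility that some $\w_k=\vzero$ forces $\W=\H=\vzero$ and so is excluded for $R>0$. Tightness of the first Jensen step makes $q_{k,i}(\W,\H)=\tfrac{K}{K-1}(\w_k-\bar{\w})^{\top}\h_{k,i}$, tightness of the second makes this independent of $(k,i)$, and $S=\tfrac{R^2}{2\sqrt n}$ pins the common value to $\tfrac{R^2}{2(K-1)\sqrt n}$; hence $q_{\min}(\W,\H)=\tfrac{\|\W\|_F^2+\|\H\|_F^2}{2(K-1)\sqrt n}$, so Theorem~\ref{thm: NC as Max margin} applies and upgrades the conclusion to full neural collapse together with $\|\W\|_F=\|\H\|_F$. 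The converse is a direct substitution: for a neural-collapse point with $\|\W\|_F=\|\H\|_F$ one has $\w_k=\sqrt n\,\bar{\h}_k$, $\h_{k,i}=\bar{\h}_k$, and $\{\bar{\h}_k\}$ a simplex ETF of common squared norm $R^2/(2Kn)$, and plugging into $\mathcal{L}$ returns exactly $Kn\,g\!\bigl(\tfrac{R^2}{2(K-1)\sqrt n}\bigr)$.

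The main obstacle is the equality bookkeeping: one has to choose precisely the weight $1/\sqrt n$ in the AM-GM so that the bound on $S$ carries the same $\sqrt n$ as in Theorem~\ref{thm: NC as Max margin} and so that equality forces the balance $\|\W\|_F=\|\H\|_F$ (rather than $\|\H\|_F^2=n\|\W\|_F^2$), and one must check that the various tightness conditions are mutually consistent -- they are, and jointly they determine the configuration up to an orthogonal transformation of $\R^d$, which also uses the standing assumption $d\ge K-1$ so that a simplex ETF exists. The two-step Jensen is what makes the bound tight once $K\ge 3$: a one-step estimate through $q_{\min}$ alone would be tight only for $K=2$, since $\sum_{j\neq k}e^{a_j}=\max_{j\neq k}e^{a_j}$ cannot hold when $K-1\ge 2$.
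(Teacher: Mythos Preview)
Your proposal is correct and follows essentially the same two-Jensen-plus-AM/GM strategy as the paper's own proof: first Jensen on $e^{-x}$ to collapse $\sum_{j\neq k}$ into the centered $\frac{K}{K-1}(\w_k-\bar{\w})^\top\h_{k,i}$, then Jensen on $\log(1+(K-1)e^{-x})$ to average over $(k,i)$, then a weighted AM--GM with weight $1/\sqrt n$ together with $\|\W-\1\bar{\w}^\top\|_F\le\|\W\|_F$ to reach the sphere bound; the equality analysis is also the same. The only organizational differences are that you bound the aggregate $S=\sum_k(\w_k-\bar{\w})^\top\bar{\h}_k$ (using $\sum_k\|\bar{\h}_k\|^2\le\frac1n\|\H\|_F^2$) rather than each $\tilde{\w}_k^\top\h_{k,i}$ separately, and that you finish by invoking Theorem~\ref{thm: NC as Max margin} instead of re-deriving the ETF angles by hand --- both are harmless reshufflings of the same argument.
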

\begin{rem}
   Note that the second condition is necessary because neural collapse conditions do not specify the norm ratio of $\W$ and $\H$. That is, if $(\W,\H)$ satisfies the neural collapse conditions, then for any $\alpha,\beta\in \mathbb{R},(\alpha\W,\beta\H)$ will also satisfy them, but only some certain $\alpha,\beta$ are optimal.
\end{rem}

Now, we turn to those points that are not globally optimal. To formalize our discussion in the motivating example \ref{example:stationary}, we first introduce the tangent space:
\begin{dfn}[tangent space]
\label{dfn: tangent space}
    The tangent space of $(\W,\H)$ is defined as a set of directions that are orthogonal to $(\W,\H)$ :
    \begin{equation*}
        \mathcal{T}(\W,\H) = \{\Delta \W\in \mathbb{R}^{K\times d},\Delta \H\in \mathbb{R}^{d\times nK}): \text{tr}(\W^\top\Delta \W)+\text{tr}(\H^\top\Delta \H)=0\}
    \end{equation*}
\end{dfn}

Our next result justifies our observation in Example \ref{example:stationary} that for those non-optimal points, there exists a direction in the tangent space such that moving along this direction will lead to a lower objective value.

\begin{thm}
\label{thm:saddle}
    If $(\W,\H)$ is not the optimal solution in Theorem \ref{thm:optimal} ({i.e.}, $(\W,\H)$ is not a neural collapse solution or it is a neural collapse solution but $\|\W\|_F\neq\|\H\|_F$), then there exists a direction $(\Delta \W,\Delta \H)\in\mathcal{T}(\W,\H)$ and constant $M>0$ such that for any $0<\delta<M$,
    \begin{equation}
    \label{saddle condition 1}
        \mathcal{L}(\W+\delta\Delta \W,\H+\delta\Delta \H)<\mathcal{L}(\W,\H).
    \end{equation} Further more, it implies that for any $\epsilon>0$ there exists a point $ (\W',\H') $ such that:
    \begin{equation}
\begin{aligned}
\label{saddle condition 2}
&||\W'||_F^2+||\H'||_F^2=||\W||_F^2+||\H||_F^2,\\
&||\W'-\W||_F^2+||\H'-\H||_F^2<\epsilon, \mathcal{L}(\W',\H')< \mathcal{L}(\W,\H).
\end{aligned}
\end{equation}
\end{thm}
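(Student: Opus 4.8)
The plan is to prove this in two stages, mirroring the structure of the two displayed conclusions. The first stage establishes the existence of a strictly decreasing direction in the tangent space $\mathcal{T}(\W,\H)$; the second stage transfers that infinitesimal statement to the statement about nearby points on the sphere by a retraction argument. For the first stage, I would argue by contrapositive: suppose that for \emph{every} $(\Delta\W,\Delta\H)\in\mathcal{T}(\W,\H)$ and every sufficiently small $\delta>0$ we have $\mathcal{L}(\W+\delta\Delta\W,\H+\delta\Delta\H)\ge\mathcal{L}(\W,\H)$. This says that $\mathcal{L}$, restricted to the affine slice $(\W,\H)+\mathcal{T}(\W,\H)$, has a local minimum at $(\W,\H)$, hence the gradient $(\nabla_\W\mathcal{L},\nabla_\H\mathcal{L})$ must be orthogonal to $\mathcal{T}(\W,\H)$, i.e.\ parallel to $(\W,\H)$ itself: there is a scalar $c$ with $\nabla_\W\mathcal{L}=c\W$, $\nabla_\H\mathcal{L}=c\H$ (this is exactly the alignment phenomenon observed in Example \ref{example:stationary}). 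The key is then to show that this alignment condition, combined with the assumption $\mathcal{L}(\W,\H)<\log 2$ implicitly in force (every point is perfectly separated in the regime of interest), forces $(\W,\H)$ to be a neural collapse solution with $\|\W\|_F=\|\H\|_F$ — contradicting the hypothesis of the theorem.

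The way I would extract this is to follow the gradient flow from $(\W,\H)$ and invoke the machinery already set up: if $(\W,\H)$ is an alignment point, then the ray $t\mapsto \phi(t)(\W,\H)$ (for a suitable increasing scalar $\phi$) is itself a gradient flow trajectory, so by Theorem \ref{thm3.1} its normalized limit — which is just $(\W,\H)/\sqrt{\|\W\|_F^2+\|\H\|_F^2}$ — is along the direction of a KKT point of the minimum-norm separation problem \eqref{SVM}. But a KKT point need not be globally optimal, so this alone is not enough; here is where I would bring in a quantitative comparison against the max-margin bound of Theorem \ref{thm: NC as Max margin}. Concretely, an alignment point satisfies, after rescaling to the constraint surface $q_{\min}=1$, that all active-constraint gradients sum to a multiple of $(\W,\H)$; pairing this relation with $(\W,\H)$ itself and using $\sum_{k,i}q_{k,i}$-type identities shows that $q_{\min}(\W,\H)$ equals the upper bound $\tfrac{\|\W\|_F^2+\|\H\|_F^2}{2(K-1)\sqrt n}$. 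By the equality case of Theorem \ref{thm: NC as Max margin}, this forces neural collapse with equal norms — the desired contradiction. I expect this step — showing that first-order alignment of $(\nabla\mathcal{L},(\W,\H))$ already saturates the margin inequality — to be the main obstacle, since it requires carefully bookkeeping the softmax gradients and relating them to the Lagrange multipliers of \eqref{SVM}.

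For the second stage, given the decreasing tangent direction $(\Delta\W,\Delta\H)\in\mathcal{T}(\W,\H)$ from stage one, I would define the retracted curve $(\W'(\delta),\H'(\delta))$ obtained by moving to $(\W+\delta\Delta\W,\H+\delta\Delta\H)$ and then rescaling by the factor $\sqrt{\tfrac{\|\W\|_F^2+\|\H\|_F^2}{\|\W+\delta\Delta\W\|_F^2+\|\H+\delta\Delta\H\|_F^2}}$ so as to land exactly on the sphere $\|\W'\|_F^2+\|\H'\|_F^2=\|\W\|_F^2+\|\H\|_F^2$. Because $(\Delta\W,\Delta\H)$ is tangent, the denominator is $\|\W\|_F^2+\|\H\|_F^2 + \delta^2(\|\Delta\W\|_F^2+\|\Delta\H\|_F^2)$, so the rescaling factor is $1-O(\delta^2)$; hence $(\W'(\delta),\H'(\delta))$ differs from $(\W+\delta\Delta\W,\H+\delta\Delta\H)$ only at order $\delta^2$, while $\mathcal{L}(\W+\delta\Delta\W,\H+\delta\Delta\H)-\mathcal{L}(\W,\H)$ is strictly negative at order $\delta$ (if the directional derivative is nonzero) or order $\delta^2$ (matching the $f''(\alpha)<0$ situation in Example \ref{example:stationary}). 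In either case, choosing $\delta$ small enough makes $\mathcal{L}(\W'(\delta),\H'(\delta))<\mathcal{L}(\W,\H)$ while keeping $\|\W'(\delta)-\W\|_F^2+\|\H'(\delta)-\H\|_F^2<\epsilon$, which is \eqref{saddle condition 2}. The only subtlety is handling the borderline case where the first-order directional derivative along $(\Delta\W,\Delta\H)$ vanishes: there I would either choose a better direction in $\mathcal{T}(\W,\H)$ or, following Example \ref{example:stationary}, use a curved path within the sphere and exploit the strict negativity of the second derivative, noting that the $O(\delta^2)$ retraction correction has a controllable sign contribution that can be absorbed.
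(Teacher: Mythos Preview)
Your first stage has a genuine gap. You correctly deduce that if no decreasing tangent direction exists then the Euclidean gradient must be parallel to $(\W,\H)$, i.e.\ $\nabla_{\W}\mathcal{L}=c\W$, $\nabla_{\H}\mathcal{L}=c\H$. But the step you flag as ``the main obstacle'' --- showing that this first-order alignment already saturates the margin bound $q_{\min}=\tfrac{\|\W\|_F^2+\|\H\|_F^2}{2(K-1)\sqrt n}$ and hence forces neural collapse --- is not just hard, it is false. Example \ref{example:stationary} is precisely a counterexample: the gradient there is an explicit scalar multiple of $(\W,\H)$, yet $q_{\min}=2<8/3$ and the point is not a neural collapse solution. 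Scaling that example up preserves alignment (by symmetry) while driving $\mathcal{L}$ below $\log 2$, so the separability assumption you invoke does not rescue the argument. More generally, alignment says only that $(\W,\H)$ is a KKT direction of \eqref{SVM}, and the whole point of Section \ref{second order} is that KKT directions need not be globally optimal.

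What you are discarding is the second-order content of your own contrapositive hypothesis: ``$\mathcal{L}(\W+\delta\Delta\W,\H+\delta\Delta\H)\ge\mathcal{L}(\W,\H)$ for all small $\delta$'' implies not only alignment but also that the Riemannian Hessian is positive semidefinite on $\mathcal{T}(\W,\H)$. The paper's proof uses this second-order condition essentially. Writing $\nabla L(\W\H)\H^\top=\lambda\W$, $\W^\top\nabla L(\W\H)=\lambda\H$, one first shows $|\lambda|\le\|\nabla L(\W\H)\|_2$. If the inequality is strict, one picks $a$ in $\ker\W\cap\ker\H^\top$ (which exists since $\mathrm{rank}\,\nabla L=K-1$) and the top singular vectors $u,v$ of $\nabla L(\W\H)$, and checks that $(\Delta\W,\Delta\H)=(ua^\top,-av^\top)$ is a tangent direction with strictly negative curvature --- contradicting the second-order condition. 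If $|\lambda|=\|\nabla L(\W\H)\|_2$, one shows $-\nabla L(\W\H)\in|\lambda|\,\partial\|\W\H\|_*$, so $\W\H$ globally minimizes the convex problem $L(\Z)+|\lambda|\|\Z\|_*$; combined with $\|\W\H\|_*\le\tfrac12(\|\W\|_F^2+\|\H\|_F^2)$ (with equality here from $\W^\top\W=\H\H^\top$), this makes $(\W,\H)$ a global minimizer of $\mathcal{L}+\tfrac{|\lambda|}{2}(\|\W\|_F^2+\|\H\|_F^2)$, hence optimal on the sphere by Theorem \ref{thm:optimal} --- contradiction. None of this can be replaced by a margin-saturation argument from alignment alone.

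Your second stage (the retraction to the sphere) is fine and matches what is needed for \eqref{saddle condition 2}.
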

\begin{rem}
The result in (\ref{saddle condition 1}) gives us a decreasing direction orthogonal to the direction of $(\W,\H)$. As shown in Example \ref{example:stationary}, the gradient on these non-optimal points might be parallel to $(\W,\H)$; thus, the first-order analysis fails to explain the prevalence of neural collapse. {Here the decreasing direction is obtained by analyzing the Riemannian Hessian matrix and finding its eigenvector corresponding to a negative eigenvalue, which further indicates that these points are first-order saddle points in the tangent space. That is why we name it second-order landscape analysis.} A formal statement and proof are presented in the appendix \ref{Appendix: sec 3.2}. Previous works have shown that for a large family of gradient-based methods, they can avoid saddle points and only converge to minimizers \citep{lee2016gradient,ge2015escaping,panageas2019first}, thus our landscape analysis indicates that the gradient flow dynamics only find neural collapse directions.
\end{rem}

\section{Empirical Results}
\begin{figure}[H]

	
	\makebox[\linewidth][c]{
	\centering
	\begin{subfigure}[t]{0.26\textwidth}
		\centering
		\includegraphics[width=1\textwidth]{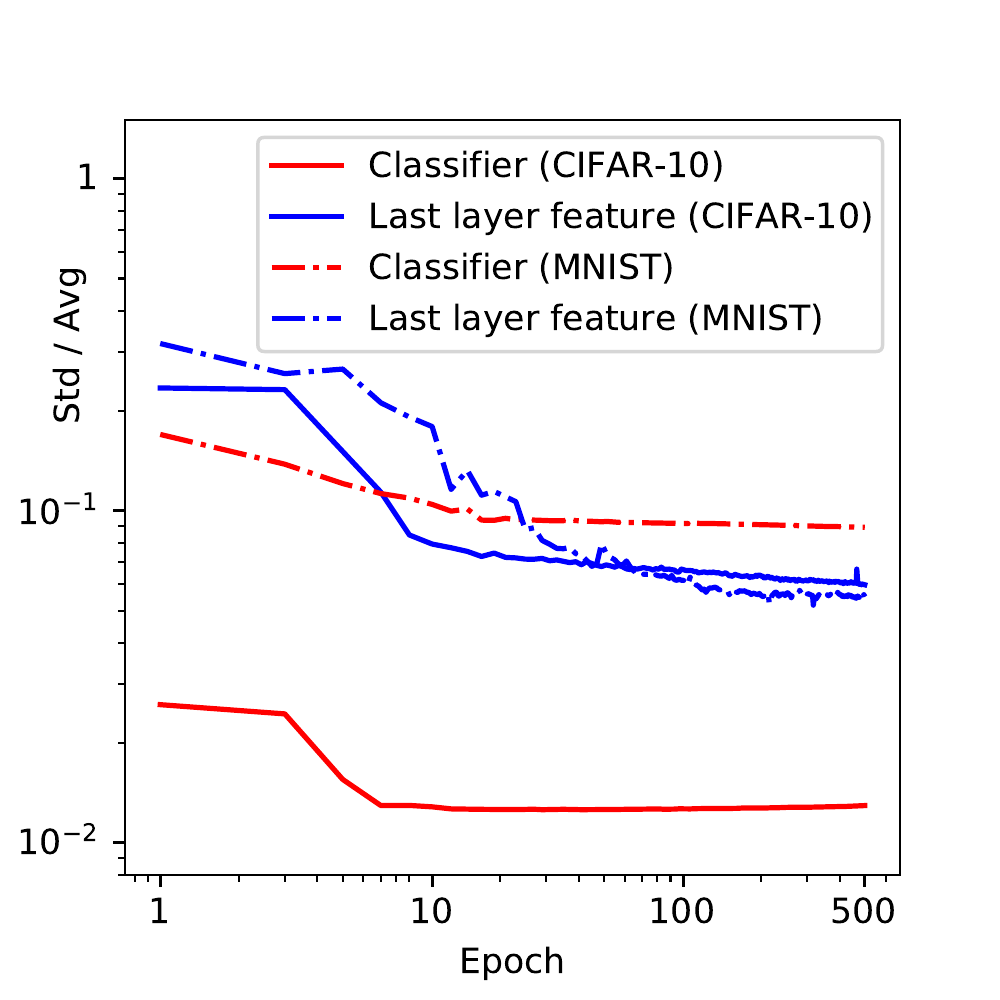}
		\subcaption{\scriptsize Variation of the norms}
		\label{fig:variationofnorm}
	\end{subfigure}
	\hspace{-0.2in}
	\quad
	\begin{subfigure}[t]{0.26\textwidth}
		\centering
		\includegraphics[width=1\textwidth]{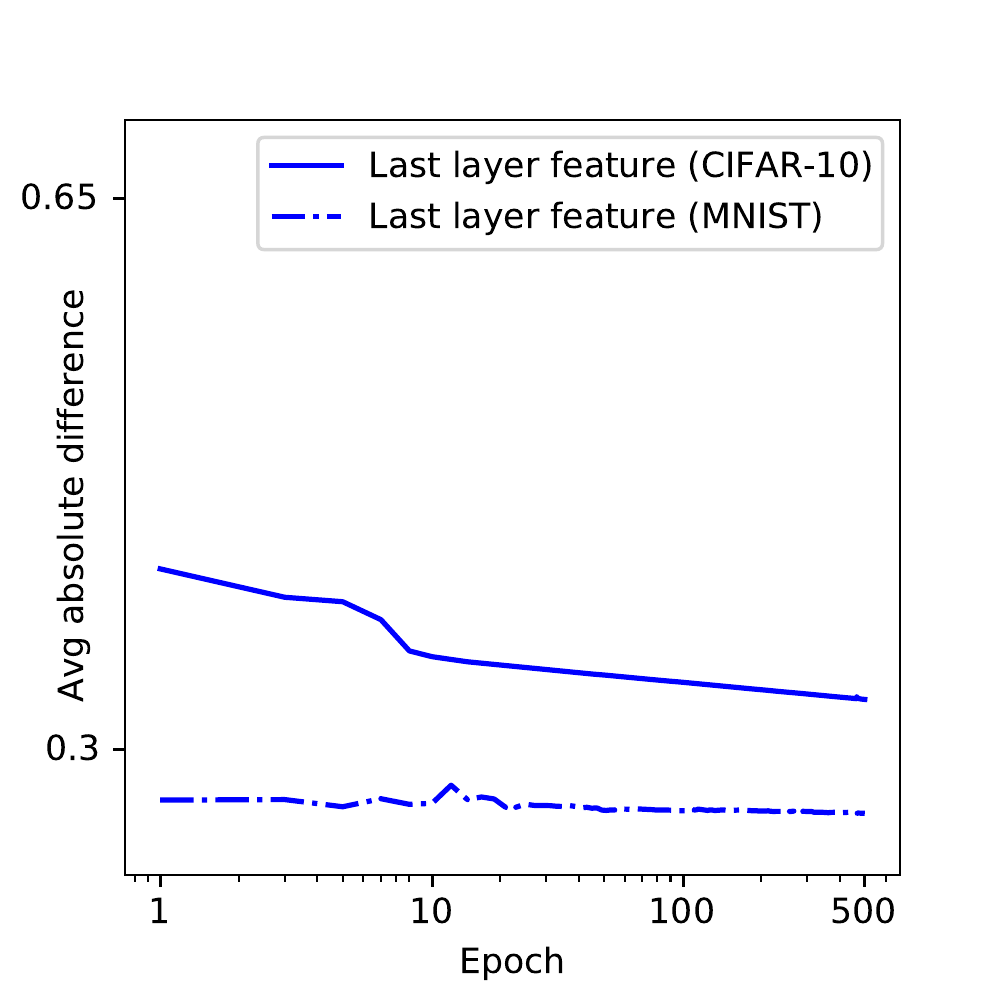}
		\subcaption{\scriptsize Within-class variation}
		\label{fig:withinclassvar}
	\end{subfigure}
	\hspace{-0.2in}
	\quad
	\begin{subfigure}[t]{0.26\textwidth}
		\centering
		\includegraphics[width=1\textwidth]{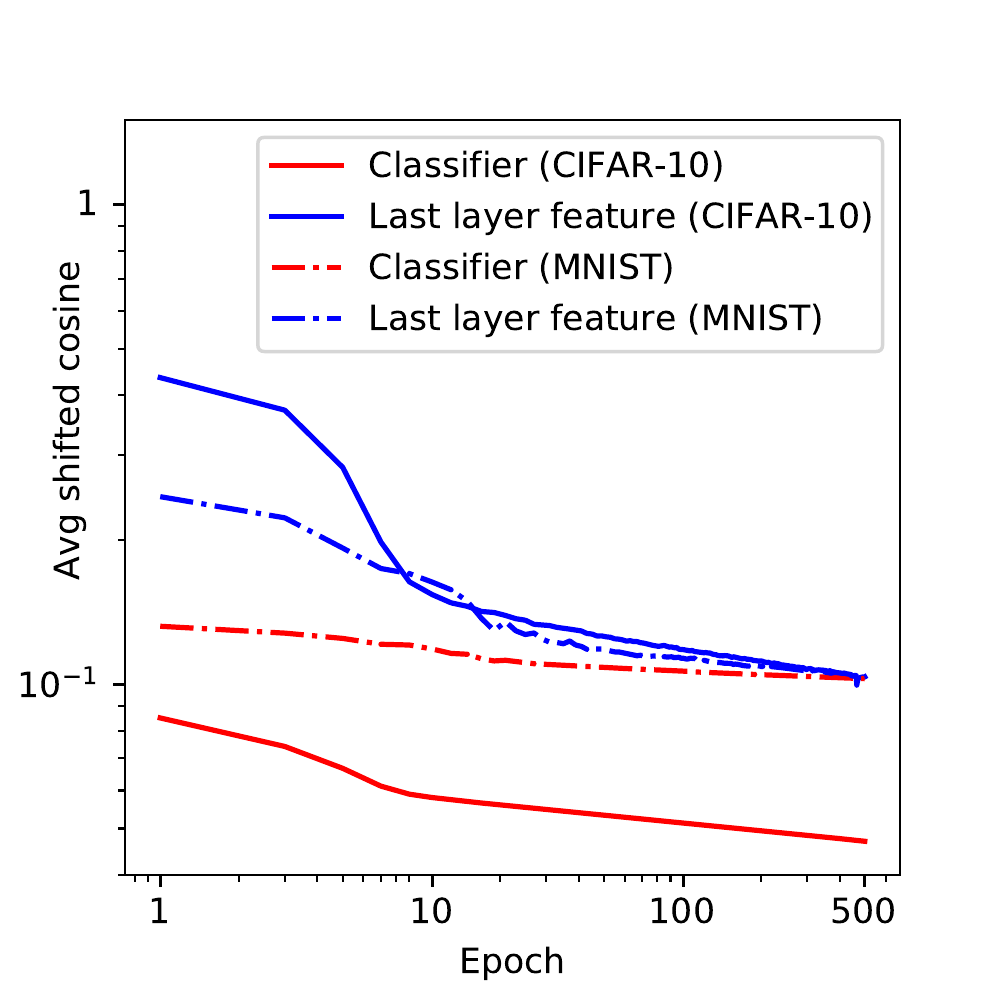}
		\subcaption{\scriptsize Cosines between pairs}
		\label{fig:cosbetweenpair}
	\end{subfigure}
	\hspace{-0.2in}
	\quad
	\begin{subfigure}[t]{0.26\textwidth}
		\centering
		\includegraphics[width=1\textwidth]{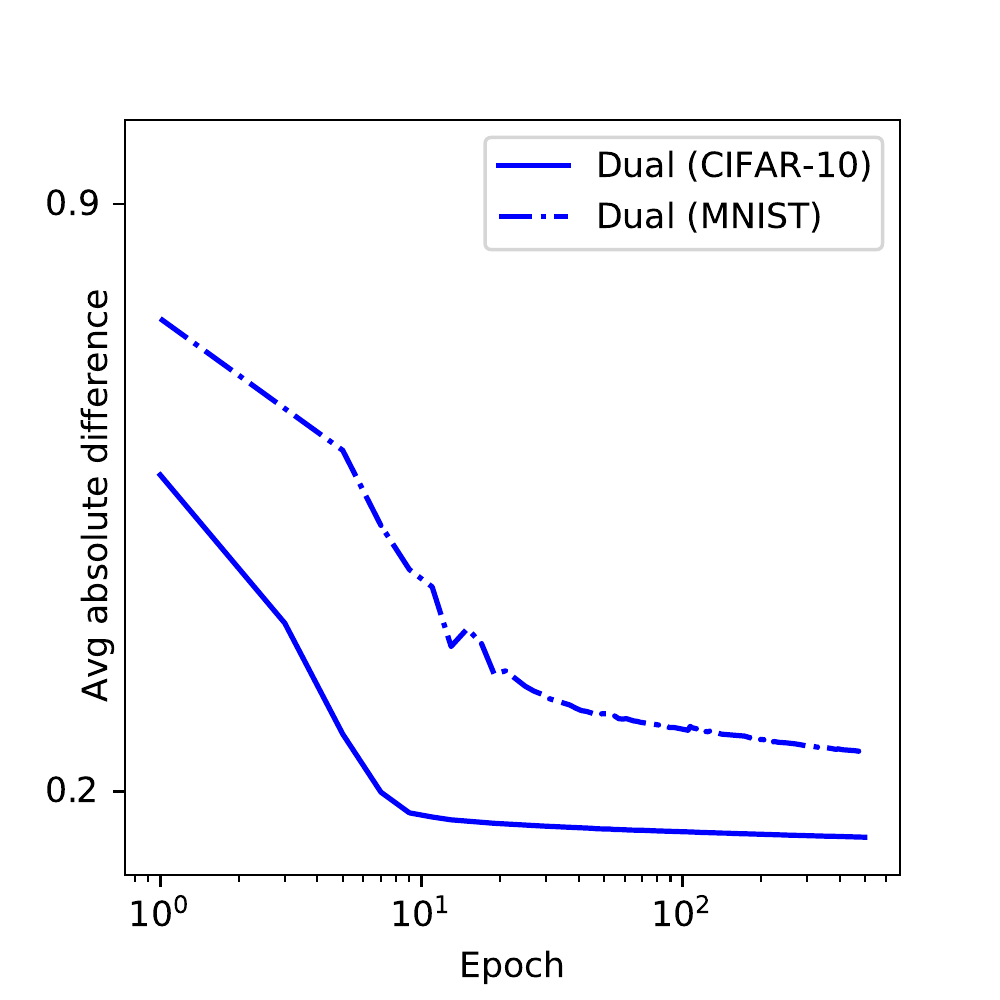}
		\subcaption{\scriptsize Self-duality }
		\label{fig:selfdual}
	\end{subfigure}}

	\caption{\small Experiments on real datasets without weight decay. We trained a ResNet18 on both MNIST and CIFAR10 datasets. The $x$-axis in the figures are set to have $\log(\log(t))$ scales and the $y$-axis in the figures are set to have $\log$ scales.} 
	\label{fig: 2}
\end{figure}

{To evaluate our theory, we trained the ResNet18 \citep{he2016deep} on both MNIST \citep{lecun1998gradient} and CIFAR-10 \citep{krizhevsky2009learning} datasets  without weight decay, and tracked how the last layer features and classifiers converge to neural collapse solutions. The results are plotted in \Cref{fig: 2}. Here we reported the following metrics to measure the level of neural collapse:
\begin{enumerate}
    \item The variation of both feature norm ({i.e.}, $\text{Std}(\|\bar{\h}_k-\bar{\h}\|)/\text{Avg}(\|\bar{\h}_k-\bar{\h}\|)$) and classifier norm in the last layer ({i.e.}, $\text{Std}(\|\bar{\w}_k\|)/\text{Avg}(\|\bar{\w}_k\|)$).
    \item Within-class variation for last layer features ({i.e.}, $\text{Avg}(\|{\h}_{k,i}-{\h}_k\|)/\text{Avg}(\|{\h}_{k,i}-\bar{\h}\|)$).
    \item Average cosine between last layer features ({i.e.}, $\text{Avg}(|\cos(\bar{\h}_{k}, \bar{\h}_{k'})+1/(K-1)|)$) and that of last layer classifiers ({i.e.}, $\text{Avg}(|\cos(\bar{\w}_{k}, \bar{\w}_{k'})+1/(K-1)|)$).
    \item Self-duality distance between features and classifiers corresponding to the same class in the last layer. ({i.e.}, $\text{Avg}(|(\bar{\h}_{k} - \bar{\h})/\|\bar{\h}_{k}-\bar{\h}\| - \bar{\w}_{k}/\|\bar{\w}_{k}\||)$)
\end{enumerate}
Here a smaller value of each metric indicates a closer state to a neural collapse solution. Specifically, the within-class variation measures (NC1) ,the variation of norms measures and cosine between pairs measure (NC2), the self-duality measures (NC3), and (NC4) has been shown to be a corollary of (NC1)-(NC3) \citep{papyan2020prevalence}. More experiment results under various settings and training details can be found in \Cref{Appendix: empirical}. As shown in the \Cref{fig: 2}, all of these metric decreases along the training epochs. These results reveal strong evidence for the emergence of neural collapse in the unconstrained setting and provide sound support for our theory. }


\section{Conclusion and Discussion}

\subsection{Conclusion}

To understand the implicit bias of neural features from gradient descent training, we built a connection between max-margin implicit bias and the neural collapse phenomenon and studied the ULPM in this study. We proved that the gradient flow of the ULPM converges to the KKT point of a minimum-norm separation problem, where the global optimum satisfies the neural collapse conditions. Although the ULPM is non-convex, we show that ULPM has a benign landscape where all the stationary points are strict saddle points in the tangent space, except for the global neural collapse solution. Our study helps to demystify the neural collapse phenomenon, which sheds light on the generalization and robustness during the terminal phase of training deep networks in classification problems.

\subsection{Relationship with Other Results on Neural Collapse}
\label{section:compare}
Theoretical analysis of neural collapse was first provided by \cite{lu2020neural,wojtowytsch2020emergence,fang2021layer}, who showed that the neural collapse solution is the only global minimum of the simplified non-convex objective function. In particular, \cite{wojtowytsch2020emergence,lu2020neural} studied a continuous integral form of the loss function and showed that the features learned should have a uniform distribution on the sphere. A more realistic discrete setting was studied in \cite{fang2021layer}, where the constraint is on the entire feature matrix rather than individual features. Our result utilizes the implicit bias of the cross-entropy loss function to remove the feature norm constraint, which is not practical in real applications.

Although the global optimum can be fully characterized by neural collapse conditions, the ULPM objective is still highly non-convex. Regarding optimization, \cite{mixon2020neural,poggio2020explicit,han2021neural} analyzed the unconstrained feature model with $\ell_2$ loss and established convergence results for collapsed features for gradient descent. However, they fail to generalize to the more practical cross-entropy loss functions used in classification tasks. The analysis relies highly on the $\ell_2$ loss to obtain a closed-form gradient flow and still requires some additional approximation to guarantee global convergence.

The most relevant study is a \textbf{concurrent} work \citep{zhu2021geometric}, which provides a landscape analysis of the regularized unconstrained feature model. \citet{zhu2021geometric} turns the feature norm constraint in \citet{fang2021layer} into feature norm regularization and still preserves the neural collapse global optimum. At the same time, it shows that the modified regularized objective shares a benign landscape, where all the critical points are strict saddles except for the global one. Although our study and \citet{zhu2021geometric} discover similar landscape results, we believe our characterization remains closer to the real algorithms since we do not introduce any constraints or regularization on the feature norm following the conventional setting in realist training. The regularization of features introduced in \citet{zhu2021geometric} is still different from weight decay regularization \citep{krogh1992simple}. However, weight decay on homogeneous neural networks is equivalent to gradient descent with scaling step size on an unregularized objective \citep{li2019exponential,zhang2018deep}. Moreover, neural networks are found to perform well in the absence of weight decay, which highlights the importance of implicit regularization \citep{zhang2021understanding}. As a result, instead of explicit feature norm constraint/regularization, in this study, we consider implicit regularization brought by the gradient flow on cross-entropy. We show that implicit regularization is sufficient to lead the dynamics to converge to the neural collapse solution without the explicit regularization.


\section*{Acknowledgments}
We are grateful to Qing Qu and X.Y. Han for helpful discussions and feedback on an early version of the manuscript. Wenlong Ji is partially supported by the elite undergraduate training program of the School of Mathematical Sciences at Peking University. Yiping Lu is supported by the Stanford Interdisciplinary Graduate Fellowship (SIGF).

\bibliographystyle{iclr2022_conference}
\bibliography{nc.bib}



\appendix
\newpage

\section{Elements of optimization}
In this section, we introduce some basic definitions and theory about optimization. In the following discussion, we consider a standard form inequality constrained optimization problem:
\begin{equation}
\begin{aligned}
\label{standard form constrained optimization problem}
\min _{x\in \mathbb{R}^d} & f(x) \\
\text { s.t. } & g_i(x)\leq 0 , \quad i\in[n].
\end{aligned}
\end{equation}
In addition, we assume all of those functions $f$ and $g_i$ are twice differentiable. A point $x\in \mathbb{R}^d$ is said to be feasible if and only if it satisfies all of the constraints in (\ref{standard form constrained optimization problem}), {i.e.,} $g_i(x)\leq0 , \quad i\in[n]$. And the Lagrangian of problem (\ref{standard form constrained optimization problem}) is defined as following:
\begin{equation*}
	L(x,\lambda) = f(x)+\sum_{i=1}^n \lambda_{i}g_i(x).
\end{equation*}
\subsection{Karush Kuhn Tucker Conditions}
Now let's first introduce the definition of Karush Kuhn Tucker (KKT) point and approximate KKT point. Here we follows the definition of $(\epsilon,\delta)$-KKT point as in \citet{lyu2019gradient}.
\begin{dfn}[Definition of KKT point]
	\label{KKT condition}
	A feasible point is said to be KKT point of problem (\ref{standard form constrained optimization problem}) if there exist $\lambda_1,\lambda_2,\cdots,\lambda_n\geq 0$ such that the following Karush Kuhn Tucker (KKT) conditions hold:
	\begin{enumerate}
		\item $\nabla f(x)+\sum_{i=1}^{n}\lambda_{i}\nabla g_i(x)=0$
		\item $\lambda_i g_i(x)= 0 ,\quad i\in[n]$
	\end{enumerate}
\end{dfn}
\begin{dfn}[Definition of $(\epsilon,\delta)$-KKT point]
	\label{approximate KKT condition}
	For any $ \epsilon,\delta>0$, a feasible point of (\ref{standard form constrained optimization problem}) is said to be $(\epsilon,\delta)$-KKT point of problem (\ref{standard form constrained optimization problem}) if there exist $\lambda_1,\lambda_2,\cdots,\lambda_n\geq 0$ such that:
	\begin{enumerate}
		\item $||\nabla f(x)+\sum_{i=1}^{n}\lambda_{i}\nabla g_i(x)||\leq\epsilon$
		\item $\lambda_i g_i(x)\geq -\delta ,\quad i\in[n]$
	\end{enumerate}
\end{dfn}
Generally speaking, KKT conditions might not be necessary for global optimality. We need some additional regular conditions to make it necessary. For example, as shown in \citet{Dutta2013ApproximateKP} we can require the problem to satisfy the following Mangasarian-Fromovitz constraint qualification (MFCQ):
\begin{dfn}[Mangasarian-Fromovitz constraint qualification (MFCQ) ]
\label{MCFQ}
	For a feasible point $x$ of (\ref{standard form constrained optimization problem}), problem (\ref{standard form constrained optimization problem}) is said to satisfy (MFCQ) at $x$ if there exist a vector $v\in\mathbb{R}^d$ such that:
	\begin{equation}
		\langle\nabla_xg_i(x),v\rangle>0, \quad i\in[n].
	\end{equation}
\label{def: MCFQ}
\end{dfn}
Moreover, when MFCQ holds we can build a connection between approximate KKT point and KKT point, see detailed proof in \citet{Dutta2013ApproximateKP}:
\begin{thm}[Relationship between Approximate KKT point and KKT point] 
\label{relationship KKT}
Let $\left\{x_{k} \in \mathbb{R}^{d}: k \in \mathbb{N}\right\}$ be a sequence of feasible points of (\ref{standard form constrained optimization problem}) , $\left\{\epsilon_{k}>0: k \in \mathbb{N}\right\}$ and $\left\{\delta_{k}>0: k \in \mathbb{N}\right\}$ be two sequences of real numbers such that $x_{k}$ is an $\left(\epsilon_{k}, \delta_{k}\right)$-KKT point for every $k$, and $\epsilon_{k} \rightarrow 0, \delta_{k} \rightarrow 0 .$ If $x_{k} \rightarrow x$ as $k \rightarrow+\infty$. If MFCQ (\ref{def: MCFQ}) holds
	at $x$, then $x$ is a KKT point of $(P)$.
\end{thm}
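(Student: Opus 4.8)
The plan is a standard compactness-and-normalization argument on the sequence of Lagrange multipliers, with MFCQ supplying the one nontrivial estimate (boundedness of the multipliers). First I would unpack the hypotheses: for each $k$ there exist $\lambda^{(k)}_1,\dots,\lambda^{(k)}_n\ge 0$ with $\|\nabla f(x_k)+\sum_{i=1}^n\lambda^{(k)}_i\nabla g_i(x_k)\|\le\epsilon_k$ and $\lambda^{(k)}_i g_i(x_k)\ge-\delta_k$ for all $i$. Since each $x_k$ is feasible and the $g_i$ are continuous, $g_i(x)=\lim_k g_i(x_k)\le 0$, so the limit point $x$ is feasible as well; this will be needed at the very end when checking complementary slackness.

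The crux is to show that $\{(\lambda^{(k)}_1,\dots,\lambda^{(k)}_n)\}_k$ is bounded in $\mathbb{R}^n$; this is the step I expect to be the main obstacle, since the rest is routine limit-taking. I would argue by contradiction: suppose boundedness fails and pass to a subsequence along which $R_k:=\sum_i\lambda^{(k)}_i\to\infty$. Put $\mu^{(k)}_i:=\lambda^{(k)}_i/R_k$; as $\mu^{(k)}$ lies in the probability simplex, pass to a further subsequence with $\mu^{(k)}\to\mu$, where $\mu\ge 0$ and $\sum_i\mu_i=1$, hence $\mu\ne 0$. Dividing the approximate-stationarity inequality by $R_k$ and sending $k\to\infty$ kills the $\nabla f(x_k)/R_k$ term (the gradients stay bounded near $x$ while $R_k\to\infty$) and, by continuity of the $\nabla g_i$, yields $\sum_{i=1}^n\mu_i\nabla g_i(x)=0$. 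Next, for any constraint inactive at $x$, i.e. $g_i(x)<0$, the bound $\lambda^{(k)}_i g_i(x_k)\ge-\delta_k$ with $g_i(x_k)\to g_i(x)<0$ gives $\lambda^{(k)}_i\le\delta_k/(-g_i(x_k))\to 0$, so $\mu_i=0$. Thus $\mu$ is supported on the active set $I(x)=\{i:g_i(x)=0\}$, is nonnegative, and is nonzero. Pairing $\sum_i\mu_i\nabla g_i(x)=0$ with the MFCQ vector $v$ of Definition \ref{def: MCFQ} gives $0=\sum_{i\in I(x)}\mu_i\langle\nabla g_i(x),v\rangle$, a sum of nonnegative terms each carrying a strictly positive factor $\langle\nabla g_i(x),v\rangle$; therefore every $\mu_i=0$, contradicting $\sum_i\mu_i=1$. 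Hence $\{\lambda^{(k)}\}_k$ is bounded.

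Finally, with boundedness in hand I would extract a subsequence $\lambda^{(k)}\to\lambda$ with $\lambda\ge 0$. Letting $k\to\infty$ in $\|\nabla f(x_k)+\sum_i\lambda^{(k)}_i\nabla g_i(x_k)\|\le\epsilon_k$, and using $\epsilon_k\to 0$ together with continuity of $\nabla f$ and the $\nabla g_i$, gives $\nabla f(x)+\sum_i\lambda_i\nabla g_i(x)=0$. Letting $k\to\infty$ in $\lambda^{(k)}_i g_i(x_k)\ge-\delta_k$ gives $\lambda_i g_i(x)\ge 0$; combined with $\lambda_i\ge 0$ and $g_i(x)\le 0$ this forces $\lambda_i g_i(x)=0$ for every $i$. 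Thus $x$, together with the multipliers $\lambda$, satisfies the KKT conditions of Definition \ref{KKT condition}, which is the claim. I would also note that only the restriction of MFCQ to the active constraints $I(x)$ is actually used, which is the usual form of the condition.
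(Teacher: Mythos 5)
Your proof is correct, but it cannot really be compared to the paper's own proof because the paper does not supply one: Theorem~\ref{relationship KKT} is stated with a pointer to \citet{Dutta2013ApproximateKP} and no argument is given. What you wrote is essentially the standard proof from that line of work: normalize the multipliers by their sum $R_k$, extract a limit $\mu$ in the probability simplex, use the $\delta_k$ bound to kill the components of $\mu$ on inactive constraints, pair $\sum_i\mu_i\nabla g_i(x)=0$ with the MFCQ direction to force $\mu=0$ and contradict $\sum_i\mu_i=1$, then pass to the limit in the bounded case and read off stationarity and complementary slackness. Each step checks out; in particular, the inactive-constraint estimate $\lambda^{(k)}_i\le\delta_k/(-g_i(x_k))\to 0$ and the final squeeze $\lambda_i g_i(x)\ge 0$ together with $\lambda_i\ge 0$, $g_i(x)\le 0$ are exactly right. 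Your closing observation is also worth keeping: the argument only uses MFCQ on the active set $I(x)$, whereas Definition~\ref{def: MCFQ} as written asks for $\langle\nabla g_i(x),v\rangle>0$ for every $i\in[n]$; your proof works under either reading, since all that matters is that the active gradients have a common strict sign when paired with $v$.
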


\section{Omitted proofs from Section 3.1}
Recall our ULPM problem:
\begin{equation*}
\label{P1 ap}
\min_{\W,\H} ~ \mathcal{L}(\W,\H) = -\sum_{k=1}^{K}\sum_{i=1}^n \log\left( \frac{\exp(\w_k^\top \h_{k,i})}{\sum_{j=1}^K\exp(\w_j^\top \h_{k,i})} \right).
\end{equation*}
Let's review some basic notations defined in the main body. Let $s_{k,i,j} = \w_{k}^\top \h_{k,i}-\w_{j}^\top \h_{k,i},\forall k\in[K], i \in[n],j\in[K]$, the margin of a single feature $\h_{k,i}$ is defined to be $q_{k,i}(\W,\H):= \min_{j\not = k}s_{k,i,j} =  \w_{k}^\top \h_{k,i}-\max_{j\not=k}\w_{j}^\top \h_{k,i}$. We define the margin of entire dataset as $q_{min} = q_{\min }(\W,\H)=\min _{k\in[1,K],i\in[1,n]} q_{k,i}(\W,\H).$ We first prove Theorem \ref{thm: NC as Max margin} in the mainbody.
\begin{proof}[Proof of Theorem \ref{thm: NC as Max margin}]
	First we can find that the margin will not change if we minus a vector $a$ for all $w_j$, so if we denote the mean of classifier $\tilde{\w_i}=\w_i-\frac{1}{K}\sum_{i=1}^K\w_i$ and then we have $\w_{k}^\top \h_{k,i}-\max_{j\not=k}\w_{j}^\top \h_{k,i} = \tilde{\w}_{k}^\top \h_{k,i}-\max_{j\not=k}\tilde{\w}_{j}^\top \h_{k,i}\geq q_{min}(\W,\H) $, that is:
	\begin{equation}
	\label{margin inequality}
	\tilde{\w}_{k}^\top \h_{k,i}-\tilde{\w}_{j}^\top \h_{k,i}\geq q_{min}(\W,\H) ,\forall j\not =  k\in[K],i\in [n].
	\end{equation}
	Note that $\sum_{j=1}^K \tilde{\w}_j^\top \h_{k,i} = 0$ then sum this inequality over $j$ we have:
	\begin{equation*}
	(K-1) \tilde{\w}_{k}^\top \h_{k,i} - \sum_{j\not=k}\tilde{\w}_{j}^\top \h_{k,i}=K\tilde{\w}_{k}^\top \h_{k,i} \geq (K-1)q_{min}(\W,\H) ,\forall k\in[K],i\in [n].
	\end{equation*}
	By Cauchy inequality, we have:
	\begin{equation}
	\label{cauchy}
	\frac{1}{2}(\frac{1}{\sqrt{n}}||\tilde{\w}_k||_2^2+\sqrt{n}||\h_{k,i}||_2^2)\geq \tilde{\w}_{k}^\top \h_{k,i}\geq \frac{K-1}{K}q_{min}(\W,\H) .
	\end{equation}
	Sum (\ref{cauchy}) over k and i we have:
	\begin{equation}
	\label{finequal}
	\frac{1}{2}\sqrt{n}(||\tilde{\W}||_F^2+||\H||_F^2)\geq n(K-1)q_{min}(\W,\H) .
	\end{equation}
	On the other hand, we know that:
	\begin{equation*}
	||\tilde{\W}||_F^2 = \sum_{i=1}^K||\w_i-\frac{1}{K}\sum_{i=1}^K\w_i||_2^2 \leq \sum_{i=1}^K||\w_i||_2^2 = ||\W||_F^2.
	\end{equation*}
	Then we can conclude that:
	\begin{equation*}
	    q_{\min}(\W,\H)\le\frac{\|\W\|_F^2+\|\H\|_F^2}{2(K-1)\sqrt{n}}
	\end{equation*}
	as desired. When the equality holds, first we have $||\tilde{\W}||_F^2 =||\W||_F^2 $ which is equivalent to $\frac{1}{K}\sum_{i=1}^Kw_i = 0,\tilde{w}_i=w_i$.  
	Take it back into (\ref{finequal}), then we must have all of the equality holds in (\ref{cauchy}) and (\ref{margin inequality}), which give us:
	\begin{equation*}
	\w_k = \sqrt{n} \h_{k,i}, ||\w_k||_2^2 = n||\h_{k,i}||_2^2 = \frac{||\W||_F^2+||\H||_F^2}{2K}.
	\end{equation*}
	Take this into (\ref{margin inequality}) we have:
	\begin{equation*}
	\h_{k,i} = \h_{k,i'},\h_{k,i}^\top \h_{j,i'} = \w_k^\top \w_j = -\frac{||\W||_F^2+||\H||_F^2}{2K(K-1)\sqrt{n}},  
	\end{equation*}
	which implies neural collapse conditions.
\end{proof}
Now let's turn to the training dynamics and prove Theorem \ref{thm3.1}. Before starting the proof, we need to introduce some additional notations. Since the $\W\text{ and }\H$ are all optimization variables here, we denote $\theta=vec(\W,\H)$ as the whole parameter for simplicity and all of previous function can be defined on $\theta$ by matching the corresponding parameters. Denote $\rho = \|\theta\|$ as the norm of $\theta$ and $\tilde{\gamma} = \frac{-\log(e^{\mathcal{L(\theta)}}-1)}{\rho^2}$. Now we can state our first lemma to show how training dynamics of gradient flow on ULPM objective (\ref{P1 ap}) is related to a KKT point of (\ref{SVM}).

\begin{lem}
\label{lem:eps delta}
If there exist a time $t_0$ such that $\mathcal{L}(\theta(t_0))<\log2$, then for any $t>t_0$ $\tilde{\theta}:=\theta / q_{\min }(\theta)^{1 / 2}$
	is an $(\epsilon,\delta)$  - approximate KKT point of the following minimum-norm separation problem. More precisely, we have 
	$$\epsilon = \sqrt{\frac{2(1-\beta(t))}{\tilde{\gamma}(t)}}, \delta = \frac{K^2(K-1)n}{2e\tilde{\gamma}(t)q_{min}(t)},$$
	where:
	$$\beta =\langle\frac{\theta}{||\theta||_2}, \frac{d\theta}{dt}/||\frac{d\theta}{dt}||_2\rangle $$
	is the angle between $\theta$ and its corresponding gradient.
\end{lem}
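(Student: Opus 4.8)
The plan is to follow the template of Lyu--Li for proving that gradient flow on homogeneous objectives produces approximate KKT points of the associated margin-maximization problem, specialized to the ULPM objective, which is $4$-homogeneous jointly in $\theta = \vec(\W,\H)$ (each summand $\exp(\w_j^\top\h_{k,i}-\w_k^\top\h_{k,i})$ is degree-$2$ in $\theta$, and we work with its logarithm). First I would record the basic homogeneity facts: writing $g_{k,i,j}(\theta) = s_{k,i,j}(\theta) = \w_k^\top\h_{k,i}-\w_j^\top\h_{k,i}$, each $g_{k,i,j}$ is $2$-homogeneous, so $\langle\nabla g_{k,i,j}(\theta),\theta\rangle = 2 g_{k,i,j}(\theta)$, and hence $q_{\min}(\theta)$ is $2$-homogeneous as well. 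This means the normalized point $\tilde\theta = \theta/q_{\min}(\theta)^{1/2}$ is feasible for (\ref{SVM}): indeed $g_{k,i,j}(\tilde\theta) = g_{k,i,j}(\theta)/q_{\min}(\theta) \ge 1$ for all constraints, with equality on the active set where $q_{k,i}(\theta)$ attains $q_{\min}(\theta)$.

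Next I would set up the candidate multipliers. Since $\mathcal{L}(\theta) = \sum_{k,i}\log(1+\sum_{j\neq k}e^{-g_{k,i,j}(\theta)})$, the gradient is $\nabla\mathcal{L}(\theta) = -\sum_{k,i,j\neq k} p_{k,i,j}\,\nabla g_{k,i,j}(\theta)$ where $p_{k,i,j} = e^{-g_{k,i,j}(\theta)}/(1+\sum_{j'\neq k}e^{-g_{k,i,j'}(\theta)}) \in (0,1)$ are the softmax weights. The natural choice is $\lambda_{k,i,j} = q_{\min}(\theta)\, p_{k,i,j}/(\text{normalizer})$, scaled so that $\sum\lambda_{k,i,j}\nabla g_{k,i,j}(\tilde\theta)$ approximately cancels $\nabla(\tfrac12\|\tilde\theta\|^2) = \tilde\theta$. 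The key algebraic identity driving the whole argument is that at a point where the gradient flow has run for a while, $-\nabla\mathcal{L}(\theta)$ points almost radially: $\beta(t) = \langle \theta/\|\theta\|, \dot\theta/\|\dot\theta\|\rangle$ measures exactly this misalignment, and $\beta \to 1$ is known to hold along gradient flow on homogeneous losses (this is where the $t_0$ assumption $\mathcal{L}(\theta(t_0))<\log 2$ enters — it guarantees $q_{\min}>0$ so the loss is in the regime where the Lyu--Li monotonicity of the normalized margin and $\beta\to1$ arguments apply). Using $\langle\nabla\mathcal{L}(\theta),\theta\rangle = -2\sum_{k,i,j}p_{k,i,j}g_{k,i,j}(\theta)$ together with the definition of $\tilde\gamma = -\log(e^{\mathcal{L}(\theta)}-1)/\rho^2$ lets me quantify the radial component, and the stationarity residual $\|\nabla f(\tilde\theta) + \sum\lambda\nabla g(\tilde\theta)\|$ gets controlled by the \emph{tangential} component of $\nabla\mathcal{L}$, which is $\|\dot\theta\|\sqrt{1-\beta^2}$ up to normalization — yielding $\epsilon = \sqrt{2(1-\beta(t))/\tilde\gamma(t)}$ after carefully tracking the homogeneity-degree factors (a factor of $2$ from $2$-homogeneity of the $g$'s, a factor from the $4$-homogeneity of $\mathcal{L}$).

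For the complementary-slackness bound I would estimate $\lambda_{k,i,j}\, g_{k,i,j}(\tilde\theta) \ge -\delta$: since $\lambda_{k,i,j}\ge 0$ and $g_{k,i,j}(\tilde\theta)\ge 1 > 0$, actually the product is nonnegative on its face, but the subtlety is that we want the \emph{$(\epsilon,\delta)$-KKT} notion where we also need $\lambda_{k,i,j}g_{k,i,j} \le$ something, or rather we need to bound how far the inactive constraints are from being tight against their multiplier mass; the bound $\delta = K^2(K-1)n/(2e\tilde\gamma q_{\min})$ should come from bounding each $p_{k,i,j} \le e^{-g_{k,i,j}} \le e^{-1}\cdot(\text{something})$ on inactive constraints via the elementary inequality $xe^{-x}\le e^{-1}$, multiplied by the total count $K(K-1)n$ of constraints and the scaling factor relating $\lambda$ to $p$ (which involves $q_{\min}$, $\tilde\gamma$, and a $K$ from the classifier-mean centering). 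I expect the main obstacle to be the bookkeeping: getting the homogeneity-degree constants exactly right so the stated closed forms for $\epsilon$ and $\delta$ emerge, and correctly invoking the Lyu--Li machinery (monotone normalized margin, $\beta\to1$, boundedness away from $0$) in the $4$-homogeneous rather than the generic $L$-homogeneous setting — in particular verifying that the smoothed margin $\log(-\log(e^{\mathcal{L}}-1))$ used implicitly in the definition of $\tilde\gamma$ has the right relationship to $q_{\min}$ so that the algebra closes.
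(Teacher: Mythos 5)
Your overall strategy matches the paper's: construct Lagrange multipliers $\lambda_{k,i,j}$ proportional to the softmax weights $p_{k,i,j} = e^{-s_{k,i,j}}/(1+\sum_{l\neq k}e^{-s_{k,i,l}})$, control the stationarity residual via the alignment $\beta$ between $\theta$ and $\dot\theta$ together with the relation $\tilde\gamma \le q_{\min}/\rho^2$, and control complementary slackness via $xe^{-x}\le 1/e$ and a count of the $K(K-1)n$ constraints. This is exactly the Lyu--Li template specialized to the bilinear ULPM, which is what the paper does.

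However, there is a concrete error that would derail the bookkeeping you say is the main obstacle: the ULPM is \emph{not} $4$-homogeneous. The constraint functions $s_{k,i,j}(\W,\H)=\w_k^\top\h_{k,i}-\w_j^\top\h_{k,i}$ are $2$-homogeneous in $\theta=\mathrm{vec}(\W,\H)$, so $\langle\nabla s_{k,i,j},\theta\rangle = 2\,s_{k,i,j}$, and the loss $\mathcal{L}$ itself is not homogeneous at all (it is the exponent, not the exponential, that is degree $2$). In the Lyu--Li framework this is the $L=2$ case. If you carry through $L=4$ you will get $\langle\nabla s,\theta\rangle = 4s$, the radial component $\tfrac12\tfrac{d\rho^2}{dt} = 4\sum p_{k,i,j}s_{k,i,j}$ instead of $2\sum p_{k,i,j}s_{k,i,j}$, and the constants in both $\epsilon$ and $\delta$ come out off by factors of $2$ — the claimed closed forms will not emerge. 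Secondly, your $\delta$ argument is hand-waved: the precise multiplier is $\lambda_{k,i,j} = \tfrac{\rho}{\|g\|_2}\,p_{k,i,j}$ (with $g=\dot\theta$), and the key lower bound $\|g\|_2 \ge \tfrac{2}{\rho}\tfrac{q_{\min}}{K}e^{-q_{\min}}$ (from $\|g\|\ge\langle g,\hat\theta\rangle$, $p_{k,i,j}\ge e^{-s}/K$, and keeping only the minimal-margin term) is what produces the $K$ in the numerator before you apply $xe^{-x}\le 1/e$ to the sum $\sum e^{q_{\min}-s}(s-q_{\min})$; without that chain you cannot obtain the stated $\delta = K^2(K-1)n/(2e\tilde\gamma q_{\min})$. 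Also note the $\epsilon$ bound comes most cleanly from $\|\tilde\theta-\sum\lambda\nabla s(\tilde\theta)\|^2 = \tfrac{\rho^2}{q_{\min}}\|\hat\theta-\hat g\|^2 = \tfrac{\rho^2}{q_{\min}}(2-2\beta)\le \tfrac{2(1-\beta)}{\tilde\gamma}$, not from $\sqrt{1-\beta^2}$. Finally, a smaller point: you do not need $\beta\to 1$ for this lemma — that limit is established separately and invoked only in the proof of the convergence theorem; here you only need $\mathcal{L}<\log 2\Rightarrow q_{\min}>0$ to make the normalization and the bounds well-defined.
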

\begin{proof}
The training dynamics is given by gradient flow:
	\begin{equation*}
		\frac{d\theta}{dt} = -\frac{\partial \mathcal{L}(\theta)}{\partial \theta}.
	\end{equation*}
	Then by the chain rule we have:
	\begin{equation}
	\label{training dynamics}
		-\frac{d\mathcal{L}(\theta)}{dt} = -\frac{\partial \mathcal{L}}{\partial \theta}\frac{d\theta}{dt} = \left\|\frac{d \theta}{d t}\right\|^{2}.
	\end{equation}
	It indicates that the loss function $\mathcal{L}$ is monotonically decreasing. If $\mathcal{L}(\theta(t_0))<\log 2$, we have $\mathcal{L}(\theta(t))<\log 2,\forall t>t_0$. On the other hand, note that
	\begin{equation}
	\label{dl and qmin}
		\mathcal{L}(\theta(t)) = \sum_{k=1}^K\sum_{i=1}^{n} \log(1+\sum_{j \neq k}e^{-s_{k,i,j}(t)}) \geq \log(1+\exp(-q_{min}(t))),
	\end{equation}
	which gives us $q_{min}(t)>0,\forall t>t_0$.\\
	
	Let $g = \frac{d\theta}{dt}$, note that we can rewrite the ULPM objective function (\ref{P1 ap}) as $\mathcal{L}(\theta) = \sum_{k=1}^K\sum_{i=1}^n \log(1+\sum_{j \neq k}e^{-s_{k,i,j}})$. By the chain rule and the gradient flow equation we have $$g = -\frac{\partial\mathcal{L}}{\partial\theta} \sum_{k=1}^K\sum_{i=1}^n\sum_{j \neq k} \frac{e^{-s_{k,i,j}}}{1+\sum_{l \neq k} e^{-s_{k,i,l}}}g_{k,i,j},$$ where $g_{k,i,j}$ is the gradient of $s_{k,i,j}$, i.e. $g_{k,i,j} = \nabla_\theta s_{k,i,j}(\theta)$. Now let $\tilde{g}_{k,i,j} = {g_{k,i,j}}/{q_{min}^{1/2}}=\nabla_\theta s_{k,i,j}(\tilde{\theta})$ and construct $\lambda_{k,i,j} =\frac{\rho}{||g||_2} \frac{e^{-s_{k,i,j}}}{1+\sum_{l \neq k} e^{-s_{k,i,l}}}$, we only need to show:
	\begin{equation}
	\label{eps}
	||\tilde{\theta}-\sum_{k=1}^K\sum_{i=1}^n\sum_{j \neq k}\lambda_{k,i,j}\tilde{g}_{k,i,j}||_2^2\leq \frac{1-\beta}{\tilde{\gamma}},
	\end{equation}
	\begin{equation}
	\label{delta}
	\sum_{k=1}^K\sum_{i=1}^n\sum_{j \neq k} \lambda_{k,i,j}(s_{k,i,j}(\tilde{\theta})-1)\leq \frac{K^2(K-1)n}{2eq_{min}\tilde{\gamma}}.
	\end{equation}
	To prove (\ref{eps}), we only need to compute (Recall that $\tilde{\theta} = \theta/q_{min}(\theta)^{1/2}$):
	$$
	||\tilde{\theta}-\sum_{k=1}^K\sum_{i=1}^n\sum_{j \neq y_{n}}\lambda_{k,i,j}\tilde{g}_{k,i,j}||_2^2 = \frac{\rho^2}{q_{min}}||\frac{\theta}{||\theta||_2}-\frac{g}{||g||_2}||_2^2 = \frac{\rho^2}{q_{min}}(2-2\beta).
	$$
	Note that:
	\begin{equation}
	    \label{epsilon1}
	    \tilde{\gamma} = \frac{-\log(e^{\mathcal{L}(\theta)}-1)}{\rho^2},\quad \mathcal{L}(\theta) = \sum_{n=1}^N \log(1+\sum_{j \neq y_{n}}e^{-s_{nj}}) \geq \log(1+\exp(-q_{min})).
	\end{equation}
	Then we have the following inequality:
	\begin{equation}
	\label{gamma qmin rho2}
		\tilde{\gamma}\leq\frac{q_{min}}{\rho^2}.
	\end{equation}
	Take this back into (\ref{epsilon1}) we have (\ref{eps}) as desired.\\
	
	To prove (\ref{delta}), first by our construction:
	\begin{equation}
	\label{compute the delta term}
		\sum_{k=1}^K\sum_{i=1}^n\sum_{j \neq k} \lambda_{k,i,j}(s_{k,i,j}(\tilde{\theta})-1) = \frac{\rho}{q_{min}||g||_2}\sum_{k=1}^K\sum_{i=1}^n\sum_{j \neq k}\frac{e^{-s_{k,i,j}}}{1+\sum_{l \neq k} e^{-s_{k,i,l}}}(s_{k,i,j}-q_{min}).
	\end{equation}
	Note that $||g||_2\geq \left\langle g,\frac{\theta}{||\theta||_2}\right\rangle = \frac{1}{\rho}\left\langle g,\theta\right\rangle$ and $\left\langle g_{k,i,j},\theta\right\rangle = 2s_{k,i,j}$ since $s_{k,i,j} = \w_k^\top\h_{k,i}-\w_j^\top\h_{k,i}$, we have:
	\begin{equation}
	\begin{aligned}
	\label{g and theta}
	||g||_2\geq\frac{1}{\rho}\left\langle g,\theta\right\rangle =& \frac{1}{\rho}\sum_{k=1}^K\sum_{i=1}^n\sum_{j \neq k} \frac{e^{-s_{k,i,j}}}{1+\sum_{l \neq k} e^{-s_{k,i,l}}}\left\langle g_{k,i,j},\theta\right\rangle\\
	= &\frac{2}{\rho}\sum_{k=1}^K\sum_{i=1}^n\sum_{j \neq k} \frac{e^{-s_{k,i,j}}}{1+\sum_{l \neq k} e^{-s_{k,i,l}}}s_{k,i,j}\\
	\geq &\frac{2}{\rho}\frac{q_{min}}{K}\sum_{k=1}^K\sum_{i=1}^n\sum_{j \neq k}e^{-s_{k,i,j}} \quad(\text{since }s_{k,i,j}\geq q_{min}>0 \text{ and }e^{-s_{k,i,l}}\leq 1) \\
	\geq&\frac{2}{\rho}\frac{q_{min}}{K}e^{-q_{min}}.
	\end{aligned}
	\end{equation}
	Take this inequality back into the (\ref{compute the delta term}) we have:
	\begin{equation*}
	\begin{aligned}
	\sum_{k=1}^K\sum_{i=1}^n\sum_{j \neq k} \lambda_{k,i,j}(s_{k,i,j}(\tilde{\theta})-1) &\leq \frac{K\rho^2}{2q_{min}^2}\sum_{k=1}^K\sum_{i=1}^n\sum_{j \neq k}\frac{e^{q_{min}-s_{k,i,j}}}{1+\sum_{l \neq k} e^{-s_{k,i,l}}}(s_{k,i,j}-q_{min})\\
	&\leq\frac{K\rho^2}{2q_{min}^2}\sum_{k=1}^K\sum_{i=1}^n\sum_{j \neq k}{e^{q_{min}-s_{k,i,j}}} (s_{k,i,j}-q_{min})\\
	&\leq\frac{K}{2q_{min}\tilde{\gamma}}\sum_{k=1}^K\sum_{i=1}^n\sum_{j \neq k}{e^{q_{min}-s_{k,i,j}}} (s_{k,i,j}-q_{min})\\
	&\leq\frac{K^2(K-1)n}{2eq_{min}\tilde{\gamma}}.
	\end{aligned}		
	\end{equation*}
	Where the last inequality is obtained from the fact $xe^{-x}\leq\frac{1}{e},\forall x>0$, which can be proved by some elementary calculus.
\end{proof}
Based on Lemma \ref{lem:eps delta}, we have shown that the $(\W,\H)$ will be a $(\epsilon,\delta)$-KKT point, if we can show $(\epsilon,\delta)$ converge to zero, then by Theorem \ref{relationship KKT} we know the limit point will be along the direction of a KKT point. Ignoring the constant term, we only have to show how $\tilde{\gamma}(t),\beta(t)$ and $q_{min}(t)$ evolve along time. Now we provide the following lemmas to illustrate their dynamics. The first lemma aims at proving that the norms of parameter $\rho(t)$ and $\tilde{\gamma}(t)$ are monotonically increasing.
\begin{lem}
\label{lem:rho and gamma}
	If there exist $t_0$ such that $\mathcal{L}(\theta(t_0))<\log2$, then for any $ t>t_0$ we have:
	\begin{equation}
	\label{drho and dgamma}
		\frac{d\rho^2}{dt}>0,\frac{d\tilde{\gamma}}{dt}\geq 0.
	\end{equation}
\end{lem}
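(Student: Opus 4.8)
The plan is to differentiate $\rho^{2}$ and $\tilde\gamma$ along the flow and reduce each inequality to an elementary one. For $\rho^{2}$: since $\dot\theta=-\nabla\mathcal L(\theta)$ we have $\frac{d\rho^{2}}{dt}=2\langle\theta,\dot\theta\rangle=-2\langle\theta,\nabla\mathcal L(\theta)\rangle$. The structural point is that each $s_{k,i,j}(\theta)=\w_{k}^{\top}\h_{k,i}-\w_{j}^{\top}\h_{k,i}$ is bilinear, hence $2$-homogeneous in $\theta=\mathrm{vec}(\W,\H)$, so Euler's identity gives $\langle\theta,\nabla_{\theta}s_{k,i,j}\rangle=2s_{k,i,j}$; differentiating $\mathcal L=\sum_{k,i}\log\!\big(1+\sum_{j\neq k}e^{-s_{k,i,j}}\big)$ by the chain rule then yields
\begin{equation*}
-\langle\theta,\nabla\mathcal L\rangle \;=\; 2\sum_{k=1}^{K}\sum_{i=1}^{n}\sum_{j\neq k}\frac{e^{-s_{k,i,j}}}{1+\sum_{l\neq k}e^{-s_{k,i,l}}}\,s_{k,i,j}.
\end{equation*}
Since $\mathcal L$ is non-increasing along the flow ($\dot{\mathcal L}=-\|\nabla\mathcal L\|^{2}\le 0$) and $\mathcal L(\theta(t_{0}))<\log 2$, we get $\mathcal L(\theta(t))<\log 2$ for all $t>t_{0}$, which forces $q_{\min}(\theta(t))>0$ (as noted after Theorem~\ref{thm3.1}); hence every $s_{k,i,j}>0$, every summand above is strictly positive, and $\frac{d\rho^{2}}{dt}>0$ (in particular $\nabla\mathcal L\neq 0$).

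For $\tilde\gamma$, set $\phi(\theta):=-\log(e^{\mathcal L(\theta)}-1)$, so $\tilde\gamma=\phi/\rho^{2}$, and $\mathcal L<\log 2$ gives $0<e^{\mathcal L}-1<1$ and $\phi>0$. Then $\frac{d}{dt}\log\tilde\gamma=\frac{\dot\phi}{\phi}-\frac{1}{\rho^{2}}\frac{d\rho^{2}}{dt}$, so it suffices to prove $\dot\phi\,\rho^{2}\ge\phi\,\frac{d\rho^{2}}{dt}$. The chain rule with $\dot{\mathcal L}=-\|\nabla\mathcal L\|^{2}$ gives $\dot\phi=\frac{e^{\mathcal L}}{e^{\mathcal L}-1}\|\nabla\mathcal L\|^{2}$, and Cauchy--Schwarz gives $\|\nabla\mathcal L\|^{2}\rho^{2}\ge\langle\theta,-\nabla\mathcal L\rangle^{2}$; since $\langle\theta,-\nabla\mathcal L\rangle=\tfrac12\frac{d\rho^{2}}{dt}>0$, dividing through reduces the claim to the pointwise inequality
\begin{equation*}
\langle\theta,-\nabla\mathcal L\rangle \;\ge\; 2\big(1-e^{-\mathcal L}\big)\,\phi ,
\end{equation*}
which is the cross-entropy counterpart of the bound $\sum_{n}e^{-q_{n}}q_{n}\ge\mathcal L\log(1/\mathcal L)$ that drives the exponential-loss analysis of \citet{lyu2019gradient}.

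To prove this pointwise inequality I would argue sample by sample. Write $x_{k,i}:=\sum_{j\neq k}e^{-s_{k,i,j}}$, so that $e^{\mathcal L}=\prod_{k,i}(1+x_{k,i})<2$ forces $x_{k,i}\in(0,1)$, and put $w_{k,i}:=\log(1+x_{k,i})>0$, so $\sum_{k,i}w_{k,i}=\mathcal L$. The log-sum (Gibbs) inequality $-\sum_{j}t_{j}\log t_{j}\ge-(\sum_{j}t_{j})\log(\sum_{j}t_{j})$ with $t_{j}=e^{-s_{k,i,j}}$ gives $\sum_{j\neq k}e^{-s_{k,i,j}}s_{k,i,j}\ge x_{k,i}\log(1/x_{k,i})$; substituting $x_{k,i}=e^{w_{k,i}}-1$ into the displayed formula for $-\langle\theta,\nabla\mathcal L\rangle$ turns this into
\begin{equation*}
\langle\theta,-\nabla\mathcal L\rangle \;\ge\; 2\sum_{k,i}\psi(w_{k,i}),\qquad \psi(w):=\big(1-e^{-w}\big)\log\frac{1}{e^{w}-1},
\end{equation*}
while $2(1-e^{-\mathcal L})\phi=2\,\psi\big(\sum_{k,i}w_{k,i}\big)$ identically. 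Hence it remains to show $\sum_{k,i}\psi(w_{k,i})\ge\psi\big(\sum_{k,i}w_{k,i}\big)$, i.e.\ that $\psi$ is superadditive on $(0,\log 2)$; since $\psi(0^{+})=0$ and $\psi>0$ there, this follows once a routine one-variable computation confirms $\psi$ is concave on $(0,\log 2)$. I expect this last step to be the main obstacle: it is where the per-sample bounds must be stitched together without slack, and where the hypothesis $\mathcal L<\log 2$ — equivalently $1-e^{-\mathcal L}<\tfrac12$, and in particular $w_{k,i}<\log 2$ for every $(k,i)$ — is essential to stay in the range where $\psi$ is concave.
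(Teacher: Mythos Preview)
Your argument for $\frac{d\rho^{2}}{dt}>0$ is the same as the paper's. For $\frac{d\tilde\gamma}{dt}\ge 0$ you and the paper both reduce, via Cauchy--Schwarz and the chain rule, to the key pointwise inequality
\[
\langle\theta,-\nabla\mathcal L\rangle \;\ge\; 2\,(1-e^{-\mathcal L})\,\phi,\qquad \phi=-\log(e^{\mathcal L}-1),
\]
but you prove it by a genuinely different route. The paper bounds every margin below by $q_{\min}$, then uses monotonicity of $x\mapsto (1-e^{-x})/x$ to aggregate across samples (their equation~(\ref{eq:computedrho2})), and finally invokes $\phi\le q_{\min}$ (from $\mathcal L\ge\log(1+e^{-q_{\min}})$). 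Your proof bypasses $q_{\min}$ entirely: the entropy/log-sum step gives a per-sample lower bound $\psi(w_{k,i})$, and then subadditivity of $\psi$ on $(0,\log 2)$---which indeed follows from the concavity you claim plus $\psi(0^{+})=0$---stitches the samples together. The concavity check is routine: with $y=e^{w}-1\in(0,1)$ one finds $\psi''(w)=-\dfrac{(1+y)/y-\log y}{1+y}<0$. So your approach is correct and arguably more intrinsic to the cross-entropy structure, while the paper's detour through $q_{\min}$ has the advantage of reusing a quantity that is already central to the surrounding analysis (Lemmas~\ref{lem:eps delta} and~\ref{lem:L and qmin}).

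One terminological slip: the inequality $\sum_{k,i}\psi(w_{k,i})\ge\psi\bigl(\sum_{k,i}w_{k,i}\bigr)$ is \emph{sub}additivity of $\psi$, not superadditivity; concave with $\psi(0)=0$ gives subadditivity, which is exactly what you need.
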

\begin{proof}
	We can disentangle the whole training dynamics into the following two parts:
	\begin{itemize}
	    \item the radial part: $v: = \hat{\theta}\hat{\theta}^\top \frac{d\theta}{dt}$, 
	    \item the tangent part: $u =(I-\hat{\theta}\hat{\theta}^\top) \frac{d\theta}{dt}$.
	\end{itemize}
	First analyze the radial part, by the chain rule:
	$||v||_2 = |\hat{\theta}^\top \frac{d\theta}{dt}| = |\frac{1}{\rho} \left\langle\theta,\frac{d\theta}{dt}\right\rangle| = |\frac{1}{\rho}\frac{1}{2}\frac{d \rho^2}{dt}|$. For $\frac{d \rho^2}{dt}$, we have the following equation:
	\begin{equation}
	\label{drho2}
	\frac{1}{2}\frac{d \rho^2}{dt} = \left\langle\theta,\frac{d\theta}{dt}\right\rangle  = {2}\sum_{k=1}^K\sum_{i=1}^n\sum_{j \neq k} \frac{e^{-s_{k,i,j}}}{1+\sum_{l \neq k} e^{-s_{k,i,l}}}s_{k,i,j},
	\end{equation}
	where the last equality holds by equation (\ref{g and theta}).Then when $t>t_0$, we have shown that $q_{min}(t)\ge 0$, combine this with the fact $s_{k,i,j}\geq q_{min}$ we obtain the first inequality in (\ref{drho and dgamma})
	\begin{equation}
	\begin{aligned}
	\label{rho increasing}
	\frac{1}{2}\frac{d \rho^2}{dt}= &{2}\sum_{k=1}^K\sum_{i=1}^n\sum_{j \neq k} \frac{e^{-s_{k,i,j}}}{1+\sum_{l \neq k} e^{-s_{k,i,l}}}s_{k,i,j}\\
	\geq &{2}\sum_{k=1}^K\sum_{i=1}^n\sum_{j \neq k} \frac{e^{-s_{k,i,j}}}{1+\sum_{l \neq k} e^{-s_{k,i,l}}}q_{min}\geq 0.
	\end{aligned}
	\end{equation}
	
	Next, we aim to prove the monotonicity of $\tilde{\gamma}(t)$, compute the derivative of $\tilde{\gamma}(t)$ we have:
	\begin{equation}
		\label{dgamma}
		\tilde{\gamma} = \frac{-\log(e^{\mathcal{L(\theta)}}-1)}{\rho^2},\quad \frac{d}{dt} \log \tilde{\gamma} = \frac{d}{dt} (\log(-\log(e^{\mathcal{L(\theta)}}-1))-2\log\rho),
	\end{equation}
	\begin{equation}
	\label{dgamma2}
		\frac{d}{dt} \log(-\log(e^{\mathcal{L(\theta)}}-1)) = \frac{1}{\log(e^{\mathcal{L(\theta)}}-1)}\frac{e^{\mathcal{L(\theta)}}}{e^{\mathcal{L(\theta)}}-1}\frac{d\mathcal{L(\theta)}}{dt}\geq-\frac{d\mathcal{L(\theta)}}{dt}\frac{1}{q_{min}}\frac{e^{\mathcal{L(\theta)}}}{e^{\mathcal{L(\theta)}}-1}.
	\end{equation}
	Recall that we have:
	\begin{equation}
	\begin{aligned}
	\label{eq:computedrho2}
	\frac{1}{2}\frac{d \rho^2}{dt}= &{2}\sum_{k=1}^K\sum_{i=1}^n\sum_{j \neq k} \frac{e^{-s_{k,i,j}}}{1+\sum_{l \neq k} e^{-s_{k,i,l}}}s_{k,i,j}\\
	\geq &{2}\sum_{k=1}^K\sum_{i=1}^n \frac{\sum_{j \neq k}e^{-s_{k,i,j}}}{1+\sum_{j \neq k} e^{-s_{k,i,j}}}q_{min}\\
	=&\sum_{k=1}^K\sum_{i=1}^n \log(1+\sum_{j \neq k} e^{-s_{k,i,j}})\frac{1}{\log(1+\sum_{j \neq k} e^{-s_{k,i,j}})}\frac{\sum_{j \neq k}e^{-s_{k,i,j}}}{1+\sum_{j \neq k} e^{-s_{k,i,j}}}q_{min}\\
	\geq&{2}\sum_{k=1}^K\sum_{i=1}^n \log(1+\sum_{j \neq k} e^{-s_{k,i,j}})\frac{e^{\mathcal{L(\theta)}}-1}{\mathcal{L(\theta)}e^{\mathcal{L(\theta)}}}q_{min}\\
	=&2\frac{e^{\mathcal{L(\theta)}}-1}{e^{\mathcal{L(\theta)}}}q_{min}.
	\end{aligned}
	\end{equation}
	
	Then second last line is because the definition of the loss function $\mathcal{L}(\theta) = \sum_{k=1}^K\sum_{i=1}^n\log(1+\sum_{j \neq k} e^{-s_{k,i,j}})\geq\log(1+\sum_{j \neq k} e^{-s_{k,i,j}})$
	and the monotonicity of $\frac{e^x-1}{xe^x}$ (in fact, $\frac{d}{dx}\frac{e^x-1}{xe^x}=\frac{e^{-x}\left(x-e^{x}+1\right)}{x^{2}}\leq0, \forall x>0$).

As a result, we notice that
\begin{equation*}
    \begin{aligned}
    \frac{1}{2}\frac{d}{dt}\log\tilde{\gamma}(t)\ge -\left(\frac{1}{2}
    \frac{d\rho^2}{dt}\right)^{-1}\frac{d\mathcal{L}}{dt}-\frac{d}{dt}\log \rho.
    \end{aligned}
\end{equation*}

	At the same time, we notice that $\|v\|^2=\frac{1}{\rho^{2}}\left(\frac{1}{2} \frac{d \rho^{2}}{d t}\right)^{2}=\frac{1}{2} \frac{d \rho^{2}}{d t} \cdot \frac{d}{d t} \log \rho$ on the one hand, and by the chain rule:
	$$
	\frac{d}{dt}\hat{\theta} = \frac{1}{\rho^{2}}\left(\rho \frac{d \theta}{d t}-\frac{d \rho}{d t} \theta\right)=\frac{1}{\rho^{2}}\left(\rho \frac{d \theta}{d t}-\left(\hat{\theta}^{\top} \frac{d \theta}{d t}\right) \theta\right)=\frac{u}{\rho}.
	$$
	Combine this with the radial term:
	$$
	-\frac{d \mathcal{L}}{d t}=\left\|\frac{d \theta}{d t}\right\|^{2}=\|v\|^{2}+\|u\|^{2}=\frac{1}{2} \frac{d \rho^{2}}{d t} \cdot \frac{d}{d t} \log \rho+\rho^{2}\left\|\frac{d \hat{\theta}}{d t}\right\|^{2}.
	$$
	Dividing $\frac{1}{2} \frac{d \rho^{2}}{d t}$ on both sides, we have
	\begin{equation}
		\begin{aligned}
		\label{dividing gradient}
		-\frac{d \mathcal{L}}{d t} \cdot\left(\frac{1}{2} \frac{d \rho^{2}}{d t}\right)^{-1}=\frac{d}{d t} \log \rho+\left(\frac{d}{d t} \log \rho\right)^{-1}\left\|\frac{d \hat{\theta}}{d t}\right\|^{2},
		\end{aligned}
	\end{equation}

	\begin{equation*}
		\frac{d}{d t} \log \rho+\left(\frac{d}{d t} \log \rho\right)^{-1}\left\|\frac{d \hat{\theta}}{d t}\right\|^{2}\leq -\frac{d\mathcal{L(\theta)}}{dt}\frac{1}{q_{min}}\frac{e^{\mathcal{L(\theta)}}}{2(e^{\mathcal{L(\theta)}}-1)}.
	\end{equation*}
	Now by equation (\ref{dgamma}) and (\ref{dgamma2}) we obtain:
	\begin{equation}
	\label{bound dgamma}
		\frac{1}{2}\frac{d}{dt}\log\tilde{\gamma}\geq-\frac{d\mathcal{L(\theta)}}{dt}\frac{1}{q_{min}}\frac{e^{\mathcal{L(\theta)}}}{2\left(e^{\mathcal{L(\theta)}}-1\right)}-\frac{d}{dt}\log\rho\geq \left(\frac{d}{d t} \log \rho\right)^{-1}\left\|\frac{d \hat{\theta}}{d t}\right\|^{2}.
	\end{equation}
	By (\ref{rho increasing}) we know the $\rho$ is monotonically increasing, we have $\frac{d}{dt}\log\rho>0$ and then we get the second inequality in (\ref{drho and dgamma}).
\end{proof}
Lemma \ref{lem:rho and gamma} gives us the monotonicity of $\tilde{\gamma}$, note that since the loss function $\mathcal{L}(\theta(t_0))<\log2$, we have $\tilde{\gamma}(t)\geq\tilde{\gamma}(t_0)>0$, then we can treat $\tilde{\gamma}(t)$ in Lemma \ref{lem:eps delta} as a positive constant. The remaining work is to show $q_{min}(t)$ grows to infinity and $\beta(t)\rightarrow 1$. To show $q_{min}(t)\rightarrow \infty$, it's equivalent to show $\mathcal{L}(t)\rightarrow 0$ and we have the following lemma: 
\begin{lem}
\label{lem:L and qmin}
	If there exist $t_0$ such that $\mathcal{L}(\theta(t_0))<\log2$, then $\mathcal{L}(\theta(t))\rightarrow0$ and $q_{min}(\theta(t))\rightarrow\infty$ as $t\rightarrow\infty$, moreover we have the convergence rate $\mathcal{L}(\theta(t))=O(1/t)$
\end{lem}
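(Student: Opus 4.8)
The plan is to turn the gradient flow into a self-contained differential inequality for the loss of the form $-\dot{\mathcal{L}}\ge c\,\mathcal{L}^{2}$ with a \emph{time-independent} constant $c>0$, and then integrate it. Write $\theta$ for the concatenation of $\W$ and $\H$, $g=d\theta/dt=-\nabla_{\theta}\mathcal{L}$, $\rho=\|\theta\|$, and $p_{k,i}=\sum_{j\neq k}e^{-s_{k,i,j}}$, so that $\mathcal{L}=\sum_{k,i}\log(1+p_{k,i})>0$. By (\ref{training dynamics}), $-\dot{\mathcal{L}}=\|g\|^{2}\ge0$, hence $\mathcal{L}$ is nonincreasing and $\mathcal{L}(t)\le\mathcal{L}(t_{0})<\log2$ for all $t\ge t_{0}$; this forces every $p_{k,i}(t)<1$, and by (\ref{dl and qmin}) it gives the uniform floor $q_{\min}(t)\ge-\log\!\big(e^{\mathcal{L}(t_{0})}-1\big)=:c_{0}>0$. (Existence of the flow for all $t\ge0$ is standard here.)

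First I would bound the decay of $\mathcal{L}$ from below. By Cauchy--Schwarz, $-\dot{\mathcal{L}}=\|g\|^{2}\ge\langle g,\theta\rangle^{2}/\rho^{2}$. Using the identity for $\langle g,\theta\rangle$ already recorded in (\ref{g and theta}), the bound $s_{k,i,j}\ge q_{\min}$, and the elementary chain $\tfrac{p}{1+p}\ge\tfrac{p}{2}\ge\tfrac{1}{2}\log(1+p)$ (the first step valid because $p_{k,i}<1$),
\begin{equation*}
\langle g,\theta\rangle=2\sum_{k,i}\sum_{j\neq k}\frac{e^{-s_{k,i,j}}}{1+\sum_{l\neq k}e^{-s_{k,i,l}}}\,s_{k,i,j}\;\ge\;2q_{\min}\sum_{k,i}\frac{p_{k,i}}{1+p_{k,i}}\;\ge\;q_{\min}\sum_{k,i}\log(1+p_{k,i})\;=\;q_{\min}\,\mathcal{L},
\end{equation*}
so $-\dot{\mathcal{L}}\ge q_{\min}^{2}\mathcal{L}^{2}/\rho^{2}$. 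Next I would show the coefficient $q_{\min}^{2}/\rho^{2}$ is bounded below by a positive constant: writing $q_{\min}^{2}/\rho^{2}=q_{\min}\cdot(q_{\min}/\rho^{2})$, inequality (\ref{gamma qmin rho2}) gives $q_{\min}/\rho^{2}\ge\tilde{\gamma}$, Lemma \ref{lem:rho and gamma} gives $\tilde{\gamma}(t)\ge\tilde{\gamma}(t_{0})>0$, and $q_{\min}\ge c_{0}$; hence $q_{\min}^{2}/\rho^{2}\ge c_{0}\tilde{\gamma}(t_{0})=:c>0$, and therefore $-\dot{\mathcal{L}}\ge c\,\mathcal{L}^{2}$ on $[t_{0},\infty)$.

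Finally I would integrate: since $\mathcal{L}>0$ this reads $\tfrac{d}{dt}(1/\mathcal{L})\ge c$, so $1/\mathcal{L}(t)\ge 1/\mathcal{L}(t_{0})+c(t-t_{0})$, i.e. $\mathcal{L}(t)\le\big(c(t-t_{0})+1/\mathcal{L}(t_{0})\big)^{-1}=O(1/t)$; in particular $\mathcal{L}(t)\to0$. Then (\ref{dl and qmin}) rearranges to $e^{\mathcal{L}(t)}-1\ge e^{-q_{\min}(t)}$, so $q_{\min}(t)\ge-\log\!\big(e^{\mathcal{L}(t)}-1\big)\to\infty$, which completes the proof.

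The main obstacle is keeping the constant $c$ from degrading as $t\to\infty$: this is exactly where the monotonicity of $\tilde{\gamma}$ (Lemma \ref{lem:rho and gamma}) is needed, to keep $q_{\min}/\rho^{2}$ away from $0$, together with the monotonicity of $\mathcal{L}$, which supplies the floor $c_{0}$ on $q_{\min}$; without both one only obtains a time-dependent rate and cannot conclude $\mathcal{L}\to0$. The remaining steps are routine given the earlier computations, with the single caveat that the hypothesis $\mathcal{L}(t_{0})<\log2$ is precisely what forces $p_{k,i}<1$ and thereby licenses the elementary inequality used in the first step.
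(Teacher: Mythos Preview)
Your argument is correct and follows the same skeleton as the paper: Cauchy--Schwarz to get $-\dot{\mathcal{L}}\ge\langle g,\theta\rangle^{2}/\rho^{2}$, a lower bound on $\langle g,\theta\rangle$ in terms of $q_{\min}$ and $\mathcal{L}$, reduction to $-\dot{\mathcal{L}}\ge c\,\mathcal{L}^{2}$ with $c>0$ independent of $t$, and integration. The two differences are minor but worth noting. First, for the elementary step the paper bounds $\sum_{k,i}\tfrac{p_{k,i}}{1+p_{k,i}}$ via the monotonicity of $x\mapsto\tfrac{e^{x}-1}{xe^{x}}$ (equation~(\ref{eq:computedrho2})), whereas you use the more direct chain $\tfrac{p}{1+p}\ge\tfrac{p}{2}\ge\tfrac12\log(1+p)$ valid for $p<1$; both give $\langle g,\theta\rangle\gtrsim q_{\min}\mathcal{L}$. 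Second, and more substantively, the paper's displayed line for $\tfrac{d\mathcal{L}}{dt}$ silently drops the $1/\rho^{2}$ factor coming from $\theta/\|\theta\|$, so its passage to (\ref{dloss}) is not quite right as written. Your proof handles this correctly: you keep $q_{\min}^{2}/\rho^{2}$ and then freeze it away from zero via $q_{\min}/\rho^{2}\ge\tilde{\gamma}(t)\ge\tilde{\gamma}(t_{0})$ (Lemma~\ref{lem:rho and gamma}) together with $q_{\min}\ge c_{0}$. This is exactly the right fix, and it is the reason your proof genuinely needs Lemma~\ref{lem:rho and gamma} where the paper's written argument does not appear to invoke it.
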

\begin{proof}
	By (\ref{training dynamics}) and (\ref{drho2}), the evolution of loss function $\mathcal{L(\theta)}$ can be written as:
	$$
	\frac{d\mathcal{L(\theta)}}{dt} = -\left\|\frac{d \theta}{d t}\right\|^{2}\leq -\left\langle \frac{d\theta}{dt},\frac{\theta}{||\theta||_2}\right\rangle^2 = -({2}\sum_{k=1}^K\sum_{i=1}^n\sum_{j \neq k} \frac{e^{-s_{k,i,j}}}{1+\sum_{l \neq k} e^{-s_{k,i,l}}}s_{k,i,j})^2.
	$$
	Combine it with (\ref{dl and qmin}),(\ref{rho increasing}) and (\ref{eq:computedrho2}) we have:
	\begin{equation*}
	    {2}\sum_{k=1}^K\sum_{i=1}^n\sum_{j \neq k} \frac{e^{-s_{k,i,j}}}{1+\sum_{l \neq k} e^{-s_{k,i,l}}}s_{k,i,j}\geq2\frac{e^{\mathcal{L(\theta)}}-1}{e^{\mathcal{L(\theta)}}} q_{min}\geq-2\frac{e^{\mathcal{L(\theta)}}-1}{e^{\mathcal{L(\theta)}}}\log(e^{\mathcal{L(\theta)}}-1),
	\end{equation*}
	which indicates:
	\begin{equation}
	\label{dloss}
	\frac{d\mathcal{L}(\theta(t))}{dt}\leq -4(\frac{e^{\mathcal{L}(\theta(t))}-1}{e^{\mathcal{L}(\theta(t))}}\log(e^{\mathcal{L}(\theta(t))}-1))^2.
	\end{equation}
	Since $0<\mathcal{L}(\theta(t))<\mathcal{L}(\theta(t_0) )<\log 2$, note that:
 	\begin{equation*}
 		\frac{d}{dx}(\frac{e^x-1}{x})>0,\lim_{x\rightarrow 0}\frac{e^x-1}{x}=1,
 	\end{equation*}
 	which implies:
 	\begin{equation}
 		\label{condition 1}
 		1<\frac{e^{\mathcal{L}(\theta(t))}-1}{\mathcal{L}(\theta(t))}<\frac{1}{\log(2)},
 	\end{equation}
 	On the other hand, we can find that:
 	\begin{equation}
 		\label{condition 2}
 		\log(e^{\mathcal{L}(\theta(t))}-1)<\log(e^{\mathcal{L}(\theta(t_0))}-1)<0,\quad 1<e^{\mathcal{L}(\theta(t))}<2
 	\end{equation}
 	Combine equation (\ref{condition 1}) and (\ref{condition 2}) together, one can conclude that there exist a constant $C>0$ such that for $t>t_0$:
 	\begin{equation*}
 		\frac{d\mathcal{L}(\theta(t))}{dt}\leq -C (\mathcal{L}(\theta(t)))^2
 	\end{equation*}
 	It further implies that:
 	\begin{equation*}
 		C<\frac{d}{dt}(\frac{1}{\mathcal{L}(\theta(t))}),
 	\end{equation*}
 	then integral on both side we have:
 	\begin{equation*}
 		C(t-t_0)<\frac{1}{\mathcal{L}(\theta(t))}-\frac{1}{\mathcal{L}(\theta(t_0))},
 	\end{equation*}
 	and
 	\begin{equation*}
 		\mathcal{L}(\theta(t))=O(1/t).
 	\end{equation*}
 	Thus we must have $\mathcal{L}(\theta(t))\rightarrow 0$ and combine this with $q_{min}\geq -\log(e^{\mathcal{L}(\theta(t))}-1)$ we know $q_{min}(\theta(t))\rightarrow\infty$ as desired. 
\end{proof}
To bound $\beta(t)$, we first need a useful lemma to bound the changes of the direction of $\theta$.
\begin{lem}
	\label{bound dthetahat}
	If there exist $t_0$ such that $\mathcal{L}(\theta(t_0))<\log2$, then for any $t>t_0$
	\begin{equation*}
		\left\|\frac{d \hat{\theta}}{d t}\right\|\leq\frac{1}{\tilde{\gamma}(t_0)}\frac{d}{dt}\log\rho.
	\end{equation*}
\end{lem}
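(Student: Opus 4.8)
The plan is to control $\|d\hat\theta/dt\|$ through the \emph{alignment} between the parameter $\theta$ and the gradient, and then to lower bound that alignment by $\tilde\gamma$. I would reuse the radial/tangential decomposition from the proof of Lemma~\ref{lem:rho and gamma}: write $g := d\theta/dt$, $v := \hat\theta\hat\theta^\top g$ (radial part), $u := (I-\hat\theta\hat\theta^\top)g$ (tangential part), so that $d\hat\theta/dt = u/\rho$, $\|v\| = d\rho/dt$, and hence $\|v\|/\rho = \frac{d}{dt}\log\rho$. Since $\frac{d\rho^2}{dt}>0$ for $t>t_0$ (Lemma~\ref{lem:rho and gamma}) we have $g\neq 0$, so $\beta(t):=\langle\hat\theta, g\rangle/\|g\|$ is well defined and positive; orthogonality of $u$ and $v$ gives $\|v\|=\beta\|g\|$ and $\|u\|=\|g\|\sqrt{1-\beta^2}$, hence $\|u\|/\|v\| = \sqrt{1-\beta^2}/\beta \le 1/\beta$, and therefore
\begin{equation*}
\Big\|\frac{d\hat\theta}{dt}\Big\| = \frac{\|u\|}{\rho} \le \frac{1}{\beta(t)}\,\frac{\|v\|}{\rho} = \frac{1}{\beta(t)}\,\frac{d}{dt}\log\rho .
\end{equation*}
It then suffices to prove $\beta(t)\ge\tilde\gamma(t_0)$ for all $t>t_0$.

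For the lower bound on $\beta(t)$ I would exploit the explicit form of the gradient flow, $g = \sum_{k,i,j} c_{k,i,j}\, g_{k,i,j}$ with nonnegative softmax coefficients $c_{k,i,j} = e^{-s_{k,i,j}}/(1+\sum_{l\neq k}e^{-s_{k,i,l}})$ and $g_{k,i,j} = \nabla_\theta s_{k,i,j}$, exactly as in the proof of Lemma~\ref{lem:eps delta}. Two elementary facts about the $g_{k,i,j}$ are needed. First, $\langle g_{k,i,j},\theta\rangle = 2 s_{k,i,j} \ge 2 q_{\min}(t) > 0$, the identity already used in the proof of Lemma~\ref{lem:eps delta} combined with $q_{\min}(t)>0$. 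Second, from $s_{k,i,j} = \w_k^\top\h_{k,i} - \w_j^\top\h_{k,i}$ one computes $\|g_{k,i,j}\|^2 = 2\|\h_{k,i}\|^2 + \|\w_k - \w_j\|^2 \le 2\|\H\|_F^2 + 2\|\W\|_F^2 = 2\rho^2$, so $\|g_{k,i,j}\| \le \sqrt2\,\rho$. Combining these with the triangle inequality $\|g\| \le \sum_{k,i,j} c_{k,i,j}\|g_{k,i,j}\| \le \sqrt2\,\rho\sum_{k,i,j} c_{k,i,j}$ and the positivity of the $c_{k,i,j}$ yields
\begin{equation*}
\beta(t) = \frac{\langle\theta, g\rangle}{\rho\|g\|} = \frac{2\sum_{k,i,j} c_{k,i,j}\, s_{k,i,j}}{\rho\|g\|} \ge \frac{2 q_{\min}(t)\sum_{k,i,j} c_{k,i,j}}{\sqrt2\,\rho^2\sum_{k,i,j} c_{k,i,j}} = \frac{\sqrt2\,q_{\min}(t)}{\rho^2}.
\end{equation*}

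To finish, I would combine this with the bound $\tilde\gamma(t) \le q_{\min}(t)/\rho^2$ from~(\ref{gamma qmin rho2}), giving $\beta(t) \ge \sqrt2\,\tilde\gamma(t)$, and then with the monotonicity of $\tilde\gamma$ from Lemma~\ref{lem:rho and gamma} and $\sqrt2\ge 1$ to get $\beta(t) \ge \sqrt2\,\tilde\gamma(t_0) \ge \tilde\gamma(t_0) > 0$; substituting into the first display proves the lemma (in fact with a spare factor of $\sqrt2$). I do not expect a genuine obstacle: the radial/tangential decomposition, the identity $\langle g_{k,i,j},\theta\rangle = 2s_{k,i,j}$, the positivity of $q_{\min}$, and the monotonicity of $\tilde\gamma$ are all already established; the only mildly delicate points are checking $g\neq 0$ so that $\beta$ is well defined and verifying the new one-line Frobenius-norm estimate $\|g_{k,i,j}\| \le \sqrt2\,\rho$.
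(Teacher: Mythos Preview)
Your proof is correct and follows essentially the same route as the paper: bound $\|g_{k,i,j}\|$ by a constant times $\rho$, use $\langle g_{k,i,j},\theta\rangle=2s_{k,i,j}\ge 2q_{\min}$ to compare $\|g\|$ with $\frac{d}{dt}\log\rho$, and conclude via $\tilde\gamma\le q_{\min}/\rho^2$ and the monotonicity of $\tilde\gamma$. Your detour through $\beta$ is only cosmetic---bounding $\|u\|/\|v\|\le 1/\beta$ and then $\|v\|=\beta\|g\|$ just recovers the paper's cruder step $\|u\|\le\|g\|$---and your sharper estimate $\|g_{k,i,j}\|\le\sqrt2\,\rho$ (the paper uses $2\rho$) accounts for the spare factor you noticed but is not needed for the stated bound.
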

\begin{proof}
	First we know that:
	\begin{equation*}
		\left\|\frac{d \hat{\theta}}{d t}\right\|=\frac{1}{\rho}\left\|(I-\hat{\theta}\hat{\theta}^\top)\frac{d\theta}{dt}\right\|\leq\frac{1}{\rho}\left\|\frac{d\theta}{dt}\right\|,
	\end{equation*}
	\begin{equation}
	\begin{aligned}
	\label{dtheta norm}
	\left\|\frac{d\theta}{dt}\right\| = \left\|\sum_{k=1}^K\sum_{i=1}^n\sum_{j \neq k} \frac{e^{-s_{k,i,j}}}{1+\sum_{l \neq k} e^{-s_{k,i,l}}}g_{k,i,j}\right\|\leq\sum_{k=1}^K\sum_{i=1}^n\sum_{j \neq k} \frac{e^{-s_{k,i,j}}}{1+\sum_{l \neq k} e^{-s_{k,i,l}}}\left\|g_{k,i,j}\right\|.
	\end{aligned}
	\end{equation}
	Recall our $g_{k,i,j} = \frac{\partial s_{k,i,j}}{\partial\theta}$ and $s_{k,i,j} = \w_{k}^\top\h_{k,i}-\w_{j}^\top\h_{k,i}$, we can find that $\left\|g_{k,i,j}\right\|\leq2\rho$. On the other hand, Combine it with (\ref{drho2}) and (\ref{gamma qmin rho2}) we have:
	\begin{equation*}
		\left\|\frac{d \hat{\theta}}{d t}\right\|\leq\frac{1}{\rho}\left\|\frac{d\theta}{dt}\right\|\leq \frac{1}{2q_{min}}\frac{d\rho^2}{dt}=\frac{\rho^2}{q_{min}}\frac{d}{dt}\log\rho\leq\frac{1}{\tilde{\gamma}}\frac{d}{dt}\log\rho\leq\frac{1}{\tilde{\gamma}(t_0)}\frac{d}{dt}\log\rho
	\end{equation*}
	as desired. Where the second inequality holds by multiple $\frac{\rho}{q_{min}}$ on the right hand of (\ref{dtheta norm}) and the formula of $\frac{d\rho^2}{dt}$ in equation (\ref{drho2}), and the last inequality holds since we have shown that $\tilde{\gamma}(t)$ is monotonically increasing in (\ref{lem:rho and gamma}) and $\tilde{\gamma}(t_0)>0$ since $\mathcal{L}(t_0)<\log2$.
\end{proof}
Let's turn back to $\beta$, though we can not show directly that it increase to one, we can find a sequence of time $\{t_m\}$ for each limit point such that $\beta(t_m)\rightarrow 1$
\begin{lem}
\label{lem:beta}
	If there exist $t_0$ such that $\mathcal{L}(\theta(t_0))<\log2$, then for every limit point $\bar{\theta}$ of $\{\hat{\theta}(t): t \geq 0\}$, there exists a sequence of time $\left\{t_{m}>0: m \in \mathbb{N}\right\}$ such that $t_{m} \rightarrow\infty, \hat{\theta}\left(t_{m}\right) \rightarrow \bar{\theta}$, and $\beta\left(t_{m}\right) \rightarrow 1$.
\end{lem}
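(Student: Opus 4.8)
The plan is to follow the strategy of \citet{lyu2019gradient}: reparametrize the flow by the logarithmic ``clock'' $\tau := \log\rho(t)$, prove a finite-energy bound $\int \|d\hat{\theta}/d\tau\|^2\, d\tau < \infty$ in this clock, observe that $\beta$ is an explicit decreasing function of the speed $\|d\hat{\theta}/d\tau\|$, and then extract from any prescribed limit point a subsequence of times along which both $\hat{\theta}\to\bar{\theta}$ and the speed vanishes.

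First I would set up the reparametrization. By Lemma~\ref{lem:rho and gamma}, $\frac{d\rho^2}{dt}>0$ for $t>t_0$, so $\tau(t)=\log\rho(t)$ is strictly increasing and invertible on $[t_0,\infty)$; and $\tau(t)\to\infty$ because $q_{\min}(\theta(t))\to\infty$ by Lemma~\ref{lem:L and qmin} while $\langle g_{k,i,j},\theta\rangle = 2s_{k,i,j}\le \|g_{k,i,j}\|\,\rho\le 2\rho^2$ gives $\rho^2\ge q_{\min}\to\infty$. Next, the finite-energy bound: combining $\tilde{\gamma}\le q_{\min}/\rho^2$ from (\ref{gamma qmin rho2}) with Theorem~\ref{thm: NC as Max margin}, $q_{\min}\le \rho^2/(2(K-1)\sqrt{n})$, shows $\tilde{\gamma}(t)\le \frac{1}{2(K-1)\sqrt{n}}$, while $\tilde{\gamma}(t)\ge\tilde{\gamma}(t_0)>0$ by Lemma~\ref{lem:rho and gamma}; hence $\log\tilde{\gamma}$ stays in a bounded interval. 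Integrating (\ref{bound dgamma}), which reads $\tfrac12\frac{d}{dt}\log\tilde{\gamma}\ge (\frac{d}{dt}\log\rho)^{-1}\|\dot{\hat{\theta}}\|^2$, and using the change of variables $d\tau=(\frac{d}{dt}\log\rho)\,dt$ (so that $(\frac{d}{dt}\log\rho)^{-1}\|\dot{\hat{\theta}}\|^2\,dt = \|d\hat{\theta}/d\tau\|^2\,d\tau$), I obtain
\[
\int_{\tau(t_0)}^{\infty} \Bigl\|\frac{d\hat{\theta}}{d\tau}\Bigr\|^2 d\tau \;\le\; \tfrac12\bigl(\textstyle\sup_t\log\tilde{\gamma}(t)-\log\tilde{\gamma}(t_0)\bigr)\;<\;\infty .
\]

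Second, I would record the identity linking this speed to $\beta$. With the radial/tangential split from the proof of Lemma~\ref{lem:rho and gamma} ($\dot{\theta}=v+u$, $v=\hat{\theta}\hat{\theta}^\top\dot{\theta}$, $u=(I-\hat{\theta}\hat{\theta}^\top)\dot{\theta}$, $\|v\|=\dot{\rho}$, $\|u\|=\rho\|\dot{\hat{\theta}}\|$, $\langle\theta,\dot{\theta}\rangle=\rho\dot{\rho}>0$), one has $\beta=\dot{\rho}/\|\dot{\theta}\| = \dot{\rho}/\sqrt{\dot{\rho}^2+\rho^2\|\dot{\hat{\theta}}\|^2}$; since $\frac{d\hat{\theta}}{d\tau}=\rho\dot{\hat{\theta}}/\dot{\rho}$, this gives
\[
\beta(t)\;=\;\Bigl(1+\bigl\|\tfrac{d\hat{\theta}}{d\tau}\bigr\|^2\Bigr)^{-1/2}.
\]
Thus $\beta(t_m)\to 1$ is exactly equivalent to $\|d\hat{\theta}/d\tau\|\to 0$ along $t_m$, and it suffices to produce, for a fixed limit point $\bar{\theta}$, times $\tau_m\to\infty$ (in the $\tau$ variable) with $\hat{\theta}(\tau_m)\to\bar{\theta}$ and $\|d\hat{\theta}/d\tau(\tau_m)\|\to 0$.

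Third comes the extraction, which I expect to be the main obstacle: producing a single subsequence that controls position and speed simultaneously. Fix $\bar{\theta}$ and pick $\sigma_k\to\infty$ with $\hat{\theta}(\sigma_k)\to\bar{\theta}$. The finite-energy bound makes the tails vanish, so $I_k:=\int_{\sigma_k}^{\sigma_k+1}\|d\hat{\theta}/d\tau\|^2\,d\tau\to 0$. By Cauchy--Schwarz, for every $\tau'\in[\sigma_k,\sigma_k+1]$ we have $\|\hat{\theta}(\tau')-\hat{\theta}(\sigma_k)\|\le \int_{\sigma_k}^{\tau'}\|d\hat{\theta}/d\tau\|\,d\tau\le \sqrt{I_k}$, so $\hat{\theta}$ stays within $\sqrt{I_k}+\|\hat{\theta}(\sigma_k)-\bar{\theta}\|=o(1)$ of $\bar{\theta}$ on the whole unit interval; and by the mean-value property there is $\tau_k\in[\sigma_k,\sigma_k+1]$ with $\|d\hat{\theta}/d\tau(\tau_k)\|^2\le I_k\to 0$. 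Mapping $\tau_k$ back through $\tau(\cdot)$ to a time $t_m$ yields $t_m\to\infty$, $\hat{\theta}(t_m)\to\bar{\theta}$, and $\beta(t_m)\to 1$ by the displayed identity, which is the claim. The points needing care are this joint position/speed selection on each unit interval (the interval length $1$ is arbitrary; any fixed length works), and verifying that the estimates from Lemmas~\ref{lem:rho and gamma}--\ref{bound dthetahat} are invoked only on $[t_0,\infty)$, where $\mathcal{L}<\log 2$ holds.
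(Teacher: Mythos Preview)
Your argument is correct and follows the same overall strategy as the paper (which in turn is the Lyu--Li template): use the inequality~(\ref{bound dgamma}) together with the boundedness of $\log\tilde{\gamma}$ to get a finite-energy estimate, then extract on short intervals a time where the tangential speed is small while keeping $\hat{\theta}$ close to the prescribed limit point. The execution differs in two pleasant ways. First, you make the reparametrization $\tau=\log\rho$ explicit and record the clean identity $\beta=(1+\|d\hat{\theta}/d\tau\|^{2})^{-1/2}$; the paper instead rewrites~(\ref{bound dgamma}) as $\frac{d}{dt}\log\tilde{\gamma}\ge 2(\beta^{-2}-1)\frac{d}{dt}\log\rho$ and applies the integral mean-value theorem to $(\beta^{-2}-1)$ against the weight $\frac{d}{dt}\log\rho$ on intervals of $\log\rho$-length $\epsilon_m$. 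Second, for the drift control the paper invokes Lemma~\ref{bound dthetahat} (the Lipschitz bound $\|d\hat{\theta}/d\tau\|\le 1/\tilde{\gamma}(t_0)$), whereas your Cauchy--Schwarz estimate $\|\hat{\theta}(\tau')-\hat{\theta}(\sigma_k)\|\le \sqrt{I_k}$ gets the same conclusion directly from the finite-energy bound, so Lemma~\ref{bound dthetahat} is not needed in your route. Your explicit upper bound $\tilde{\gamma}\le 1/(2(K-1)\sqrt{n})$ via Theorem~\ref{thm: NC as Max margin} is also sharper than the paper's one-line ``the right hand is bounded.'' In short: same idea, slightly more self-contained packaging on your side.
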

\begin{proof}
	Recall that in equation (\ref{bound dgamma}) we have shown that:
	\begin{equation}
	\label{bound dgamma1}
		\frac{d}{dt}\log\tilde{\gamma}\geq 2\left(\frac{d}{d t} \log \rho\right)^{-1}\left\|\frac{d \hat{\theta}}{d t}\right\|^{2}.
	\end{equation}
	Since $\frac{d}{dt}\log\rho=\frac{1}{\rho}\frac{d\rho}{dt}=\frac{1}{2\rho^2}\frac{d\rho^2}{dt} = \frac{1}{\rho^2}\left\langle\theta,\frac{d\theta}{dt}\right\rangle$ and:
	$$
	\frac{d\hat{\theta}}{dt} = \frac{d}{dt}\frac{\theta}{\|\theta\|}=\frac{1}{\rho^2}(\rho\frac{d\theta}{dt}-\frac{1}{\rho}\theta\theta^\top\frac{d\theta}{dt})=\frac{1}{\rho}(I-\hat{\theta}\hat{\theta}^\top)\frac{d\theta}{dt}.
	$$
	Plug them into (\ref{bound dgamma1}) we have:
	\begin{equation*}
		\frac{d}{dt}\log\tilde{\gamma}\geq 2\frac{\left\|\frac{d \theta}{d t}\right\|^{2}-\left\langle\hat{\theta},\frac{d\theta}{dt}\right\rangle^2}{\left\langle\hat{\theta},\frac{d\theta}{dt}\right\rangle^2}\frac{d}{dt}\log\rho=2(\beta^{-2}-1)\frac{d}{dt}\log\rho.
	\end{equation*}
	For any $t_2>t_1>t_0$, integrate both sides from time $t_2$ to $t_1$ we have:
	$\log\tilde{\gamma}(t_2)-\log\tilde{\gamma}(t_1)\geq 2\int_{t_1}^{t_2}(\beta(t)^{-2}-1)\frac{d}{dt}\log\rho dt$. By the continuity of $\beta$ we know there exist a time $t^*$ such that:
	\begin{equation}
	\begin{aligned}
	\label{bound beta}
	\log\tilde{\gamma}(t_2)-\log\tilde{\gamma}(t_1)&\geq 2\int_{t_1}^{t_2}(\beta(t)^{-2}-1)\frac{d}{dt}\log\rho dt\\
	&=2(\beta(t^*)^{-2}-1)\int_{t_1}^{t_2}\frac{d}{dt}\log\rho dt\\
	&=2(\beta(t^*)^{-2}-1)(\log\rho(t_2)-\log\rho(t_1)).
	\end{aligned}
	\end{equation}
	By (\ref{gamma qmin rho2}) we know that $\tilde{\gamma}\leq\frac{q_{min}}{\rho^2}$ and the right hand is bounded, and the $\tilde{\gamma}$ is monotonically increasing, then there exist $\tilde{\gamma}_{\infty}$ such that $\tilde{\gamma}(t)\uparrow\tilde{\gamma}_{\infty}$. \\
	
	Now we are ready to construct the sequence of $t_m$, first take a sequence of $\{\epsilon_{m}>0,m\in\mathbb{N}\}$ such that $\epsilon_{m}\rightarrow 0$. We construct $t_m$ by induction, suppose we have already find $t_{1}<t_{2}<\cdots<t_{m-1}$ satisfy our requirement, since $\bar{\theta}$ is a limit point of $\{\hat{\theta}(t):t>0\}$, then we can find a time $s_m$ such that:
	\begin{equation*}
		\|\hat{\theta}(s_m)-\bar{\theta}\|\leq\epsilon_m,\quad\log\frac{\tilde{\gamma}_{\infty}}{\tilde{\gamma}(s_m)}\leq\epsilon_{m}^3.
	\end{equation*}
	By the monotonicity and continuity of $\rho$ we can find a time $s_{m}'$ such that $\log\rho(s_{m}')-\log\rho(s_{m})\leq\epsilon_{m}$. Take $t_2=s_m',t_1=s_m$ in (\ref{bound beta}), there exist a time $t_{m}$ such that:
	\begin{equation}
	\label{condition beta}
		2(\beta(t_m)^{-2}-1)\leq\frac{\log\tilde{\gamma}(t_2)-\log\tilde{\gamma}(t_1)}{\log\rho(t_2)-\log\rho(t_1)}\leq\epsilon_{m}^2
	\end{equation}
	on the other hand, by Lemma \ref{bound dthetahat} we have:
	\begin{equation}
	\begin{aligned}
	\label{condition theta}
	\|\hat{\theta}(t_m)-\bar{\theta}\|\leq&\|\hat{\theta}(s_m)-\bar{\theta}\|+\|\hat{\theta}(s_m)-\hat{\theta}(t_m)\|\\
	\leq&\epsilon_m+\frac{1}{\tilde{\gamma}(t_0)}(\log\rho(t_m)-\log\rho(s_m))\leq(1+\frac{1}{\tilde{\gamma}(t_0)})\epsilon_{m}.
	\end{aligned}
	\end{equation}
	
	Note that $\left\langle \theta,\frac{d\theta}{dt}\right\rangle>0$, then by definition we know $\beta>0$. Combine equations (\ref{condition beta}) and (\ref{condition theta}) we have $\beta(t_m)\rightarrow 1$ and $\hat{\theta}(t_m)\rightarrow\bar{\theta}$ as desired.
\end{proof}
Now we are ready to prove Theorem \ref{thm3.1}:
\begin{proof}[Proof of Theorem \ref{thm3.1}]
    By Lemma \ref{lem:eps delta}, we know that once $t>t_0$, $(\W(t),\H(t))/q_{min}(\W(t),\H(t))$ is an $(\sqrt{\frac{2(1-\beta(t))}{\tilde{\gamma}(t)}}, \frac{K^2(K-1)n}{2\tilde{\gamma}(t)q_{min}(t)})$ -approximate KKT point. We have shown that $\tilde{\gamma}(t)>\tilde{\gamma}(t_0)>0$ in Lemma \ref{lem:rho and gamma}, $q_{min}\rightarrow\infty$ in Lemma \ref{lem:L and qmin} and from Lemma \ref{lem:beta} we know for any limit point $(\bar{\W},\bar{\H})$ of $\{(\hat{\H}(t),\hat{\W}(t)):=(\frac{\H(t)}{\sqrt{\|\W(t)\|_{F}^2+\|\H(t)\|_F^2}},\frac{\W(t)}{\sqrt{\|\W(t)\|_{F}^2+\|\H(t)\|_F^2}})\}$, there exists a sequence of time $\left\{t_{m}>0: m \in \mathbb{N}\right\}$ such that $t_m\rightarrow\infty , \beta(t_m)\rightarrow 1 \text{ and } (\hat{\H}(t_m),\hat{\W}(t_m))\rightarrow(\bar{\W},\bar{\H})$. Then $(\bar{\W},\bar{\H})$ is along the direction of a limit point of a sequence of $(\epsilon,\delta)$-approximate KKT point with $\epsilon,\delta\rightarrow0$. On the other hand, we can verify that the problem (\ref{SVM}) satisfies MCFQ (\ref{MCFQ}) by simply setting $v=\theta$, then:
    $$
    \langle\nabla s_{k,i,j},\theta\rangle = 2s_{k,i,j}\ge 0.
    $$
    Now by Theorem \ref{relationship KKT} we know $(\bar{\W},\bar{\H})$ is along the direction of a KKT point of problem (\ref{SVM})
\end{proof}
Theorem \ref{thm3.1} characterize the convergent behaviour of gradient flow, under separable conditions the limit point is along the direction of a KKT point of (\ref{SVM}), next we show that the global minimum of (\ref{SVM}) must satisfy neural collapse conditions by proving Corollary \ref{cor:global}:
\begin{proof}[Proof of Corollary \ref{cor:global}]
    Since we have shown that the problem (\ref{SVM}) satisfy MCFQ, then the KKT conditions are necessary for global optimality, we only need to show the global optimum satisfies neural collapse conditions. First the constraints in (\ref{SVM}) can be transformed to be a single constraint by the definition of margin:
	\begin{equation}
	\label{transform to single constraint}
		\quad \forall k\not=j\in[K],i\in [n],\quad \w_k^\top \h_{k,i}-\w_j^\top \h_{k,i} \geq 1.\Leftrightarrow q_{\min}(\W,\H)\ge 1.
	\end{equation} 
	Note that the margin is homogeneous:
	\begin{equation*}
		q_{\min}(\alpha\W,\alpha\H) = \alpha^2q_{\min}(\W,\H),\forall \alpha\in\mathbb{R}.
	\end{equation*}
	Then for any point $(\W,\H)$ satisfies $q_{\min}(\W,\H)>0$, after an appropriate scaling $\alpha$, $(\alpha\W,\alpha\H),\forall \alpha^2\ge {1}/{q_{\min}(\W,\H)}$ is feasible for (\ref{SVM}). Take optimum among all scaling factor $\alpha$ we know the minimum norm is attained if and only if $\alpha^2 = {1}/{q_{\min}(\W,\H)}$. And the optimum norm is:
	\begin{equation*}
		\frac{1}{2}||\alpha\W||_F^2+\frac{1}{2}||\alpha\H||_F^2 = \frac{1}{2q_{\min}(\W,\H)}(||\W||_F^2+||\H||_F^2).
	\end{equation*}
	Then by Theorem \ref{thm: NC as Max margin} we have:
	\begin{equation*}
		\frac{1}{2q_{\min}(\W,\H)}(||\W||_F^2+||\H||_F^2)\geq 2(K-1)\sqrt{n}.
	\end{equation*}
	And the global optimum is attained only when $(\W,\H)$ satisfies neural collapse conditions
\end{proof}

\section{Omitted proofs from Section 3.2}
\label{Appendix: sec 3.2}
To begin with, let's finish the computation in the motivating example (Example \ref{example:stationary}) 
\begin{proof}[Proof in Example \ref{example:stationary}]
Consider the case where $K=4,n=1$, let $(\W,\H)$ be the following point:
\begin{equation*}
    \W = \H = 
\left[
\begin{array}{cccc}
1 & -1 & 0 & 0 \\
-1 & 1 & 0 & 0 \\
0 & 0 & 1 & -1 \\
0 & 0 & -1 & 1 
\end{array}
\right] .
\end{equation*}
One can easily verify that this $(\W,\H)$ enables our model to classify all of the features perfectly. Further more, we can show it is along the direction of a KKT point of the minimum-norm separation problem (\ref{SVM}) by construct the Lagrangian multiplier $\Lambda = (\lambda_{ij})_{i,j=1}^K$ as following:
\begin{equation*}
    \Lambda =      \left[
\begin{array}{cccc}
0 & 0 & \frac{1}{2} & \frac{1}{2} \\
0 & 0 & \frac{1}{2} & \frac{1}{2} \\
\frac{1}{2} & \frac{1}{2} & 0 & 0 \\
\frac{1}{2} & \frac{1}{2} & 0 & 0 
\end{array}
\right].
\end{equation*}

To see this, just write down the corresponding Lagrangian (note that to make it to be a true KKT point of (\ref{SVM}), one needs to multiple $1/\sqrt{2}$ on $\W,\H$):
\begin{equation*}
    \mathcal{L}(\W,\H,\Lambda)=\frac{1}{4}\|\W\|_F^2+\frac{1}{4}\|\H\|_F^2-\sum_{i=1}^4\sum_{j\neq i}\lambda_{i,j}(\frac{1}{2}\w_i\h_i-\frac{1}{2}\w_j\h_i-1).
\end{equation*}

Simply take derivatives for $\W,\H$ and $\Lambda$ we can find it satisfies KKT conditions. On the other hand, the gradient of $(\W,\H)$ is
\begin{equation*}
    \nabla_{\W}\mathcal{L}(\W,\H) = \nabla_{\H}\mathcal{L}(\W,\H) = -\frac{2+2e^{-2}}{2+2e^{-2}+2e^{2}} 
\left[
\begin{array}{cccc}
1 & -1 & 0 & 0 \\
-1 & 1 & 0 & 0 \\
0 & 0 & 1 & -1 \\
0 & 0 & -1 & 1 
\end{array}
\right] .
\end{equation*}
We can find that the directions of gradient and the parameter align with each other ({i.e.}, $\W$ is parallel to $ \nabla_{\W}\mathcal{L}(\W,\H),$ and $\H $ is parallel to $ \nabla_{\H}\mathcal{L}(\W,\H)$), which implies that simple gradient descent may gets stuck in this direction and only grows the parameters' norm. However, if we construct:
\begin{equation*}
	\begin{aligned}
	\W' =&  \sqrt{\frac{1}{1+2\alpha^2}}
	\left[
	\begin{array}{cccc}
	1+\alpha & -1+\alpha & \alpha & \alpha \\
	-1+\alpha & 1+\alpha & \alpha & \alpha \\
	-\alpha & -\alpha & 1-\alpha & -1-\alpha \\
	-\alpha & -\alpha & -1-\alpha & 1-\alpha 
	\end{array}
	\right],\\
	\H' =&  \sqrt{\frac{1}{1+2\alpha^2}}
	\left[
	\begin{array}{cccc}
	1+\alpha & -1+\alpha & -\alpha & -\alpha \\
	-1+\alpha & 1+\alpha & -\alpha & -\alpha \\
	\alpha & \alpha & 1-\alpha & -1-\alpha \\
	\alpha & \alpha & -1-\alpha & 1-\alpha 
	\end{array}
	\right].
	\end{aligned}
	\end{equation*}
Note that $\|\W'\|_F = \|\W\|_F,\|\H'\|_F = \|\H\|_F$ and $||\W'-\W||_F^2+||\H'-\H||_F^2\rightarrow 0$ as $\alpha\rightarrow 0$. First we can compute:
	\begin{equation*}
	\W'\H'=  \frac{1}{1+2\alpha^2}
	\left[
	\begin{array}{cccc}
	2+4\alpha^2 & 4\alpha^2-2 & -4\alpha^2 & -4\alpha^2 \\
	4\alpha^2-2 & 2+4\alpha^2 & -4\alpha^2 & -4\alpha^2 \\
	-4\alpha^2 & -4\alpha^2 & 2+4\alpha^2 & 4\alpha^2-2 \\
	-4\alpha^2 & -4\alpha^2 & 4\alpha^2-2 & 2+4\alpha^2
	\end{array}
	\right],
	\end{equation*}
	
	and:
	\begin{equation*}
	\begin{aligned}
	\mathcal{L}(\W',\H') =& -4\log\frac{e^{2}}{e^{2}+e^{2\frac{2\alpha^2-1}{1+2\alpha^2}}+2e^{-2\frac{2\alpha^2}{1+2\alpha^2}}}.
	\end{aligned}
	\end{equation*}
	
	Our aim is to show that for any $ \epsilon>0$, there exist $\alpha$ such that $|\alpha|<\epsilon$ and $\mathcal{L}(\W',\H')<\mathcal{L}(\W,\H)$. By the formulation of $\mathcal{L}(\W',\H')$, it's sufficient to show that $f(\alpha)\triangleq e^{2\frac{2\alpha^2-1}{1+2\alpha^2}}+2e^{-2\frac{2\alpha^2}{1+2\alpha^2}}<f(0)$. Then take the derivative of $f(\alpha)$ we have:
	\begin{equation*}
		f'(\alpha) =
		e^{\frac{4 \alpha^{2}-2}{1+2 \alpha^{2}}}\left(\frac{8 \alpha}{1+2 \alpha^{2}}-\frac{8 \alpha\left(2 \alpha^{2}-1\right)}{\left(1+2 \alpha^{2}\right)^{2}}\right)+ 
		2 e^{-\frac{4\alpha^2}{1+2\alpha^2}}\left(\frac{16 \alpha^{3}}{\left(1+2 \alpha^{2}\right)^{2}}-\frac{8 \alpha}{1+2 \alpha^{2}}\right),
	\end{equation*} 
	\begin{equation*}
	\begin{aligned}
	f''(\alpha) &=
	e^{\frac{4 \alpha^{2}-2}{1+2 \alpha^{2}}}\left(\frac{8 \alpha}{1+2 \alpha^{2}}-\frac{8 \alpha\left(2 \alpha^{2}-1\right)}{\left(1+2 \alpha^{2}\right)^{2}}\right)^2+2 e^{-\frac{4\alpha^2}{1+2\alpha^2}}\left(\frac{16 \alpha^{3}}{\left(1+2 \alpha^{2}\right)^{2}}-\frac{8 \alpha}{1+2 \alpha^{2}}\right)^2\\
	&+ e^{\frac{4 \alpha^{2}-2}{1+2 \alpha^{2}}}\left(-\frac{64 \alpha^{2}}{\left(1+2 \alpha^{2}\right)^{2}}+\frac{64\left(2 \alpha^{2}-1\right) \alpha^{2}}{\left(1+2 \alpha^{2}\right)^{3}}+\frac{8}{1+2 \alpha^{2}}-\frac{8\left(2 \alpha^{2}-1\right)}{\left(1+2 \alpha^{2}\right)^{2}}\right)\\
	&+2 e^{-\frac{4\alpha^2}{1+2\alpha^2}}\left(\frac{80 \alpha^{2}}{\left(1+2 \alpha^{2}\right)^{2}}-\frac{8}{1+2 \alpha^{2}}-\frac{128 \alpha^{4}}{\left(1+2 \alpha^{2}\right)^{3}}\right).
	\end{aligned}
	\end{equation*}
	Now we can find that $f'(0)=0$ and $f''(0)=16(\frac{1}{e^2}-1)<0$. Since the function $f(\alpha)$ is continuously twice differentiable, we can conclude that for any $ \epsilon>0$, we can choose appropriate $\alpha$ such that:
\begin{equation*}
\begin{aligned}
&||\W'||_F^2=||\W||_F^2,||\H'||_F^2=||\H||_F^2,\\
&||\W'-\W||_F^2+||\H'-\H||_F^2<\epsilon, \mathcal{L}(\W',\H')<\mathcal{L}(\W,\H).
\end{aligned}
\end{equation*}

\end{proof}
Now we prove Theorem \ref{thm:optimal} by some similar strategies as used in proving Theorem \ref{thm: NC as Max margin}.

\begin{proof}[Proof of Theorem \ref{thm:optimal}]
	Again we rewrite the ULPM objective by introducing $s_{k,i,j} = \w_{k}^\top \h_{k,i}-\w_{j}^\top \h_{k,i}$:
	\begin{equation*}
		\mathcal{L}(\W,\H)=\sum_{k=1}^{K}\sum_{i=1}^n\log(1+\sum_{j \neq k}\exp(-s_{k,i,j})).
	\end{equation*}
	In addition, we can find that centralizing $\w_i$ does not change the value of $s_{k,i,j}$. Let $\tilde{\w}_i = \w_i-\frac{1}{K}\sum_{k=1}^K \w_k,\forall i\in[K]$, then $s_{k,i,j} = \w_{k}^\top \h_{k,i}-\w_{j}^\top \h_{k,i} = \tilde{\w}_{k}^\top \h_{k,i}-\tilde{\w}_{j}^\top \h_{k,i}$ and $\sum_{i=1}^K\tilde{\w}_{i}=0$. First by the strict convexity of $e^x$ and Jensen Inequality:
	\begin{equation}
		\begin{aligned}
		\label{Jensen1}
		\mathcal{L}(\W,\H)&\geq\sum_{k=1}^{K}\sum_{i=1}^n\log(1+(K-1)\exp(\frac{1}{K-1}\sum_{j \neq k}-s_{k,i,j}))\\
		&=\sum_{k=1}^{K}\sum_{i=1}^n\log(1+(K-1)\exp(-\frac{K\tilde{\w}_{k}^\top \h_{k,i}}{K-1})).
		\end{aligned}
	\end{equation}
	Where the last equality is obtained from:
	\begin{equation*}
		\sum_{j \neq k}s_{k,i,j} = \sum_{j \neq k}\tilde{\w}_{k}^\top \h_{k,i}-\tilde{\w}_{j}^\top \h_{k,i} = (K-1)\tilde{\w}_{k}^\top \h_{k,i}-\sum_{j \neq k}\tilde{\w}_{j}^\top \h_{k,i} = K\tilde{\w}_{k}^\top \h_{k,i}.
	\end{equation*}
	Now again by the strict convexity of $\log(1+(K-1)\exp(-x))$ and Jensen inequality, we have:
	\begin{equation}
	\begin{aligned}
	\label{Jensen2}
	\mathcal{L}(\W,\H)&\geq\sum_{k=1}^{K}\sum_{i=1}^n\log(1+(K-1)\exp(-\frac{K\tilde{\w}_{k}^\top \h_{k,i}}{K-1}))\\
	&\geq nK\log(1+(K-1)\exp(-\frac{1}{n(K-1)}\sum_{k=1}^{K}\sum_{i=1}^n\tilde{\w}_{k}^\top \h_{k,i}))\\
	&\geq nK\log(1+(K-1)\exp(-\frac{1}{2n(K-1)}\sum_{k=1}^{K}\sum_{i=1}^n\frac{1}{\sqrt{n}}\|\tilde{\w}_{k}\|^2+\sqrt{n}\|\h_{k,i}\|^2 ))\\
	&\geq nK\log(1+(K-1)\exp(-\frac{1}{2\sqrt{n}(K-1)}(\|\W\|_F^2+\|\H\|_F^2) )).
	\end{aligned}
	\end{equation}
	Where the last inequality holds since $\|\W\|_F^2=\sum_{k=1}^K\|\w_k\|^2\geq\sum_{k=1}^K\|\w_k\|^2-\frac{1}{K}\|\sum_{k=1}^{K}\w_k\|^2=\sum_{k=1}^K\|\tilde{\w}_k\|^2$.\\
	When all of the above inequality reduce to equality, we must have:
	\begin{enumerate}
		\item $\sum_{i=1}^K\w_i=0,\tilde{\w}_i=\w_i$ (the last inequality in (\ref{Jensen2}))
		\item $\w_k = \sqrt{n}\h_{k,i},\forall i\in[n]$ (the third inequality in (\ref{Jensen2}))
		\item $\|\w_k\| = \|\w_{k'}\|,\|\h_{k,i}\| = \|\h_{k',j}\|,\forall k,k'\in[K], i,j\in[n]$ (the second inequality in (\ref{Jensen2}))
		\item $s_{k,i,j} = \w_{k}^\top \h_{k,i}-\w_{j}^\top \h_{k,i}=\frac{K}{K-1}\w_{k}^\top \h_{k,i},\forall k,j\in[K], i\in[n]$ (the first inequality in (\ref{Jensen1}))
	\end{enumerate}
	These four conditions are exactly equivalent to neural collapse conditions and $\|\W\|_F = \|\H\|_F$ 
\end{proof}
The global optimality is not enough to illustrate how does gradient flow converges to neural collapse since there may exist some bad local minimum. We will provide the following second-order analysis to eliminate spurious local minimum. First, define the cross-entropy loss on a matrix $\Z\in\mathbb{R}^{K\times nK}$:
\begin{equation*}
	L(\Z) = \sum_{k=1}^{K}\sum_{i=1}^{n}-\log\frac{e^{z_{k,i,j}}}{\sum_{l=1}^{K}e^{z_{k,i,l}}},
\end{equation*}
where $z_{k,i,j}$ denote the $j$-th row and $[(k-1)K+i]$-th column elements of $\Z$. Then we have $\mathcal{L}(\W,\H) = L(\W\H)$ . Now compute the gradient of $L(Z)$ to each element:
\begin{equation*}
\begin{aligned}
&\frac{\partial L(\Z)}{\partial z_{k,i,k}} = -1+\frac{z_{k,i,k}}{\sum_{l=1}^{K}e^{z_{k,i,l}}}\\
&\frac{\partial L(\Z)}{\partial z_{k,i,j}} = \frac{z_{k,i,j}}{\sum_{l=1}^{K}e^{z_{k,i,l}}},\forall k\neq j.
\end{aligned}	
\end{equation*}
If $u\in\mathbb{R}^K$ satisfies $u^\top\nabla L(Z)=0$, denote $u_p$ as the maximum element of $u$, then we have:
\begin{equation}
\begin{aligned}
\label{eq: rank L}
0=u_p\frac{\partial L(\Z)}{\partial z_{p,i,p}}+\sum_{q\not=p}u_q\frac{\partial L(\Z)}{\partial z_{p,i,q}}=&u_p(-1+\frac{z_{p,i,p}}{\sum_{l=1}^{K}e^{z_{p,i,l}}})+\sum_{q\not=p}u_q\frac{z_{p,i,q}}{\sum_{l=1}^{K}e^{z_{p,i,l}}}\\
=&-\sum_{q\not=p}(u_p-u_q)\frac{z_{p,i,q}}{\sum_{l=1}^{K}e^{z_{p,i,l}}}\leq 0.
\end{aligned}
\end{equation}
Where the last inequality holds if and only if $u_q=u_p,\forall q\in[K]$. Which indicates that the rank of $\nabla L(Z)$ is $K-1$ and $u^\top\nabla L(Z)=0\Leftrightarrow u=\boldsymbol{1}$. Now we are ready to prove Theorem \ref{thm:saddle} in the main body.
\begin{proof}[Proof of Theorem \ref{thm:saddle}]
	First compute the gradient of ULPM objective (\ref{P1 ap}), by the chain rule we have:
	$$
	\nabla_{\W} \mathcal{L}(\W,\H)=\nabla L(\W\H)\H^\top,\nabla_{\H} \mathcal{L}(\W,\H)=\W^\top\nabla L(\W\H).
	$$
	If there exist a vector $(\Delta\W,\Delta\H)\in\mathcal{T}(\W,\H)$ such that $\langle \nabla_{\W} \mathcal{L}(\W,\H),\Delta\W\rangle+
	\langle \nabla_{\H} \mathcal{L}(\W,\H),\Delta\H\rangle\neq0$, moreover we can assume $\langle \nabla_{\W} \mathcal{L}(\W,\H),\Delta\W\rangle+
	\langle \nabla_{\H} \mathcal{L}(\W,\H),\Delta\H\rangle<0$ since we can take the negative direction if the formula is greater than zero, then by Taylor expansion:
	\begin{equation*}
		\begin{aligned}
		\mathcal{L}(\W+\delta\Delta\W,\H+\delta\Delta\H)= \mathcal{L}(\W,\H)+\delta
		\langle \nabla_{\W} \mathcal{L}(\W,\H),\Delta\W\rangle+\delta
		\langle \nabla_{\H} \mathcal{L}(\W,\H),\Delta\H\rangle+O(\delta^2),
		\end{aligned}
	\end{equation*}
	we know that $(\Delta\W,\Delta\H)$ satisfies our requirement.\\
	
	Now let's discuss the case when:
	$$\langle \nabla_{\W} \mathcal{L}(\W,\H),\Delta\W\rangle+
	\langle \nabla_{\H} \mathcal{L}(\W,\H),\Delta\H\rangle=0,\forall (\Delta\W,\Delta\H)\in\mathcal{T}(\W,\H),$$
	by definition of $\mathcal{T}(\W,\H)$, it contains all vectors that are orthogonal to $(\W,\H)$, so $(\nabla_{\W} \mathcal{L}(\W,\H),\nabla_{\H} \mathcal{L}(\W,\H))$ is parallel to $(\W,\H)$, that is, there exist $\lambda$ such that:
	\begin{equation}
	\label{lagranian grad}
	\nabla L(\W\H)\H^\top=\lambda \W,\W^\top\nabla L(\W\H)=\lambda \H.
	\end{equation}
	
	If there does not exist $(\Delta\W,\Delta\H)$ satisfying the requirement, we know that for any feasible curve $\phi(t)=(\W(t),\H(t))$ with $\phi(0)=(\W,\H)$ on the sphere $\mathcal{S}=\{(\W',\H'):\|\W'\|_F^2+\|\H'\|_F^2=\|\W\|_F^2+\|\H\|_F^2\}$, $t=0$ admits the local minimum of $\mathcal{L}(\phi(t))$ and thus:
	\begin{equation}
	\label{second order1}
		0\leq\frac{d^2}{dt^2}\mathcal{L}(\phi(t))|_{t=0}=\phi'(0)^{T} \nabla^{2} \mathcal{L}(\W,\H) \phi'(0)+\nabla \mathcal{L}(\W,\H) \phi''(0).
	\end{equation}
	
	On the other hand, since the curve lies on the sphere $\mathcal{S}$, denote $h(\W,\H) = \|\W\|_F^2+\|\H\|_F^2$, then $h(\phi(t))$ must stay as a constant, take twice derivative we have:
	\begin{equation}
	\label{second order2}
		0 = \frac{d^2}{dt^2}h(\phi(t))|_{t=0}=\phi'(0)^{T} \nabla^{2} h(\W,\H) \phi'(0)+\nabla h(\W,\H) \phi''(0).
	\end{equation}
	
	Then sum these two conditions together, we have:
	\begin{equation}
	\begin{aligned}
	\label{second order condition}
	0\leq\frac{d^2}{dt^2}(\mathcal{L}(\phi(t))+\frac{|\lambda|}{2} h(\phi(t)))|_{t=0} = \phi'(0)^{T} \nabla^{2}( \mathcal{L}+\frac{|\lambda|}{2} h)(\W,\H)\phi'(0)+\nabla( \mathcal{L}+\frac{|\lambda|}{2} h)(\W,\H) \phi''(0).
	\end{aligned}
	\end{equation}
	
	By equation (\ref{lagranian grad}) we know that $\nabla( \mathcal{L}+\frac{|\lambda|}{2} h)(\W,\H)=0$
	Note that $\phi'(0)\in\mathcal{T}(\W,\H)$ since the curve lies on $\mathcal{S}$ and for any $(\Delta \W,\Delta \H)\in\mathcal{T}(\W,\H)$ we can construct a curve $\phi(t)$ such that $\phi'(0)=(\Delta \W,\Delta \H)$. Then (\ref{second order condition}) indicates that $\forall(\Delta \W,\Delta \H)\in\mathcal{T}(\W,\H)$ we have:
	\begin{equation}
		\begin{aligned}
		\label{second order condition2}
		0\leq& (\Delta \W,\Delta \H)^\top \nabla^{2}\mathcal{L}(\W,\H)(\Delta \W,\Delta \H)+\frac{|\lambda|}{2}(\Delta \W,\Delta \H)^\top \nabla^{2}h(\W,\H)(\Delta \W,\Delta \H)\\
		=&(\Delta \W,\Delta \H)^\top \nabla^{2}\mathcal{L}(\W,\H)(\Delta \W,\Delta \H)+|\lambda|(\|\Delta\W\|_F^2+\|\Delta\H\|_F^2).
		\end{aligned}
	\end{equation}

	When $\lambda=0$, by equation (\ref{eq: rank L}) we know that $\|\nabla L(WH)\|_2> 0$, which gives us $|\lambda|\leq\|\nabla L(\W\H)\|_2$  
	When $\lambda\neq0$, combine the two equations in (\ref{lagranian grad}) we know:
	\begin{equation}
	\label{balance}
	\lambda \W^\top \W = \W^\top\nabla L(\W\H)\H^\top = \lambda \H \H^\top\Rightarrow\W^\top \W = \H \H^\top,
	\end{equation}
	which further implies:
	\begin{equation*}
	||\W||_F = ||\H||_F, \quad||\W||_2 = ||\H||_2.
	\end{equation*}
	
	Thus we also have (Note that when $\W=\H=0$ we can take $\lambda$ to be zero):
	\begin{equation*}
	\begin{aligned}
	\nabla L(\W\H)\H^\top=\lambda \W &\Rightarrow |\lambda| ||\W||_2\leq||\nabla L(\W\H)||_2||\H||_2\\
	&\Rightarrow |\lambda|\leq||\nabla L(\W\H)||_2.
	\end{aligned}
	\end{equation*}
	
    Now when $|\lambda|<||\nabla L(\W\H)||_2$, we can show that it will contradict with (\ref{second order condition2}): We have shown that the rank of $\nabla L(\Z)$ is $K-1$, so by (\ref{lagranian grad}) and (\ref{balance}) there exist a vector $a$ such that $\W a=\H^\top a=0$, let $u$ and $v$ are the left and right singular vectors corresponding to the largest singular value of $ \nabla L(\W\H)$, construct $\Delta \W = u a^{\top},\Delta \H = -a v^{\top}$, then $(\Delta \W,\Delta \H)\in\mathcal{T}(\W,\H)$ and:
	\begin{equation*}
	\begin{aligned}
	&(\Delta \W,\Delta \H)^{\top}\nabla^2\mathcal{L}(\W,\H)(\Delta \W,\Delta \H)-\lambda(\|\Delta\W\|_F^2+\|\Delta\H\|_F^2)\\ 
	= & (\W \Delta\H+\Delta\W\H)\nabla^{2}L\left(\W \H\right)(\W \Delta\H+\Delta\W\H)+2\left\langle\nabla L(\W\H), \Delta\W\Delta\H \right\rangle+|\lambda|(\|\Delta\W\|_F^2+\|\Delta\H\|_F^2)\\
	\leq&
	2||a||_2^2(|\lambda| - u^{\top} \nabla L(\W\H) v)<0.
	\end{aligned}
	\end{equation*}
	Then it only remains to analyze the $|\lambda|=||\nabla L(\W\H)||_2$ cases, construct another convex optimization problem:
	\begin{equation}
	\label{relaxation}
	\min_{Z}L(\Z)+|\lambda|||\Z||_*,
	\end{equation}
	suppose $\Z$ has SVD $\Z=\boldsymbol{U\Sigma V^\top}$, as we know that the subgradient of $||\Z||_*$ can be written as (see \citet{watson1992characterization} for a proof):
	\begin{equation}
	\label{nuclear derivative}
	\partial\|\boldsymbol{Z}\|_{*}=\left\{\boldsymbol{U} \boldsymbol{V}^{\top}+\boldsymbol{W}, \boldsymbol{W} \in \mathbb{R}^{K \times nK} \mid \boldsymbol{U}^{\top} \boldsymbol{W}=\mathbf{0}, \boldsymbol{W} \boldsymbol{V}=\mathbf{0},\|\boldsymbol{W}\|_2 \leq 1\right\}.
	\end{equation}
	
	On the other hand, we know that:
	\begin{equation*}
	\begin{aligned}
	&\H^\top\H\H^\top\H = \H^\top\W^\top\W\H = \boldsymbol{V\Sigma^2 V^\top}\\
	&\W\W^\top\W\W^\top = \W\H\H^\top\W^\top = \boldsymbol{U\Sigma^2 U^\top},
	\end{aligned}
	\end{equation*}
	which indicates that $\H^\top\H = \boldsymbol{V\Sigma V^\top}$ and $\W\W^\top = \boldsymbol{U\Sigma U^\top}$. Combine them with (\ref{lagranian grad}) we have:
	\begin{equation*}
	\begin{aligned}
	\nabla L(\W\H)\H^\top\H=\lambda \W\H&\Leftrightarrow\nabla L(\W\H)\boldsymbol{V\Sigma V^\top}=\lambda \boldsymbol{U\Sigma V^\top}\\
	&\Leftrightarrow\nabla L(\W\H)\boldsymbol{V}=\lambda \boldsymbol{U}\\
	\W\W^\top \nabla L(\W\H)=\lambda \W\H&\Leftrightarrow\boldsymbol{U\Sigma U^\top}\nabla L(\W\H)=\lambda \boldsymbol{U\Sigma V^\top}\\
	&\Leftrightarrow\boldsymbol{U}^\top\nabla L(\W\H)=\lambda \boldsymbol{V}^\top.\\
	\end{aligned}
	\end{equation*}
	Note that $|\lambda|=||\nabla L(\W\H)||_2$, then by (\ref{nuclear derivative}) we know that $-\nabla L(\W\H)\in |\lambda|\partial\|\W\H\|_{*}$. Then by the strict convexity of (\ref{relaxation}) we know $\W\H$ is the global minimum of it. In addition, we have $\|\W\|_F^2+\|\H\|_F^2 = 2\text{tr}\boldsymbol{\Sigma}^2) = 2\|\W\H\|_*$. In addition, previous works \citep{haeffele2015global} have shown that:
	\begin{equation*}
\|\Z\|_{*}=\min _{\Z=\W \H} \frac{1}{2}\left(\|\W\|_{F}^{2}+\|\H\|_{F}^{2}\right),
\end{equation*}
which is equivalent to:
	\begin{equation*}
		\|\W\H\|_*\leq\frac{1}{2}(\|\W\|_F^2+\|\H\|_F^2).
	\end{equation*}
	
Now for any $(\W',\H')$, we know that:
\begin{equation*}
\begin{aligned}
     \mathcal{L}(\W,\H)+\frac{|\lambda|}{2}(\|\W\|_F^2+\|\H\|_F^2)&=L(\W\H)+|\lambda|\|\W\H\|_*\leq L(\W'\H')+|\lambda|\|\W'\H'\|_*\\
     &\leq \mathcal{L}(\W',\H')+\frac{|\lambda|}{2}(\|\W'\|_F^2+\|\H'\|_F^2),
\end{aligned}
\end{equation*}
	which indicates $(\W,\H)$ must attain global minimum of the following optimization problem:
	\begin{equation*}
	\min_{\W,\H}\mathcal{L}(\W,\H)+\frac{|\lambda|}{2}(\|\W\|_F^2+\|\H\|_F^2).
	\end{equation*}
	
	If $(\W,\H)$ does not satisfy neural collapse conditions, by the optimality of neural collapse solution (Theorem \ref{thm:optimal}) we know there exists another point $(\W',\H')$ such that $\mathcal{L}(\W',\H')<\mathcal{L}(\W,\H)$ and $\|\W\|_F^2+\|\H\|_F^2 = \|\W'\|_F^2+\|\H'\|_F^2$ thus:
	\begin{equation*}
		\mathcal{L}(\W,\H)+\frac{|\lambda|}{2}(\|\W\|_F^2+\|\H\|_F^2)>\mathcal{L}(\W',\H')+\frac{|\lambda|}{2}(\|\W'\|_F^2+\|\H'\|_F^2),
	\end{equation*}
	which contradicts with the global optimality of $(\W,\H)$, thus $(\W,\H)$ must satisfy all of the neural collapse conditions and we finish the proof.
\end{proof}

\section{Additional Empirical Results} \label{Appendix: empirical}

\paragraph{Gradient Descent on the ULPM Objective.} We conduct experiments on the ULPM objective (\ref{P1}) to support the results of convergence toward neural collapse in our theories. We set $N = 10$, $K = 5$, and $d = 20$, and used gradient descent with a learning rate of 5 to run $10^5$ epochs. We characterize the dynamics of the training procedure in \Cref{fig: 1} based on four aspects: (1) Relative variation of the centered class-mean feature norms ({i.e.}, $\text{Std}(\|\bar{\h}_k-\bar{\h}\|)/\text{Avg}(\|\bar{\h}_k-\bar{\h}\|)$) and the variation of the classifier's norms ({i.e.}, $\text{Std}(\|\bar{\w}_k\|)/\text{Avg}(\|\bar{\w}_k\|)$). (2) Within-class variation of the last layer features ({i.e.}, $\text{Avg}(\|{\h}_{k,i}-{\h}_k\|)/\text{Avg}(\|{\h}_{k,i}-\bar{\h}\|)$). (3) The cosines between pairs of last layer features ({i.e.}, $\text{Avg}(|\cos(\bar{\h}_{k}, \bar{\h}_{k'})+1/(K-1)|)$) and that of the classifiers ({i.e.}, $\text{Avg}(|\cos(\bar{\w}_{k}, \bar{\w}_{k'})+1/(K-1)|)$). (4) The distance between the normalized centered classifier and the normalized last layer feature ({i.e.}, $\text{Avg}(|(\bar{\h}_{k} - \bar{\h})/\|\bar{\h}_{k}-\bar{\h}\| - \bar{\w}_{k}/\|\bar{\w}_{k}\||)$). Empirically, we observe that all four quantities decrease at approximately the rate $O(1/(\log(t)))$.


\begin{figure}[!hbt]
    \makebox[\linewidth][c]{
	\centering
	\begin{subfigure}[t]{0.26\textwidth}
		\centering
		\includegraphics[width=1\textwidth]{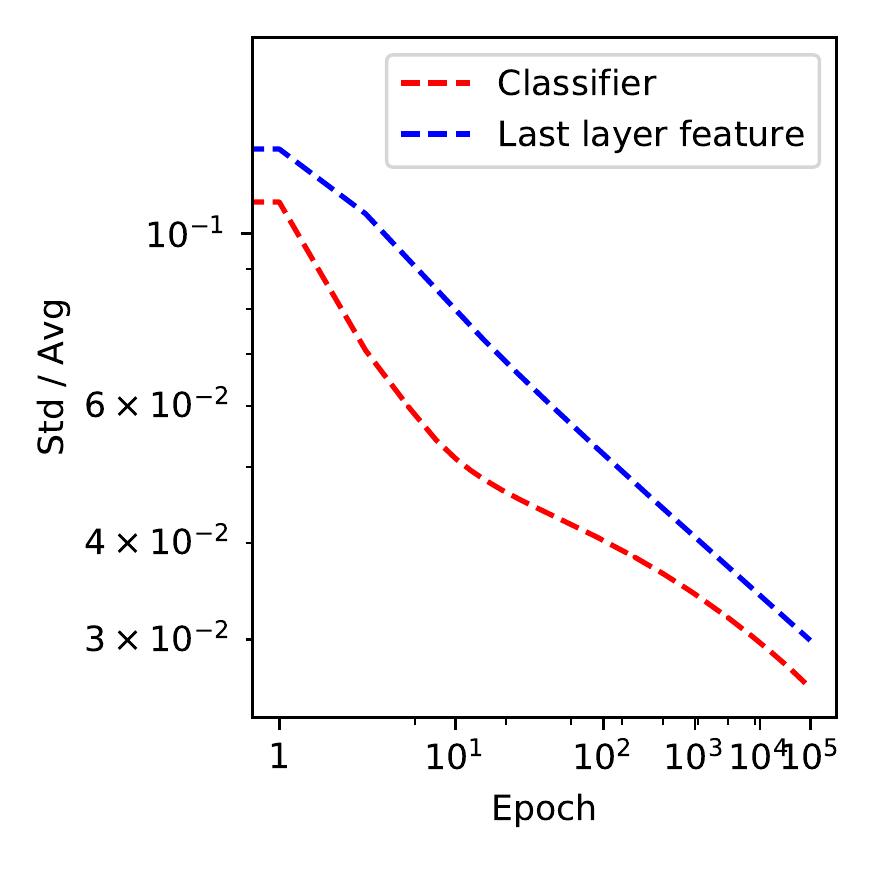}
		\subcaption{\scriptsize Variation of the norms}
	\end{subfigure}
	\hspace{-0.2in}
	\quad
	\begin{subfigure}[t]{0.26\textwidth}
		\centering
		\includegraphics[width=1\textwidth]{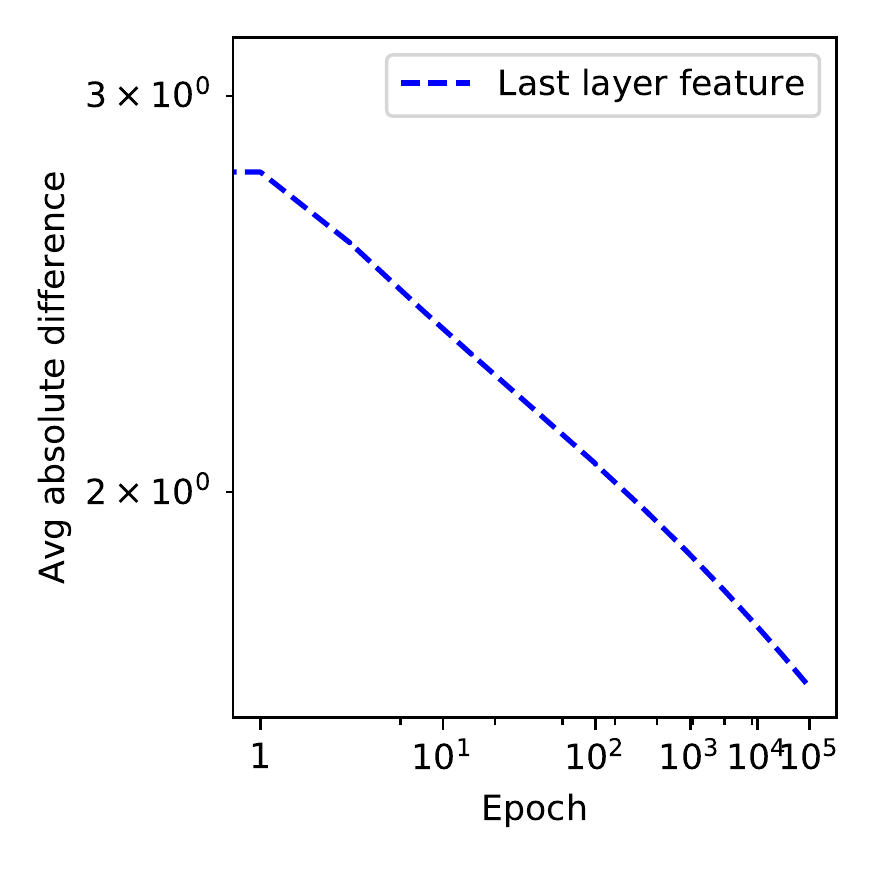}
		\subcaption{\scriptsize Within-class variation}
	\end{subfigure}
	\hspace{-0.2in}
	\quad
	\begin{subfigure}[t]{0.26\textwidth}
		\centering
		\includegraphics[width=1\textwidth]{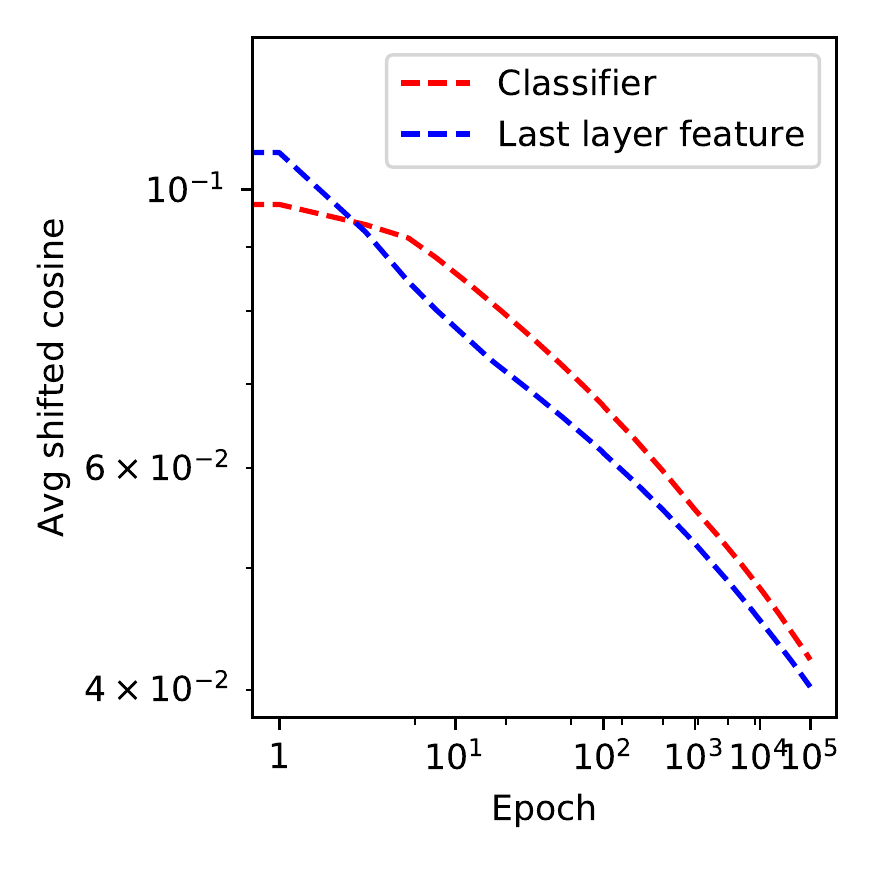}
		\subcaption{\scriptsize Cosines between pairs}
	\end{subfigure}
	\hspace{-0.2in}
	\quad
	\begin{subfigure}[t]{0.26\textwidth}
		\centering
		\includegraphics[width=1\textwidth]{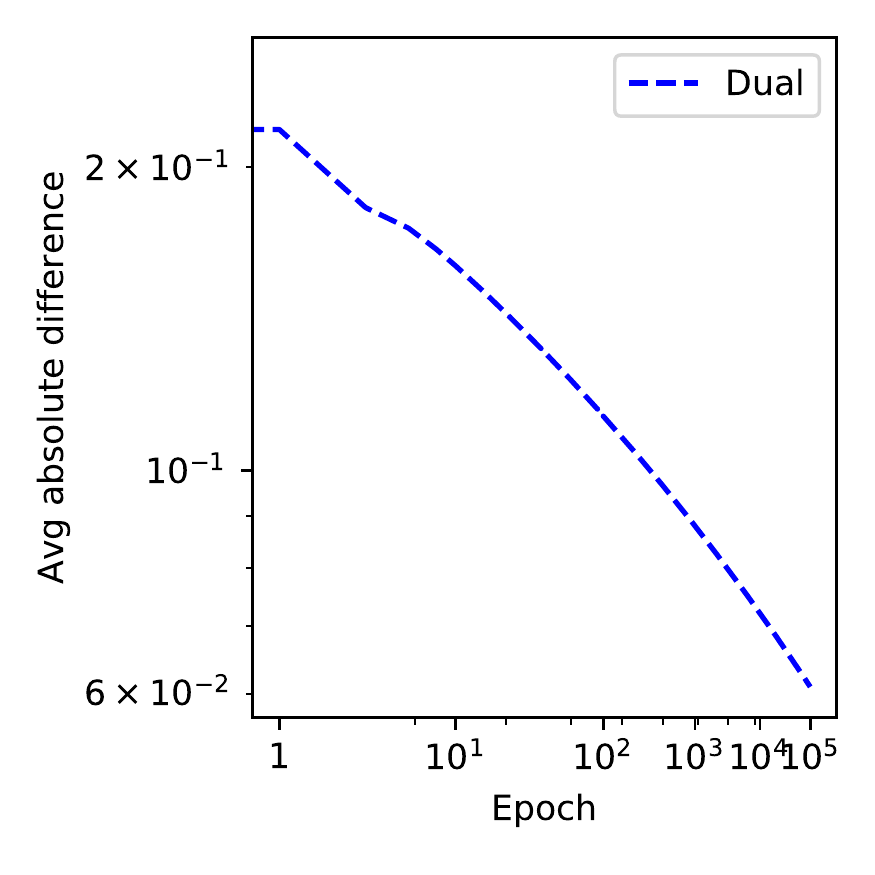}
		\subcaption{\scriptsize Self-duality }
	\end{subfigure}}
	\caption{ Training dynamics in ULPM. The $x$-axis in the figures is set to have $\log(\log(t))$ scales, and the $y$-axis in the figures are set to have $\log$ scales. (a) The dynamics of the variation of the centered class-mean features' norms (shown in blue) and the variation of the classifier's norms (shown in red). We observe that the logarithm of both terms decreases at a rate $O(1/(\log(t)))$. (b) Dynamics of within-class variation of the last layer features. The logarithm of the variation converges at approximately the rate $O(1/\log(t)))$. (c) The dynamics of the cosines between pairs of last layer features (shown in blue) and those of the classifiers (shown in red). The logarithm of both terms converge approximately at rate $O(1/\log(t)))$. (d) Dynamics of the distance between the normalized centered classifier and normalized last layer feature. The logarithm of the quantity converges at approximately the rate $O(1/\log(t)))$ to the point of self-duality.}
	\label{fig: 1}
\end{figure}

\begin{figure}

	
	\makebox[\linewidth][c]{
	\centering
	\begin{subfigure}[t]{0.26\textwidth}
		\centering
		\includegraphics[width=1\textwidth]{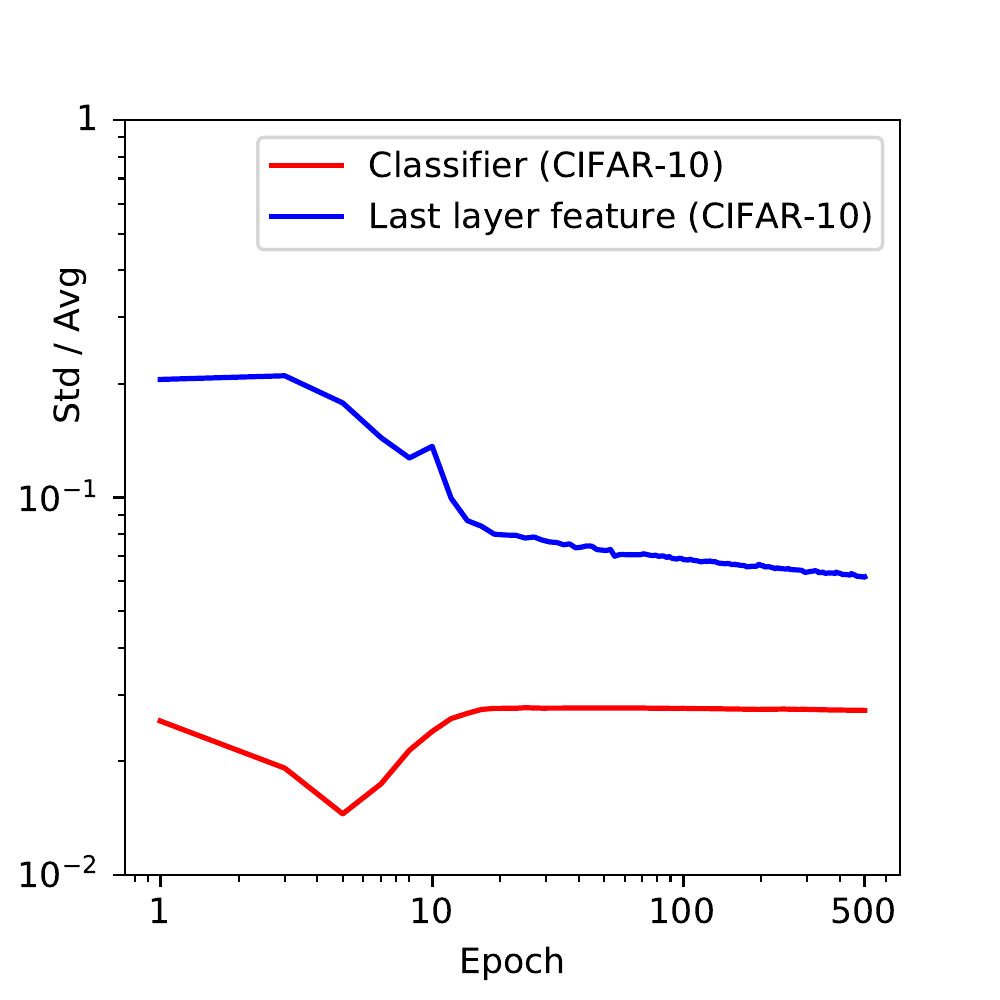}
		\subcaption{\scriptsize Variation of the norms}
	\end{subfigure}
	\hspace{-0.2in}
	\quad
	\begin{subfigure}[t]{0.26\textwidth}
		\centering
		\includegraphics[width=1\textwidth]{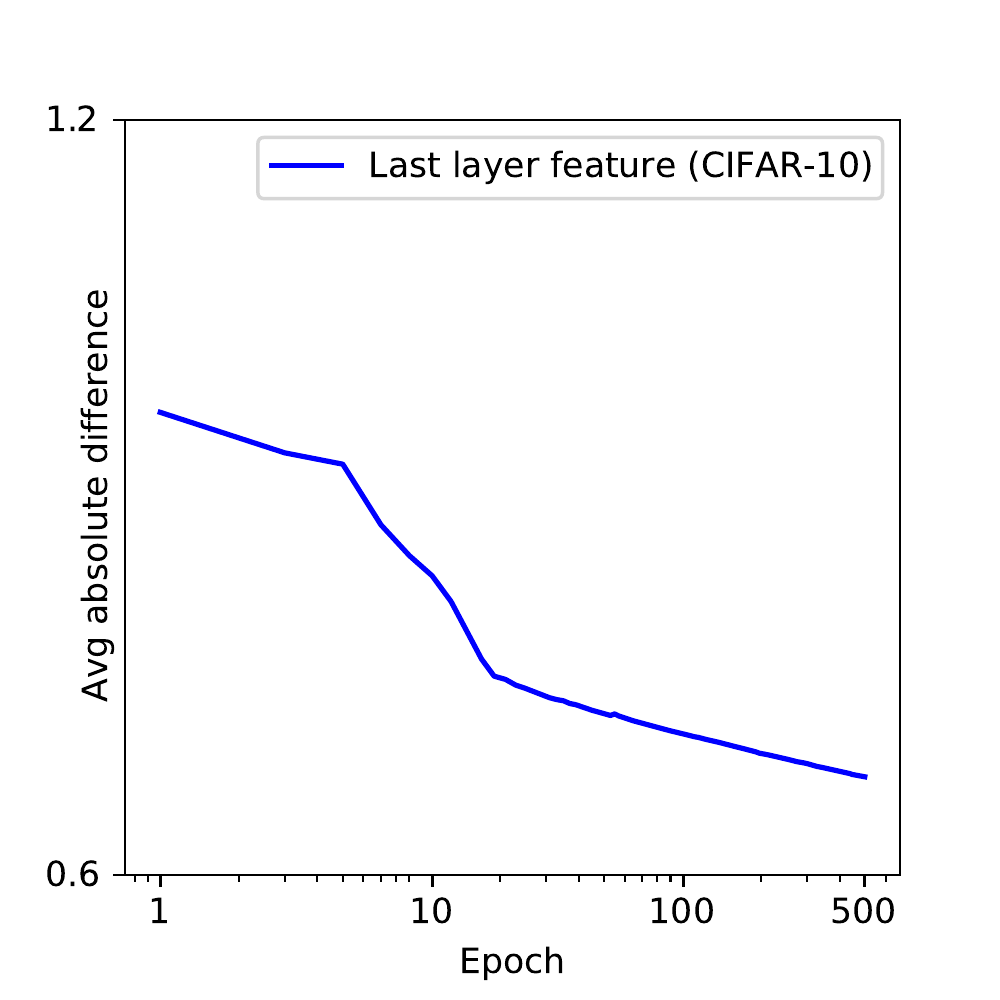}
		\subcaption{\scriptsize Within-class variation}
	\end{subfigure}
	\hspace{-0.2in}
	\quad
	\begin{subfigure}[t]{0.26\textwidth}
		\centering
		\includegraphics[width=1\textwidth]{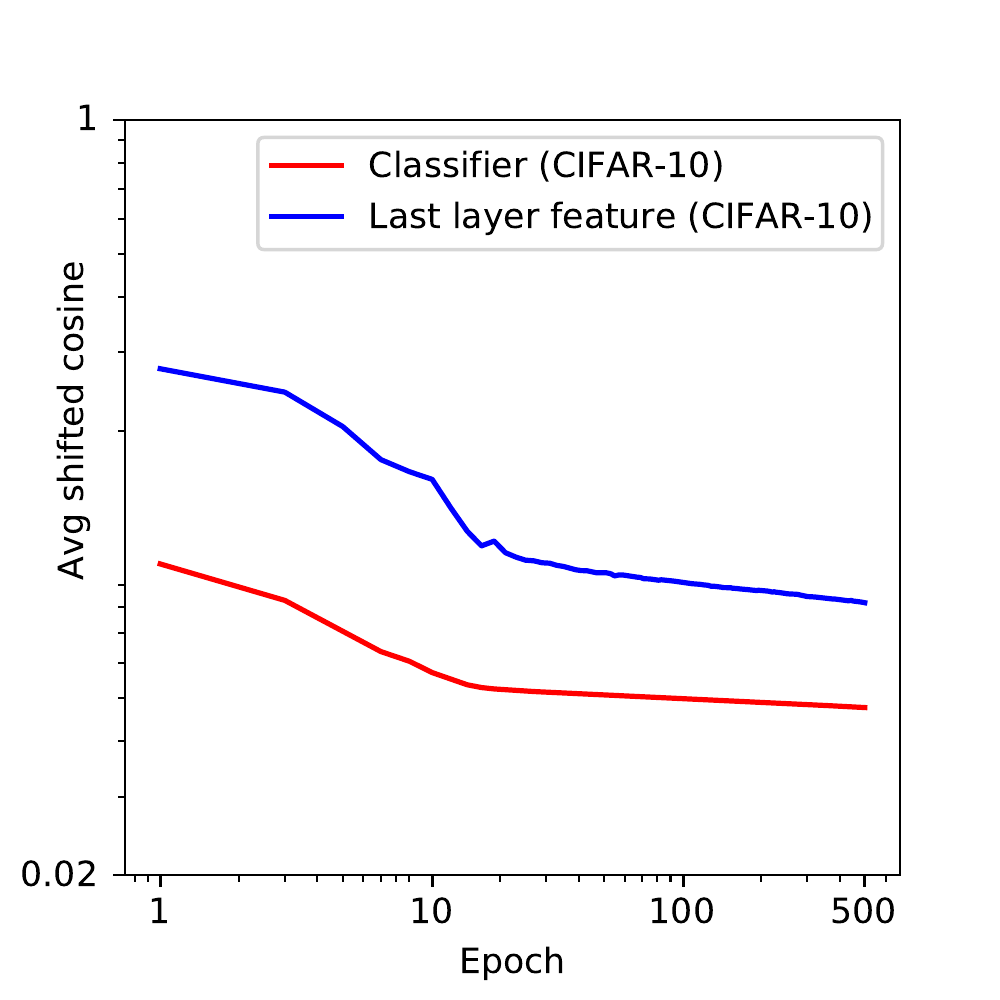}
		\subcaption{\scriptsize Cosines between pairs}
	\end{subfigure}
	\hspace{-0.2in}
	\quad
	\begin{subfigure}[t]{0.26\textwidth}
		\centering
		\includegraphics[width=1\textwidth]{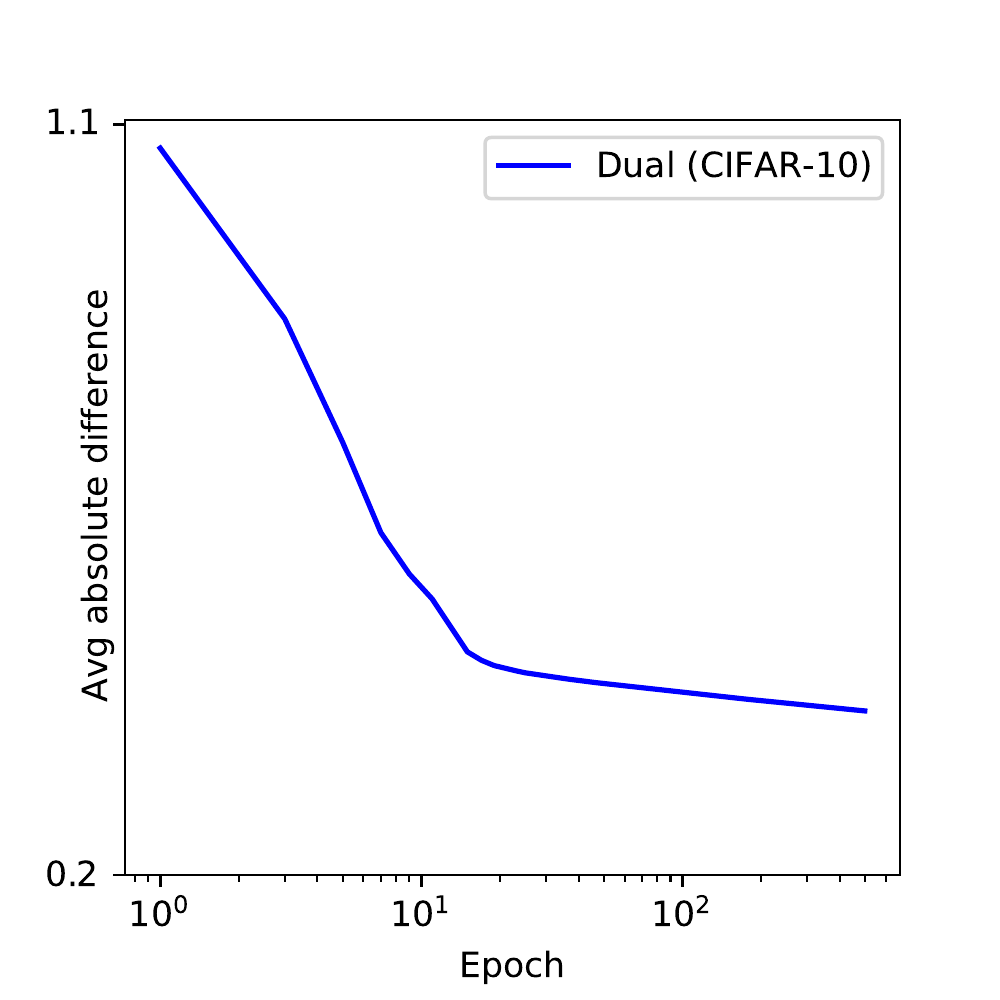}
		\subcaption{\scriptsize Self-duality }
	\end{subfigure}}
\caption{\small Experiments on real datasets without weight decay. We trained a VGG13 on CIFAR10 dataset. The $x$-axis in the figures are set to have $\log(\log(t))$ scales and the $y$-axis in the figures are set to have $\log$ scales.} 
	\label{fig: VGGCIFAR}
\end{figure}
\begin{figure}	
	\makebox[\linewidth][c]{
	\centering
	\begin{subfigure}[t]{0.26\textwidth}
		\centering
		\includegraphics[width=1\textwidth]{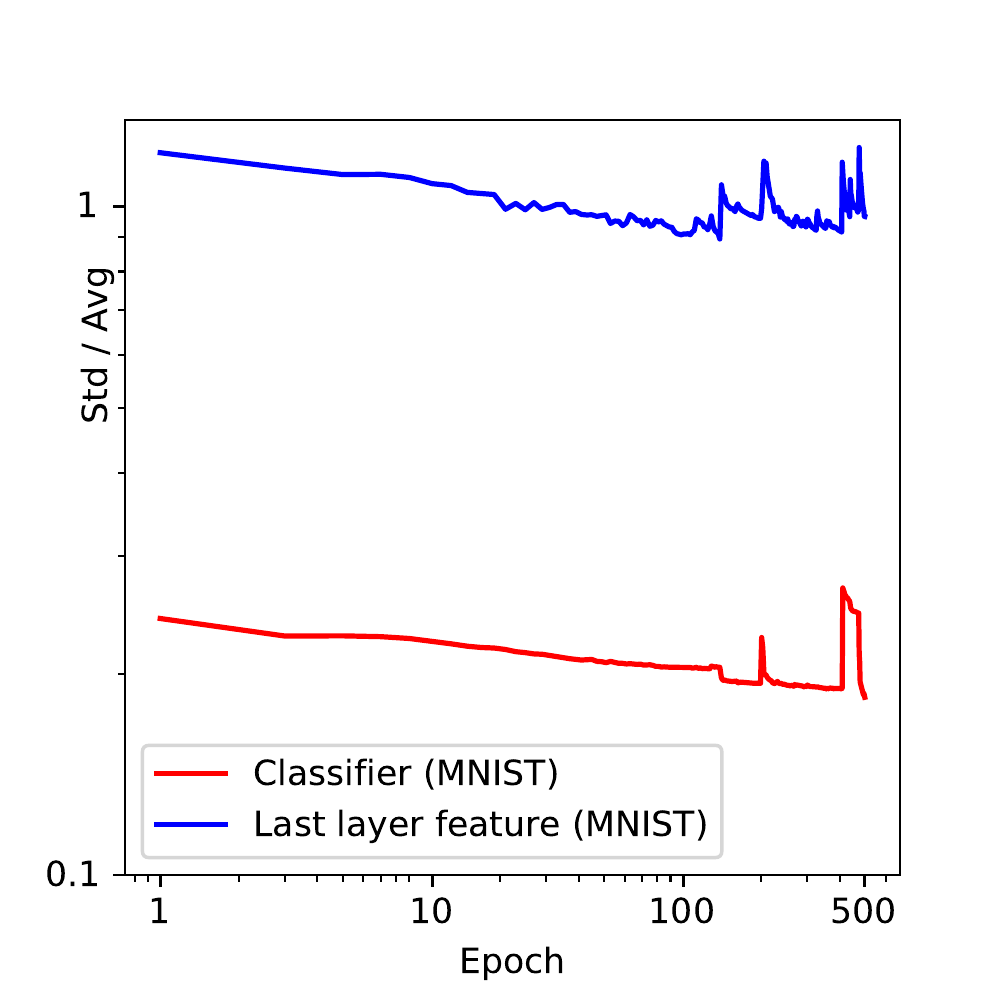}
		\subcaption{\scriptsize Variation of the norms}
	\end{subfigure}
	\hspace{-0.2in}
	\quad
	\begin{subfigure}[t]{0.26\textwidth}
		\centering
		\includegraphics[width=1\textwidth]{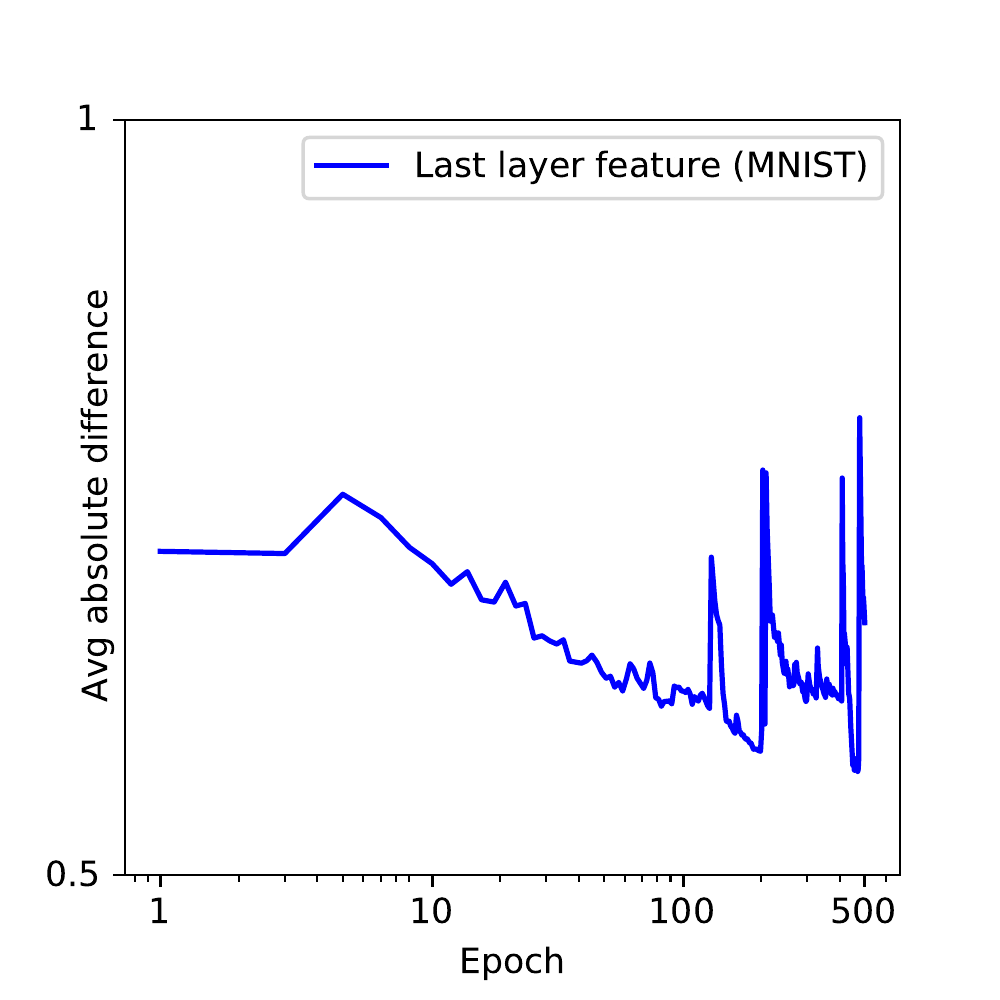}
		\subcaption{\scriptsize Within-class variation}
	\end{subfigure}
	\hspace{-0.2in}
	\quad
	\begin{subfigure}[t]{0.26\textwidth}
		\centering
		\includegraphics[width=1\textwidth]{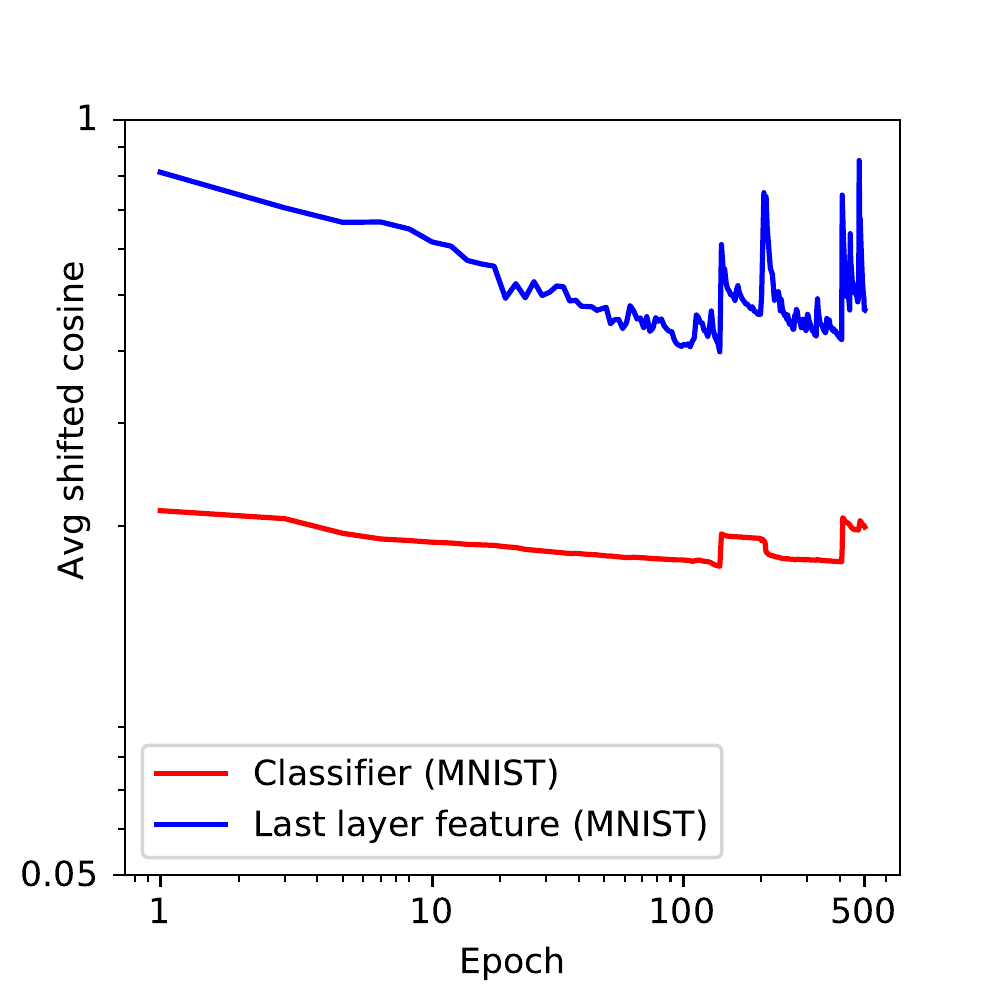}
		\subcaption{\scriptsize Cosines between pairs}
	\end{subfigure}
	\hspace{-0.2in}
	\quad
	\begin{subfigure}[t]{0.26\textwidth}
		\centering
		\includegraphics[width=1\textwidth]{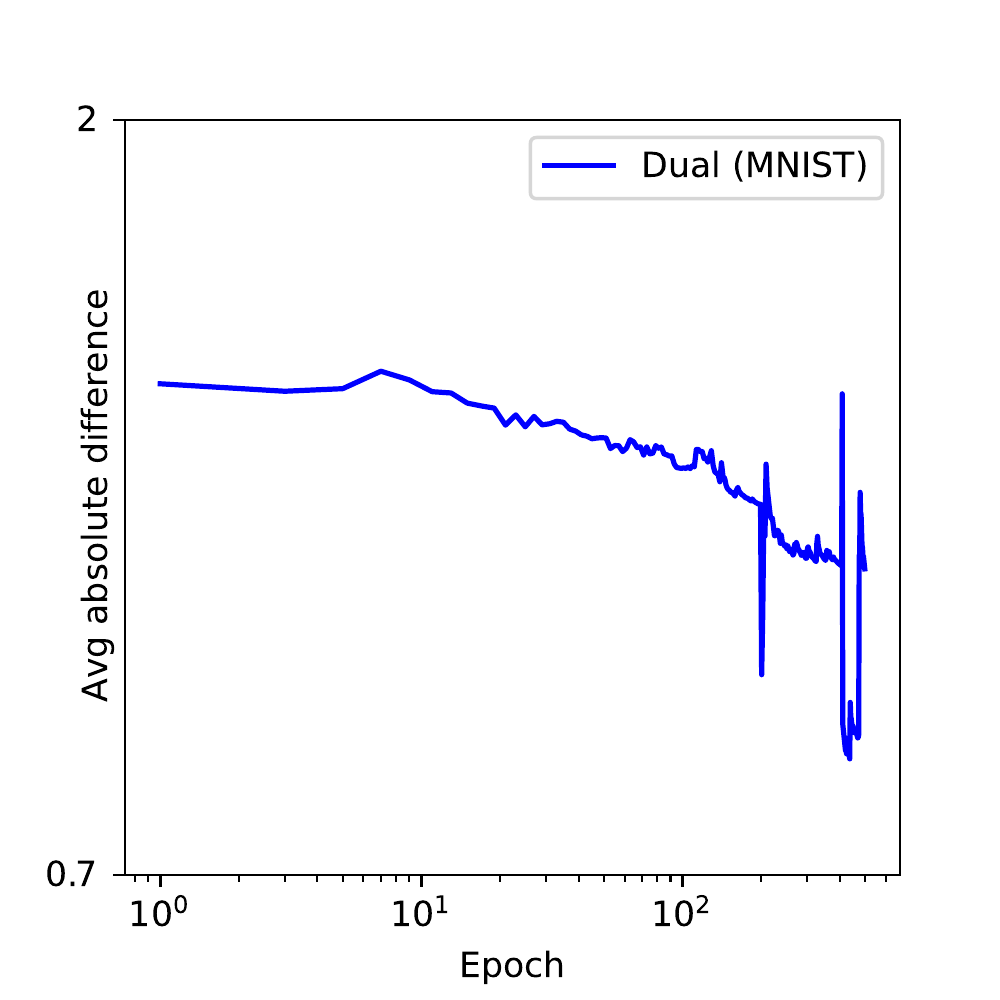}
		\subcaption{\scriptsize Self-duality }
	\end{subfigure}}

	\caption{\small Experiments on real datasets without weight decay. We trained a VGG13 on the MNIST dataset. The $x$-axis in the figures are set to have $\log(\log(t))$ scales and the $y$-axis in the figures are set to have $\log$ scales.} 
	\label{fig: VGGMNIST}
\end{figure}

\begin{figure}

	
	\makebox[\linewidth][c]{
	\centering
	\begin{subfigure}[t]{0.26\textwidth}
		\centering
		\includegraphics[width=1\textwidth]{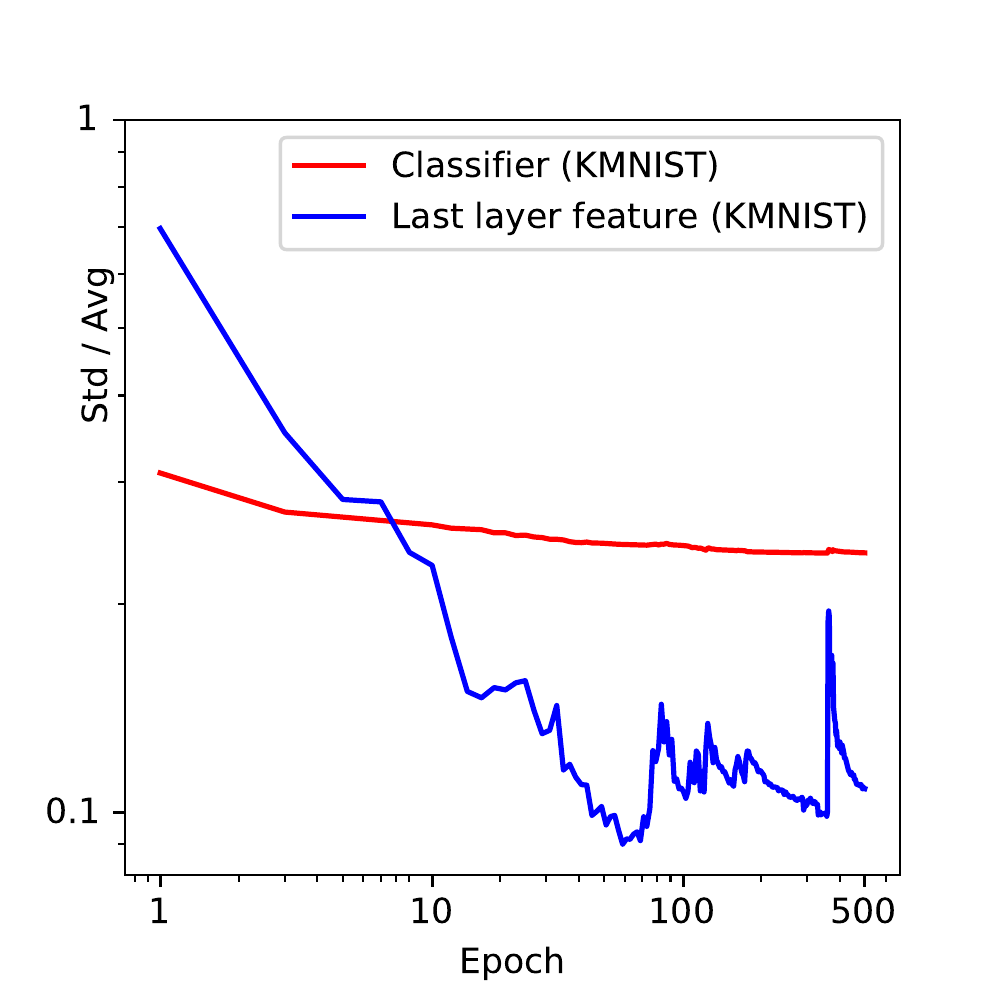}
		\subcaption{\scriptsize Variation of the norms}
	\end{subfigure}
	\hspace{-0.2in}
	\quad
	\begin{subfigure}[t]{0.26\textwidth}
		\centering
		\includegraphics[width=1\textwidth]{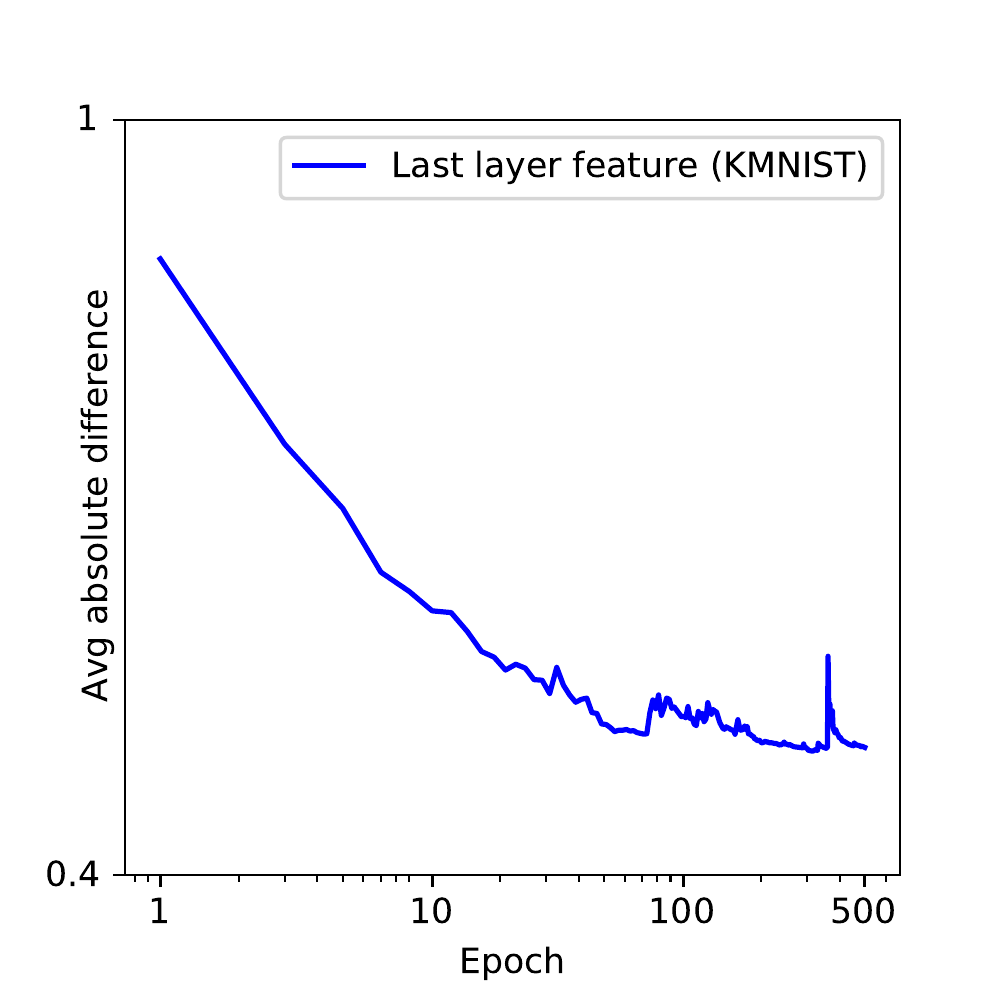}
		\subcaption{\scriptsize Within-class variation}
	\end{subfigure}
	\hspace{-0.2in}
	\quad
	\begin{subfigure}[t]{0.26\textwidth}
		\centering
		\includegraphics[width=1\textwidth]{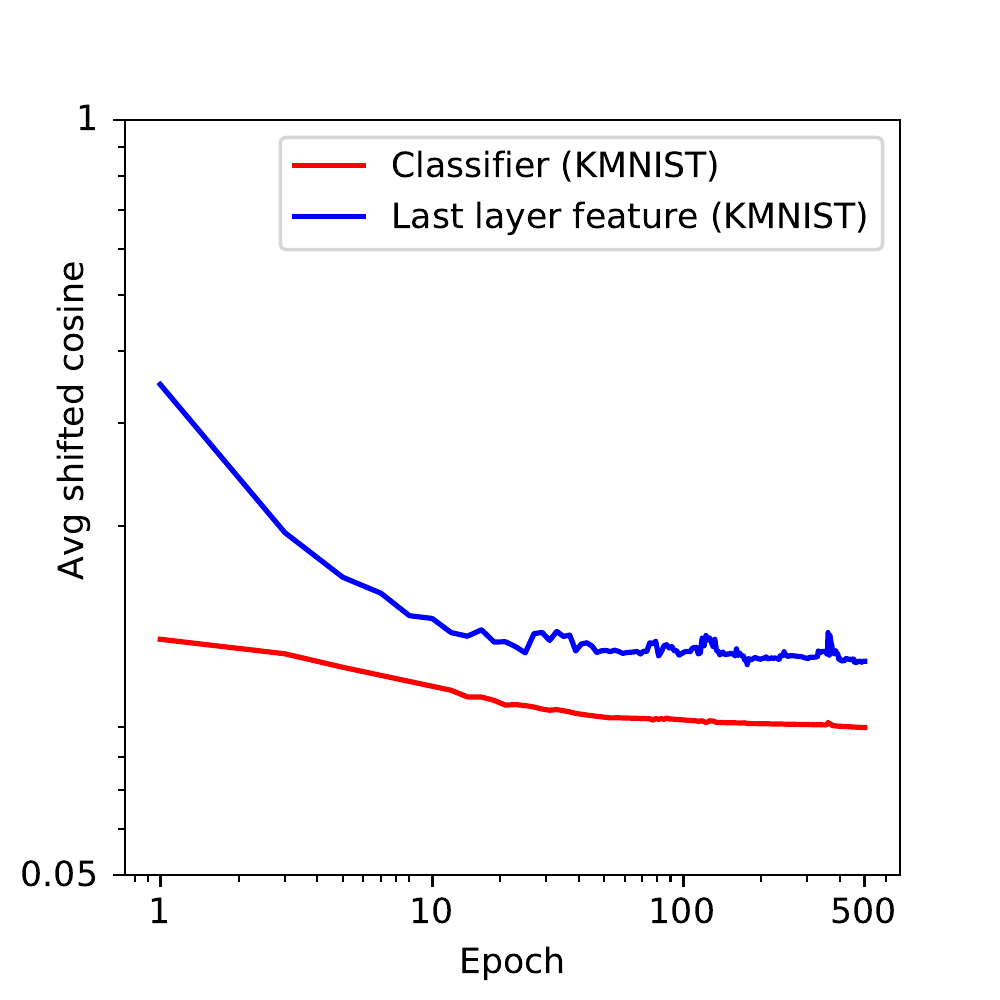}
		\subcaption{\scriptsize Cosines between pairs}
	\end{subfigure}
	\hspace{-0.2in}
	\quad
	\begin{subfigure}[t]{0.26\textwidth}
		\centering
		\includegraphics[width=1\textwidth]{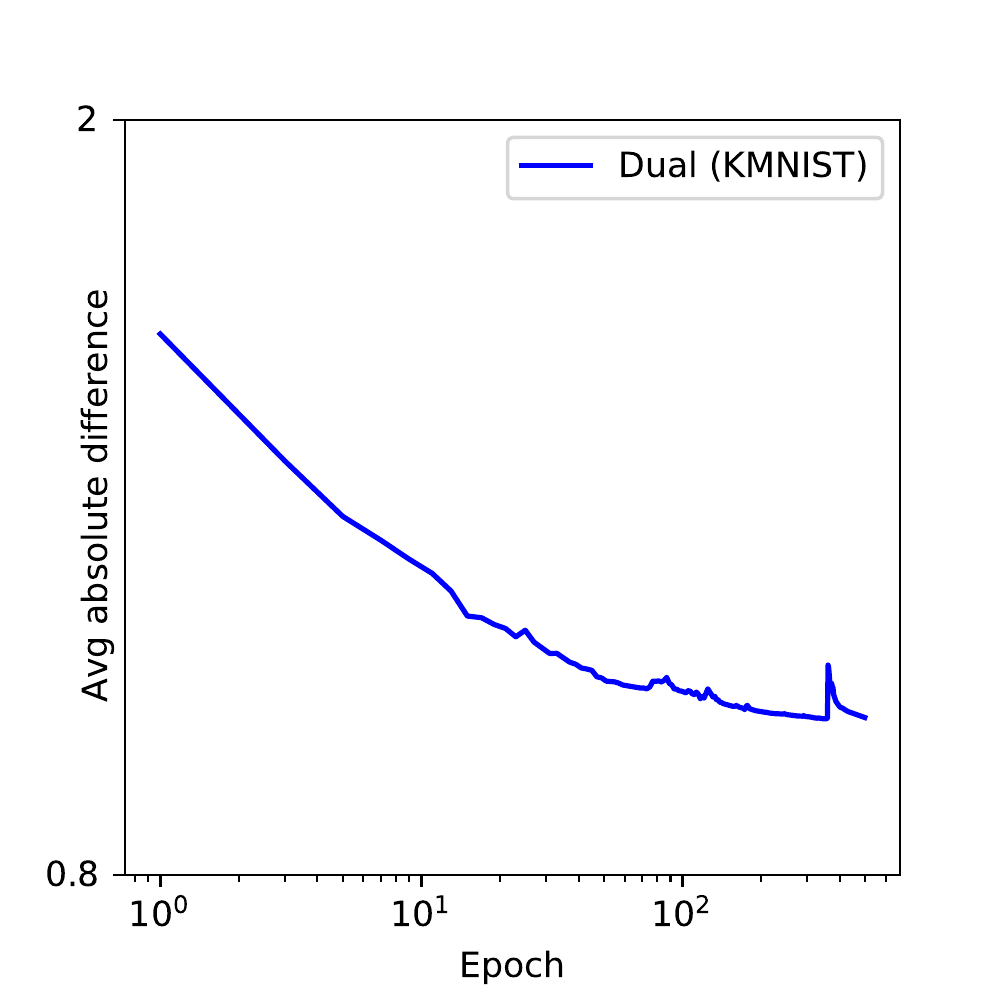}
		\subcaption{\scriptsize Self-duality }
	\end{subfigure}}
\caption{\small Experiments on real datasets without weight decay. We trained a VGG18 on the KMNIST dataset. The $x$-axis in the figures are set to have $\log(\log(t))$ scales and the $y$-axis in the figures are set to have $\log$ scales.} 
	\label{fig: VGGKMNIST}
\end{figure}
\begin{figure}	
	\makebox[\linewidth][c]{
	\centering
	\begin{subfigure}[t]{0.26\textwidth}
		\centering
		\includegraphics[width=1\textwidth]{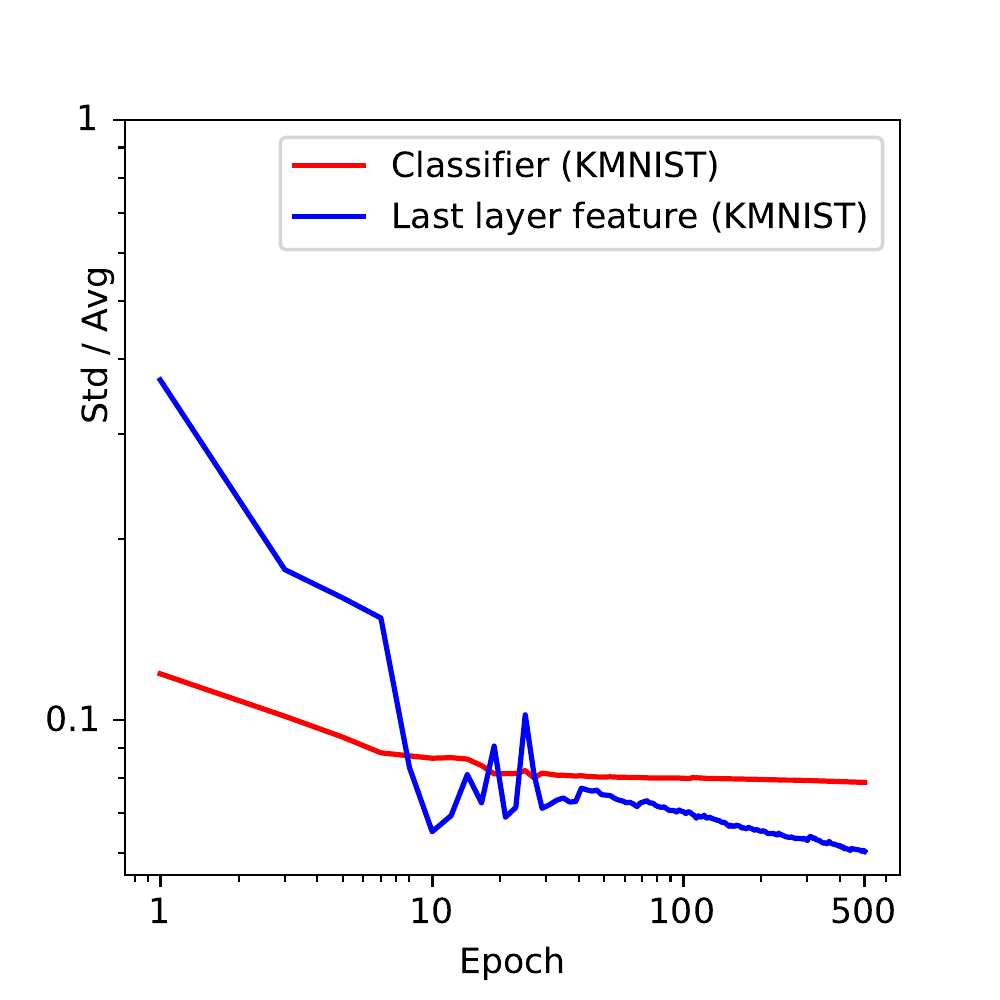}
		\subcaption{\scriptsize Variation of the norms}
	\end{subfigure}
	\hspace{-0.2in}
	\quad
	\begin{subfigure}[t]{0.26\textwidth}
		\centering
		\includegraphics[width=1\textwidth]{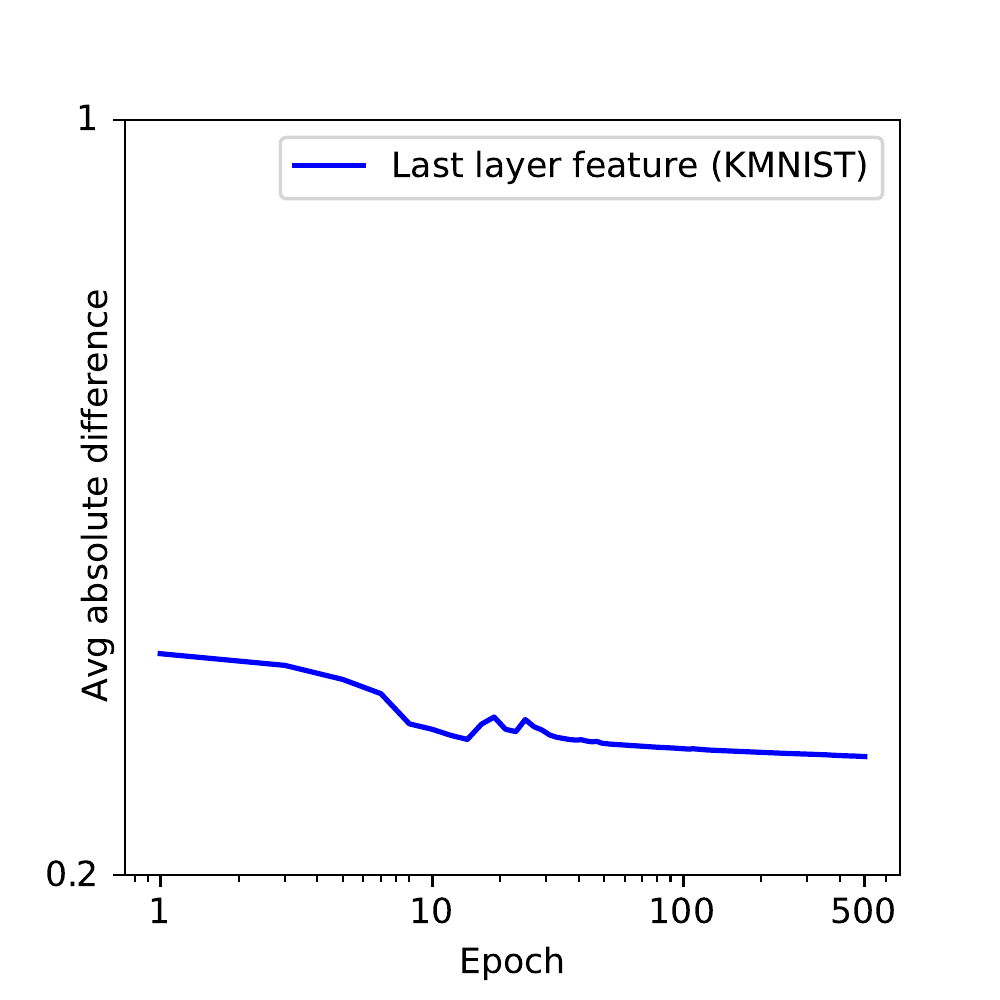}
		\subcaption{\scriptsize Within-class variation}
	\end{subfigure}
	\hspace{-0.2in}
	\quad
	\begin{subfigure}[t]{0.26\textwidth}
		\centering
		\includegraphics[width=1\textwidth]{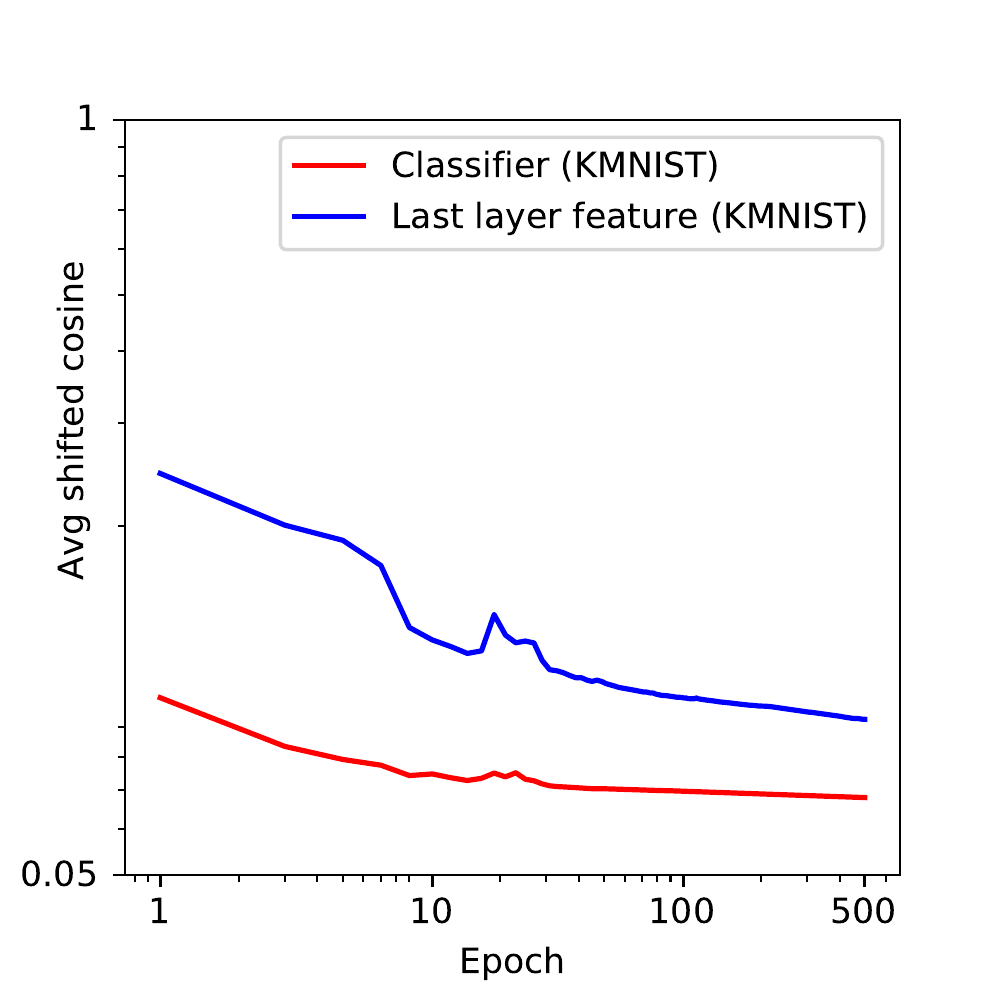}
		\subcaption{\scriptsize Cosines between pairs}
	\end{subfigure}
	\hspace{-0.2in}
	\quad
	\begin{subfigure}[t]{0.26\textwidth}
		\centering
		\includegraphics[width=1\textwidth]{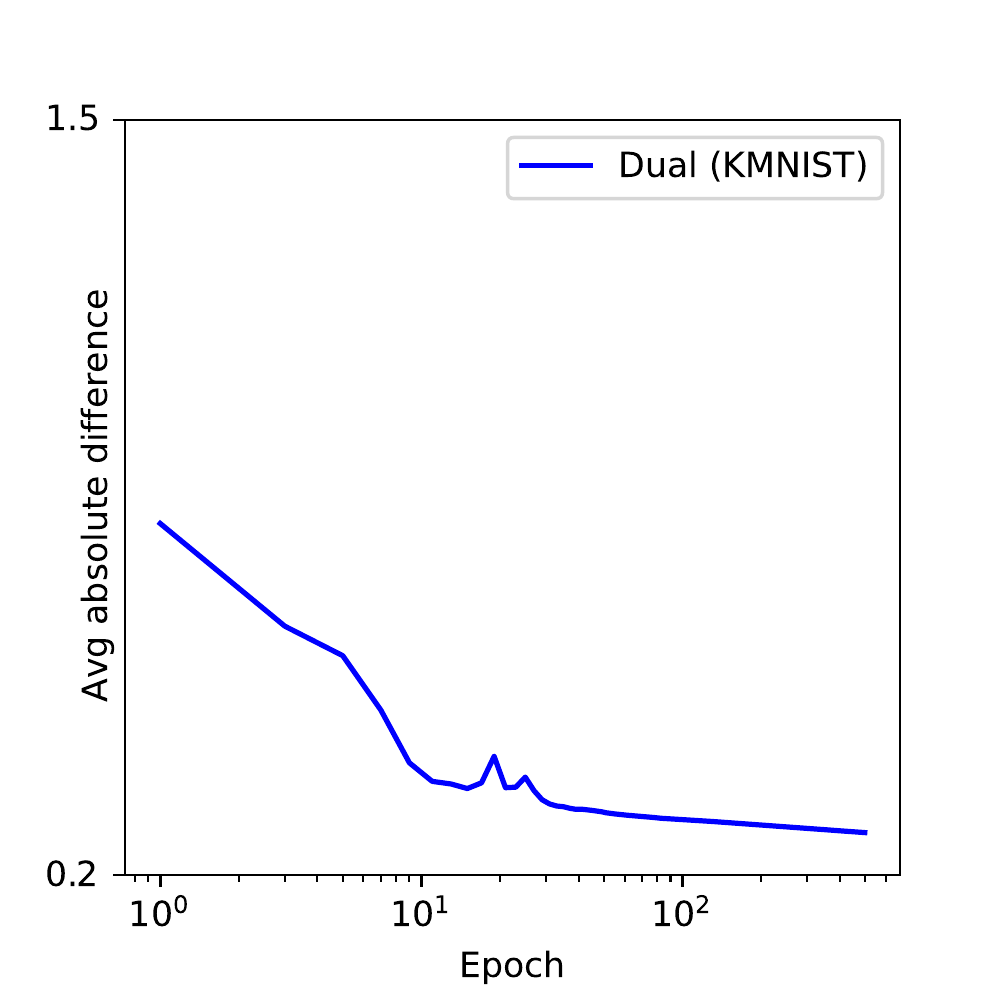}
		\subcaption{\scriptsize Self-duality }
	\end{subfigure}}

	\caption{\small Experiments on real datasets without weight decay. We trained a ResNet18 on the KMNIST dataset. The $x$-axis in the figures are set to have $\log(\log(t))$ scales and the $y$-axis in the figures are set to have $\log$ scales.} 
	\label{fig: ResKMNIST}
\end{figure}

\begin{figure}

	
	\makebox[\linewidth][c]{
	\centering
	\begin{subfigure}[t]{0.26\textwidth}
		\centering
		\includegraphics[width=1\textwidth]{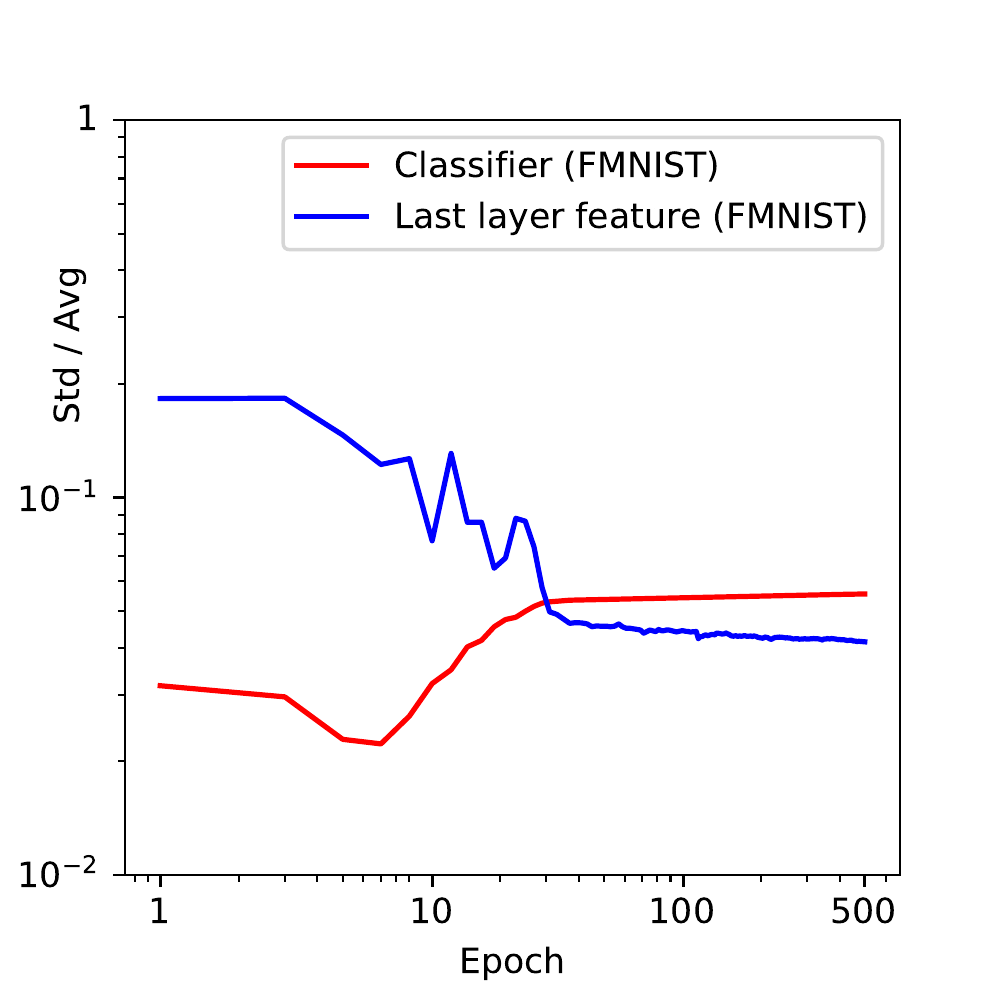}
		\subcaption{\scriptsize Variation of the norms}
		\label{apendixfig:varofnorm0}
	\end{subfigure}
	\hspace{-0.2in}
	\quad
	\begin{subfigure}[t]{0.26\textwidth}
		\centering
		\includegraphics[width=1\textwidth]{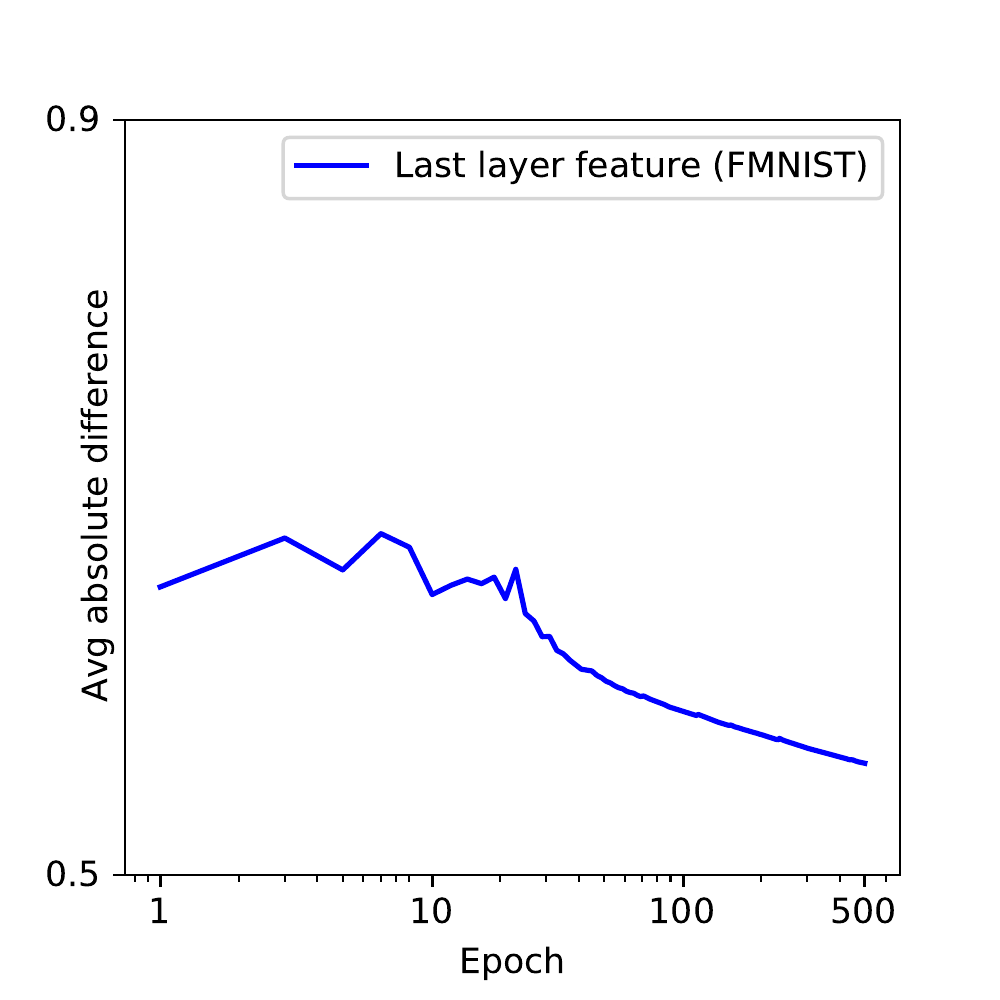}
		\subcaption{\scriptsize Within-class variation}
	\end{subfigure}
	\hspace{-0.2in}
	\quad
	\begin{subfigure}[t]{0.26\textwidth}
		\centering
		\includegraphics[width=1\textwidth]{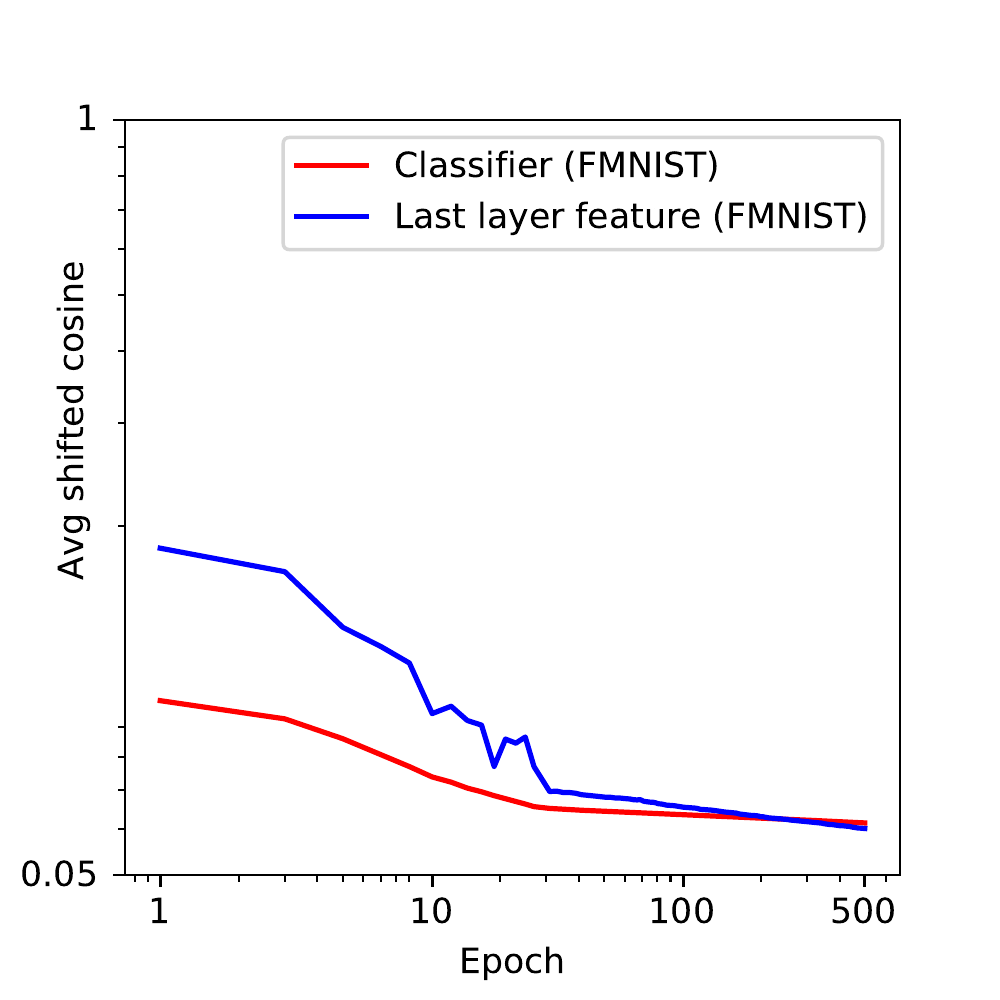}
		\subcaption{\scriptsize Cosines between pairs}
	\end{subfigure}
	\hspace{-0.2in}
	\quad
	\begin{subfigure}[t]{0.26\textwidth}
		\centering
		\includegraphics[width=1\textwidth]{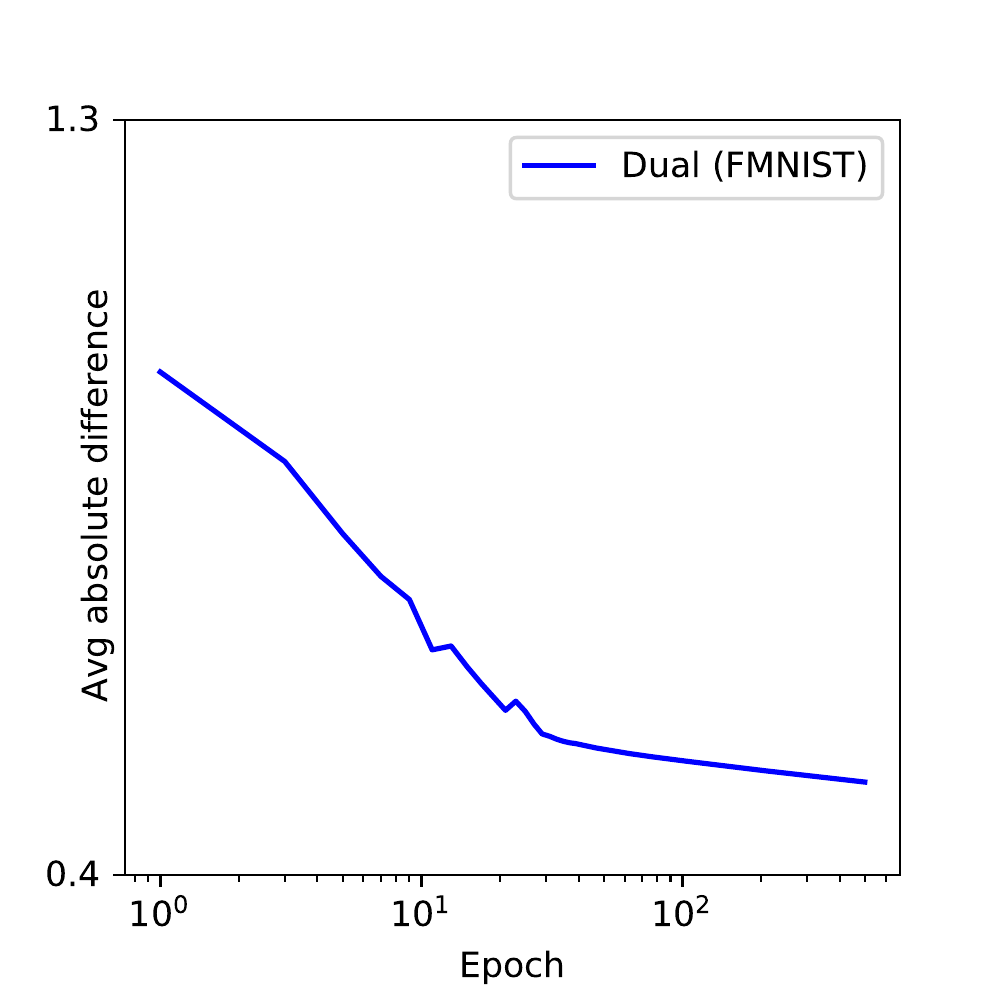}
		\subcaption{\scriptsize Self-duality }
	\end{subfigure}}
	\caption{\small Experiments on real datasets without weight decay. We trained a VGG13 on the Fashion-MNIST dataset. The $x$-axis in the figures are set to have $\log(\log(t))$ scales and the $y$-axis in the figures are set to have $\log$ scales.} 
	\label{fig: VGGFMNIST}
\end{figure}
\begin{figure}
	\makebox[\linewidth][c]{
	\centering
	\begin{subfigure}[t]{0.26\textwidth}
		\centering
		\includegraphics[width=1\textwidth]{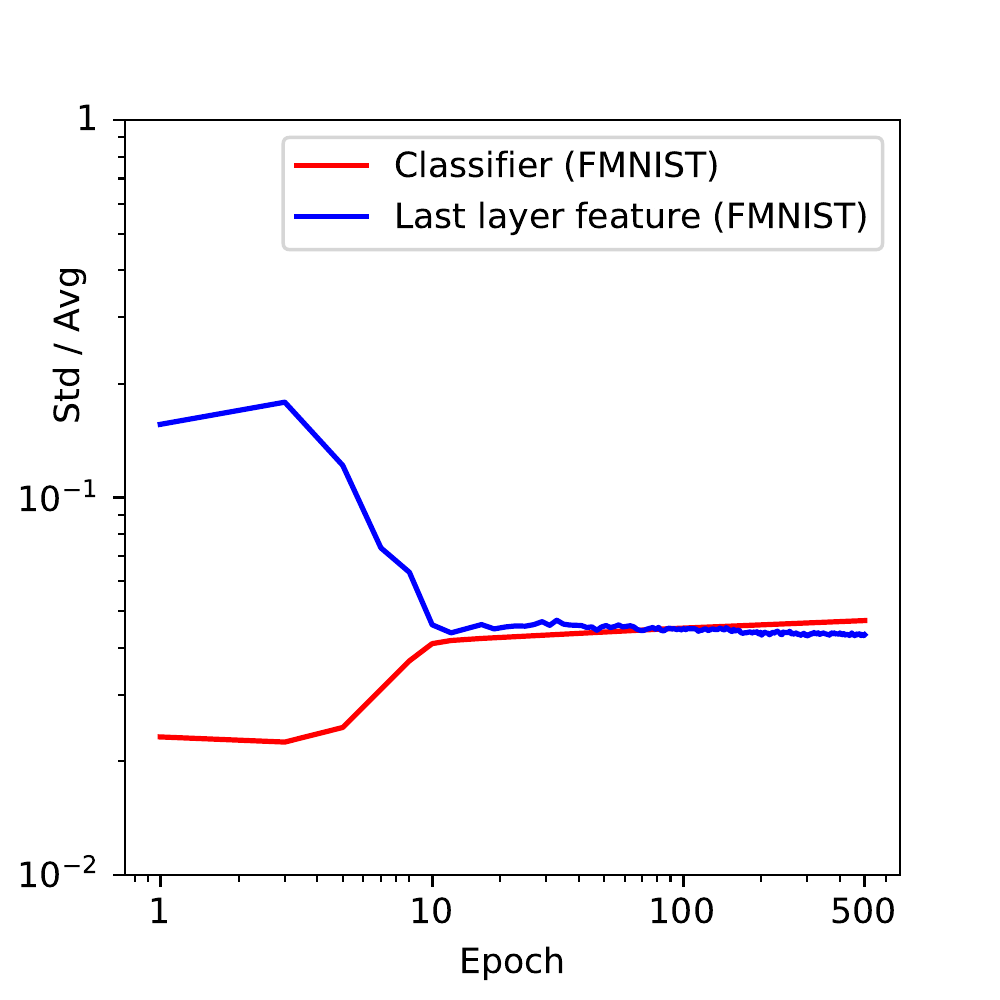}
		\subcaption{\scriptsize Variation of the norms}
		\label{apendixfig:varofnorm}
	\end{subfigure}
	
	\hspace{-0.2in}
	\quad
	\begin{subfigure}[t]{0.26\textwidth}
		\centering
		\includegraphics[width=1\textwidth]{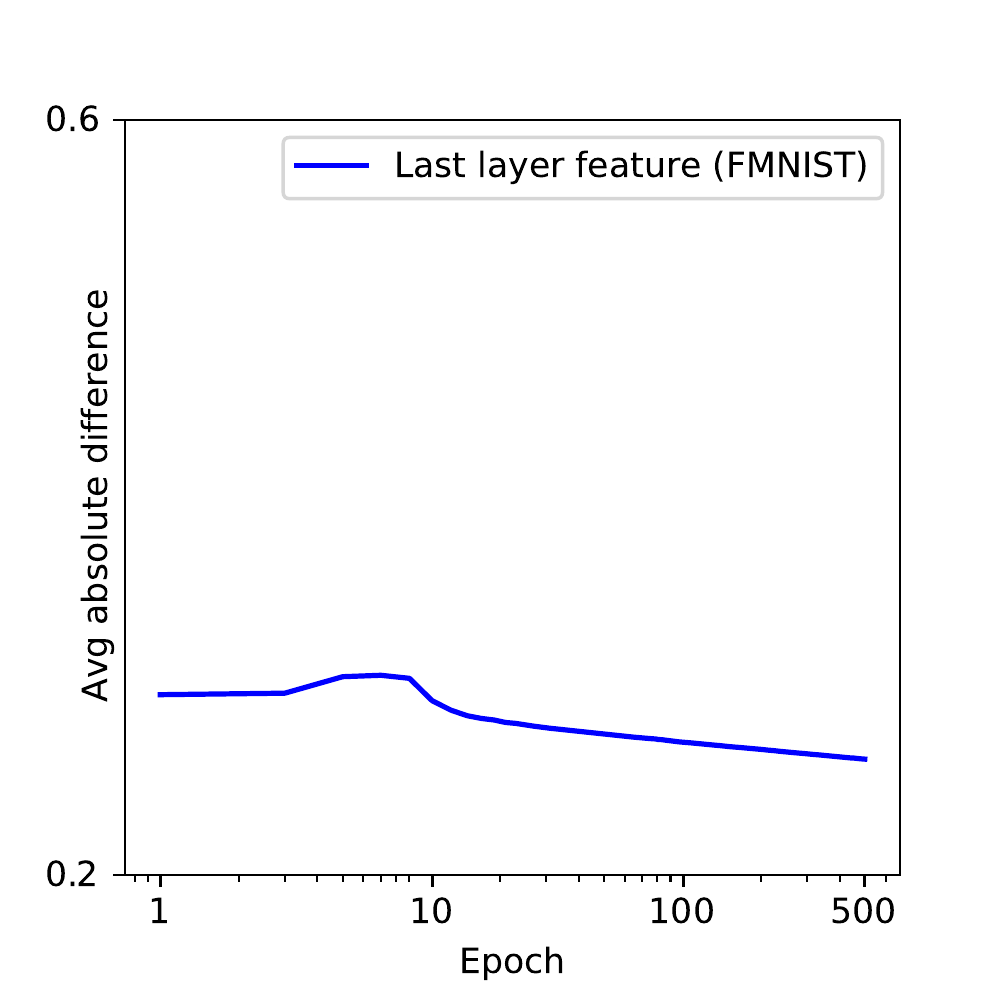}
		\subcaption{\scriptsize Within-class variation}
	\end{subfigure}
	\hspace{-0.2in}
	\quad
	\begin{subfigure}[t]{0.26\textwidth}
		\centering
		\includegraphics[width=1\textwidth]{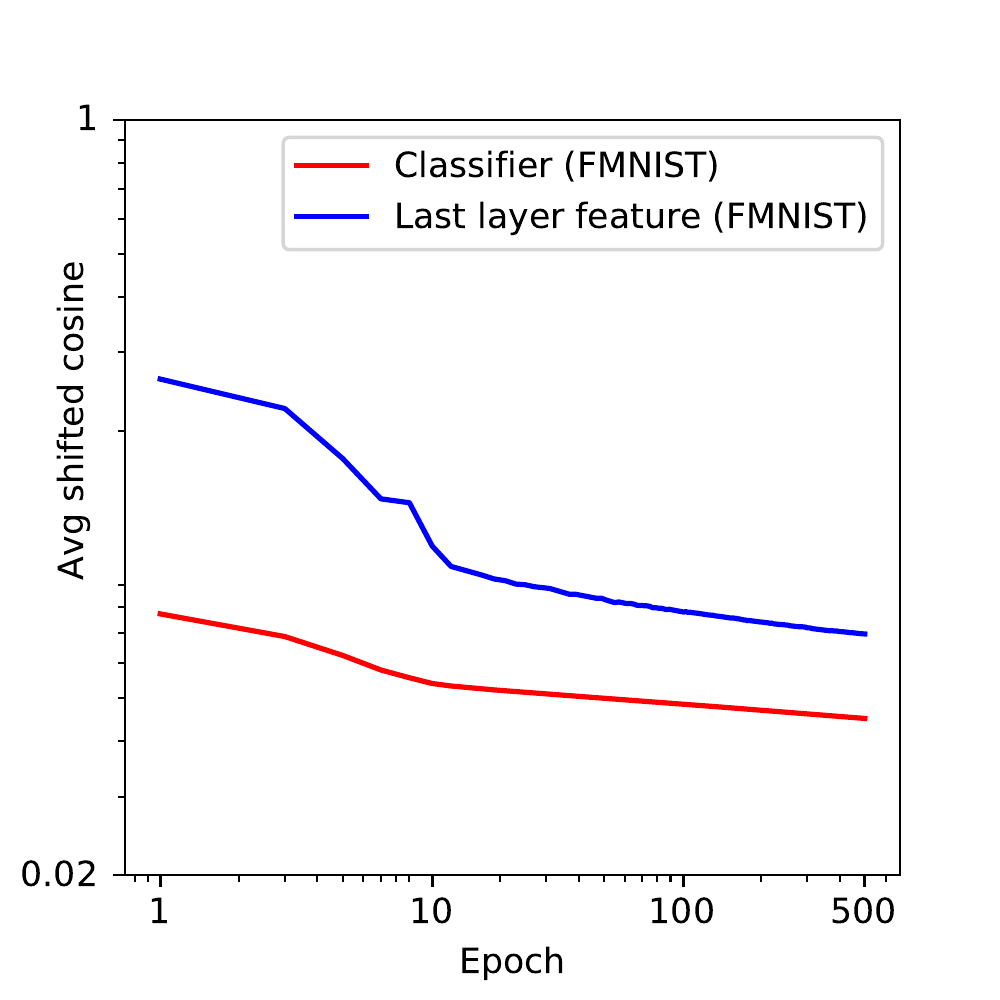}
		\subcaption{\scriptsize Cosines between pairs}
	\end{subfigure}
	\hspace{-0.2in}
	\quad
	\begin{subfigure}[t]{0.26\textwidth}
		\centering
		\includegraphics[width=1\textwidth]{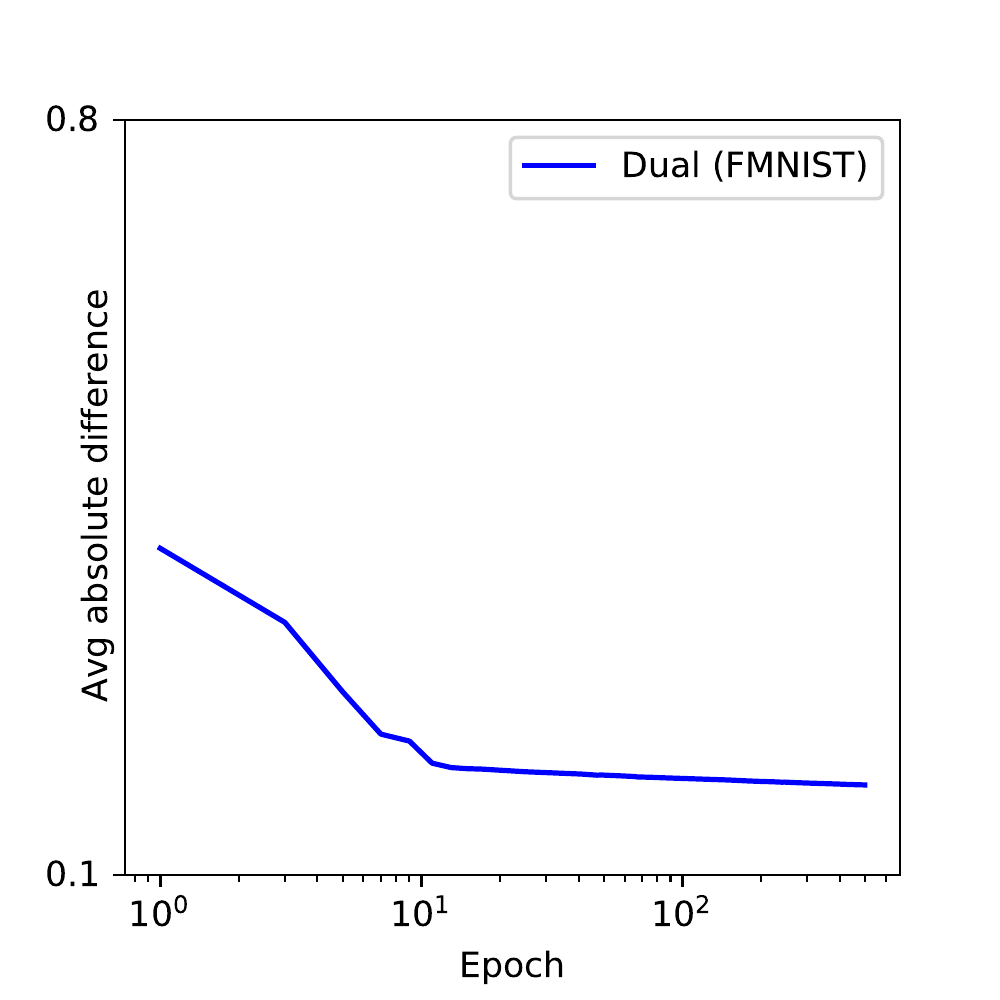}
		\subcaption{\scriptsize Self-duality }
	\end{subfigure}}

	\caption{\small Experiments on real datasets without weight decay. We trained a ResNet18 on the Fashion-MNIST dataset. The $x$-axis in the figures are set to have $\log(\log(t))$ scales and the $y$-axis in the figures are set to have $\log$ scales. } 
	\label{fig: ResFMNIST}
\end{figure}

\paragraph{Details of Realistic Training.} In the real data experiments, we trained the VGG-13 \citep{simonyan2014very} and ResNet18 \citep{he2016deep} on MNIST \citep{lecun1998gradient}, KMNIST \citep{clanuwat2018deep}, FashionMNIST \citep{xiao2017fashion} and CIFAR-10 datasets \citep{krizhevsky2009learning} without weight decay, and with a learning rate of 0.01, momentum of 0.3, and batch size of 128. The metrics are defined similarly to the ULPM case and the experiment results are reported in Figure \ref{fig: 2}, \ref{fig: VGGCIFAR}, \ref{fig: VGGMNIST}, \ref{fig: VGGKMNIST}, \ref{fig: ResKMNIST}, \ref{fig: VGGFMNIST}, and \ref{fig: ResFMNIST}. All experiments were run in Python (version 3.6.9) on Google Colab. It should be mentioned that we can observe that the variation of classifier norm stays at a low value and do not decrease in some settings (e.g., Figure \ref{apendixfig:varofnorm0} and Figure \ref{apendixfig:varofnorm}), this phenomenon is also found in Figure 2 of \citep{papyan2020prevalence}, which might attribute to the network architecture (e.g., batch normalization) and characteristics of real-world datasets. Overall, we can find that neural collapse occurs for various network architectures and datasets under an unconstrained setting, which provides sound support for our theory.

\section{Limitation and future directions}
{Currently, our analysis is still limited in the terminal phase of training and only studies the training behavior after data is perfectly separated. Such assumption is necessary for all implicit regularization analysis \citep{lyu2019gradient,ji2020gradient,nacson2019lexicographic} under a nonlinear setting, and how to fully characterize the early time training dynamics is still an open problem. Moreover, we have provided a loss landscape analysis in the paper showing that there is no spurious minimum in the tangent space. To guarantee the training dynamics do not get stuck in saddle points and converge to neural collapse solution, we need to introduce randomness in the training dynamics (e.g. use stochastic gradient descent \citep{ge2015escaping}), otherwise simple gradient flow may get stuck in saddle points as we have shown in Example \ref{example:stationary}. How to characterize the training dynamics of stochastic gradient descent in the nonlinear setting is also an open problem. It is an interesting direction to further build a complete characterization of convergence to neural collapse solutions by addressing the early time training dynamics and studying the stochastic gradient descent.}
\end{document}